\newcommand{\Ito}{It\^o}
\newcommand{\dnabla}{\widetilde{\nabla}}
\newcommand{\quantile}{\operatorname{quantile}}
\newcommand{\QuantRegret}{\operatorname{QuantRegret}}
\newtheorem{fact}[theorem]{Fact}
\author{%
 \name{Nicholas J.\ A.\ Harvey} \email{nickhar@cs.ubc.ca}\\
 \addr University of British Columbia\\
 Vancouver, BC, Canada\\
 \name{Christopher Liaw} \email{cvliaw@google.com}\\
 \addr Google\\
 Mountain View, CA, USA\\
 \name{Victor Sanches Portella} \email{victorsp@cs.ubc.ca}\\
 \addr University of British Columbia\\
 Vancouver, BC, Canada
}
\newenvironment{proofof}[1][\unskip]%
{%
\par\noindent{\bfseries\upshape \proofname~of {#1}\ }%
}%
{\jmlrQED}
\newcommand{\qedhere}{\tag*{\jmlrQED}}
\newenvironment{fakeproof}%
{%
\par\noindent{\bfseries\upshape \proofname\ }%
}%
{}
\newenvironment{proofsketch}[1][\unskip]%
{%
\par\noindent{\bfseries\upshape \proofname~sketch\ }%
}%
{\jmlrQED}
\numberwithin{theorem}{section}
\begin{document}

\ShortHeadings{Continuous Prediction with Experts' Advice}{Harvey, Liaw, Sanches Portella}
\title{Continuous Prediction with Experts' Advice}
\editor{}

\maketitle

\begin{abstract}%
    Prediction with experts' advice is one of the most fundamental problems in online learning and captures many of its technical challenges. 
    A recent line of work has looked at online learning through the lens of
    differential equations and continuous-time analysis.
    This viewpoint has yielded optimal results for several problems in online learning.
    

    In this paper, we employ continuous-time stochastic calculus in order to study the discrete-time experts' problem.
    We use these tools to design a continuous-time, parameter-free algorithm with improved guarantees on the \emph{quantile regret}.
    We then develop an analogous discrete-time algorithm with a very similar analysis and identical quantile regret bounds. Finally, we design an \emph{anytime} continuous-time algorithm with regret matching the optimal \emph{fixed-time} rate when the gains are independent Brownian motions; in many settings, this is the most difficult case.
    This gives some evidence that, even with adversarial gains, the optimal anytime and fixed-time regrets may coincide.
    
\end{abstract}



\begin{keywords}%
  experts, online learning, stochastic calculus, anytime, quantile regret%
\end{keywords}

\section{Introduction}

One of the cornerstone online learning (OL) tasks is \emph{prediction with experts' advice} or \emph{experts' problem}. In this problem, at each round $t = 1, 2, \ldots$ a player picks a probability distribution \(p_t\) over \(n\) experts.
Next, an adversary picks gains\footnote{In this paper we use gains in \([-1,1]\) instead of costs in \([0,1]\) due to parallels to random walks and Brownian motion.} \(g_t \in [-1,1]^n\) for each of the experts. At the end of round $t$ the player receives the expected gains \(\iprodt{p_t}{g_t}\) of the experts according to \(p_t\). The performance of the player is usually measured by the \emph{regret}: the difference between the best expert's gains in hindsight and the players' gains. Albeit classical, the experts' problem already captures many of the key theoretical challenges in OL. Determining the minimax optimal regret has been a foundational research vein in OL. We focus on the analysis of optimal rates in two settings: \emph{anytime regret} and \emph{quantile regret}.

An intriguing question is to determine the optimal regret achievable by an \emph{anytime} algorithm, that is, an algorithm that does not have access to the total number \(T\) of rounds.  When the algorithm knows \(T\) beforehand, which we refer to as the \emph{fixed-time setting}, the classical Multiplicative Weights Update (MWU) method \citep{Vovk90,LittlestoneW94} suffers no more than \(\sqrt{2 T \ln n}\) regret, which is optimal in the worst-case~\citep{CesaBianchiFHHSW97a}. For the time being, the best anytime regret guarantee known is \(2\sqrt{T \ln n}\) using MWU with a time-varying step-size~\citep[Theorem~2.4]{Bubeck11a}. It is unknown if for general \(n\) there is an algorithm that guarantees regret smaller than \(2\sqrt{T \ln n}\) or whether one can prove a lower bound strictly better than $\sqrt{2 T \ln n}$.

The classical notion of regret may not always be ideal. For example, one might not mind if the player performs badly when compared to the single best expert if it performs well when compared to some $\eps$-quantile of the top experts, denoted by \(\eps\)-\emph{quantile regret}. The first algorithms aimed provably good quantile regret were proposed by~\citet{ChaudhuriFH09}. Currently, the best-known \(\eps\)-quantile regret guarantees are\footnote{Ignoring low order terms relative to \(\sqrt{T \ln(1/\eps)}\) and multiplying by 2 due to gains being in \([-1,1]\) instead of \([0,1]\).} \( 2\sqrt{3 T \ln(1/\eps)}\) in the fixed-time setting \citep{OrabonaP16} and $4\sqrt{t\ln(1/\eps)}$ in the anytime setting \citep{ChernovV10}.
Recently, \citet{NegreaBCOR21} showed a lower-bound of \(\sqrt{2T \ln(1/\eps)}\) on the \(\eps\)-regret.
Thus, the gap between the upper and lower bounds is $\sqrt{6}$ in the fixed-time setting and $2\sqrt{2}$ in the anytime setting.



\subsection{Our Contributions}
We present a continuous-time variant of the experts problem and use it to study minimax optimal (quantile) regret rates. Our setting is a simpler variant of the framework proposed by \cite{Freund09a}, but we use it as a guide in algorithm design. Namely, working in continuous time allows us to utilize powerful analytical tools from stochastic calculus, which often allow for simpler analyzes.
In this paper, we use continuous-time techniques to obtain improved bounds on the minimax optimal \(\eps\)-quantile regret and obtain intriguing results for the anytime continuous regret. Furthermore, we hope this to be a showcase of the potential of the impact of this continuous-time framework in research on the experts' problem. Our specific results are as follows.
\begin{itemize}
\setlength\itemsep{0.2em}
    \item \textbf{Continuous MWU.} To demonstrate the parallels between the discrete and continuous time problems, we describe a continuous-time version of MWU. In \Theorem{ContinuousMWURegret}, we show that we can easily obtain bounds on the continuous regret that match the best-known regret bounds for MWU: \(\sqrt{2T \ln n}\) in the fixed time setting and \(2 \sqrt{T \ln n}\) in the anytime setting.
    \item \textbf{Continuous quantile regret.} Taking inspiration from \Ito's formula from stochastic calculus, we propose a new algorithm for quantile regret in continuous time. This algorithm has anytime continuous quantile regret bounds whose leading constants are better than any known results for the discrete-case (\Theorem{LooseM0Regret}).
    The bounds hold for all $\eps \in (0, 1)$ and $T > 0$ simultaneously.
    The algorithm can be interpreted as ``parameter-free'', since it does not involve a learning rate.

    \item \textbf{Discretized quantile regret bound.} Next, we discretize the algorithm from the previous section while preserving the anytime quantile regret guarantees (\Theorem{eps_quantile_regret}).
    This algorithm is also parameter-free, and improves upon the best-known\footnote{In independent work, \citet{ZhangCP22} developed an algorithm using coin-betting and a similar potential function to the one we use and which yields a similar quantile regret bound. 
    We further discuss how to obtain the bound from their results in~\Section{DiscreteQuantileRegret}.
    In contrast, our analysis is quite self-contained, and avoids using the coin-betting framework.
    } quantile regret bounds in the literature.
    Furthermore, our analysis closely matches the continuous-time analysis.
    
    \item \textbf{Improved anytime continuous regret with independent experts.}
    We design an anytime continuous-time algorithm with $\sqrt{2 T \ln n}$ regret (a.s. for all $T$),
    asymptotically in $n$, when the gains are independent Brownian motions (\Theorem{CtsOptRegret}). A simple argument shows that this is optimal (\Proposition{ContinuousLB}): for any algorithm and \emph{fixed} time $T > 0$, the expected regret at time $T$ exceeds $\sqrt{2 T \ln n}(1-o(1))$.
    Thus, against independent experts, the anytime setting is no harder than the fixed-time setting.
    This shows that independent experts, source of many of the known lower-bounds on regret, are not enough to prove tighter lower-bounds in the anytime setting in continuous time. This can also be seen as evidence that $\sqrt{2 T \ln n}$ anytime regret against all adversaries might be possible, matching the optimal fixed-time regret.
\end{itemize}

\subsection{Related Work}

\paragraph{Optimal regret in fixed and anytime settings.}

The most well-known algorithm for the experts setting is the Multiplicative Weights Update (MWU) algorithm \citep{LittlestoneW94, Vovk90}.
In the fixed-time setting (with gains in $[-1, 1]$), MWU achieves a regret bound of $\sqrt{2t \ln n}$ and this bound is tight \citep[Corollary~3.2.2]{CesaBianchiFHHSW97a}.
In the anytime setting, MWU with a dynamic step size is known to achieve a regret bound of $2\sqrt{T \ln n}$ for all times $t \geq 0$ (e.g.~\citealp[\S 14]{Cesa-BianchiL06a}, \citealp[Theorem 4]{Nesterov09a}, and \citealp[\S 2.5]{Bubeck11a}).
It is not known whether the constant $2$ is tight; the best known lower bound for the anytime setting is the $\sqrt{2t\ln n}$ which is inherited from the fixed-time setting.

In the fixed-time setting, the minimax regret is known for $n = 2, 3, 4$ experts~\citep{Cover67a, AbbasiYadkoriB17a, BayraktarEZ20a}.
In the anytime setting, we know an optimal algorithm only for $n = 2$ experts, where \cite{HarveyLPR20a} showed that the optimal regret is $\gamma \sqrt{t}$ where $\gamma \approx 1.3069$.

Another model for regret introduced by~\citet{GravinPS16a} is the geometric stopping time model in which the number of rounds is a geometric random variable.
There is a growing body of work in exploring connections between PDEs and the expert problems in the pursuit of optimal algorithms \citep{AndoniP13,BayraktarEZ20a,BayraktarEZ20b,Drenska17,DrenskaK20,KobzarKW20a}.
Recently, \cite{ZhangCP22} also used PDE techniques to obtain an optimal algorithm for unconstrained online linear optimization.

\paragraph{Quantile regret.}
\cite{ChaudhuriFH09} introduced the notion of $\eps$-regret where instead of comparing with the best expert,
one compares with the $\lceil \eps n \rceil$-th best expert (amongst $n$ total experts).
They devised the NormalHedge algorithm which they prove has an $\eps$-quantile regret of $O(\sqrt{t \ln(1/\eps)} + \ln^2 n)$ in \(t\) rounds.
Moreover, the bound holds for all $\eps, t$ simultaneously.
A somewhat different bound of $O(\sqrt{t(\ln \ln t + \ln 1/\eps)})$ was proved by \cite{LuoS15} and \cite{KoolenV15}.
All of these works make use of a potential function to control the regret.
Our work also makes use of a potential function inspired by the work of~\citet{HarveyLPR20a} which may be somewhat reminiscent of the potentials used by \citet{ChaudhuriFH09} and \citet{LuoS15}.

It is possible to improve upon the above bounds.
Indeed, \citet[Theorem 3]{ChernovV10}, \citet[Example 5.1]{FosterRS15}, \citet[Corollary 6]{OrabonaP16}, and \citet[Corollary 2]{NegreaBCOR21} show that it is possible to obtain an
$\eps$-quantile regret of $O(\sqrt{T \ln(1/\eps)})$.
This turns out to be tight up to constant factors \citep[Theorem 1]{NegreaBCOR21}.
We note that \cite{FosterRS15}, \cite{OrabonaP16}, and \cite{NegreaBCOR21} derive regret bounds which depend on the KL divergence between a known prior and the player's probability distribution at a specific point of time;
$\eps$-quantile regret bounds can be recovered as a special case of such bounds.
In this paper, we also recover the $O(\sqrt{T \ln(1/\eps)})$ bound on the $\eps$-quantile regret although, as we shall see, we obtain an improved
constant in front of the $\sqrt{T \ln(1/\eps)}$ term.

\paragraph{Connection to Stochastic Control.} The continuous-time setting we consider in this paper has a similar formulation to problems in optimal control~\citep{Bertsekas05a}. In this field, a classical approach is deriving the optimal control via the Hamilton-Jacobi-Bellman (HJB) equations~(for example see~\citealp[Section~3.2]{Bertsekas05a}). In fact, for the experts' problem in discrete time similar techniques were already used, such as in~\cite{CesaBianchiFHHSW97a}, but even then analyzing the regret of such an optimal policy is challenging or impossible. For the continuous-time setting, the HJB equations would involve a non-smooth value function and then one would need to resort to viscosity solutions. There is an emerging line of work using such techniques to obtain regret bounds, such as the recent solution for the minimax regret with 4 experts~\citep{BayraktarEZ20a}. Yet, these techniques often rely on knowing the time-horizon, and our focus in this paper is to analyze anytime regret bounds. It is still an interesting direction of future research to investigate the use of classical control techniques in the continuous experts' problem.

\subsection{Basic Notation}
We use $[n]$ to denote the set $\{1, \ldots, n\}$.
For a predicate $P$, we write $\boole{P}$ to be $1$ if $P$ is true and $0$ otherwise.
Moreover, if $\boole{P}$ is multiplying an invalid expression (such as one with a division by $0$) and $P$ is false,
we consider the whole expression to be $0$.
Set \([\alpha]_+ \coloneqq \max\{\alpha, 0\}\) for all \(\alpha \in \Reals\).
We use $\ones \in \bR^n$ to denote the all-ones vector and $e_i \in \Reals^n$ for $i \in [n]$ the indicator vector given by $e_i(j) \coloneqq \boole{i = j}$ for all $j \in [n]$.
We denote the $(n-1)$-dimensional probability simplex by $\Delta_n \coloneqq \setst{p \in [0,1]^n}{\iprodt{\ones}{p} = 1}$.
For partial derivatives, we write $\partial_i \coloneqq \partial_{x_i}$ and $\partial_{ij} \coloneqq \partial_{x_i,x_j}$.
Lastly, for $x \in \bR^n$ and $\eps \in (0, 1)$, we write
\begin{equation}
\EquationName{QuantileDef}
\quantile(\eps, x) = x_{\pi(\lceil \eps n \rceil)}~\text{where
$\pi\colon [n] \to [n]$ is any permutation with $x_{\pi(1)} \geq \ldots \geq x_{\pi(n)}$.}
\end{equation}
\comment{
\begin{itemize}
    \item \([n] \coloneqq \{1, \dotsc, n\}\)
    \item \(\boole{P}\) is 1 if \(P\) is true, \(0\) otherwise. Moreover, if \(\boole{P}\) is multiplying a invalid expression (such as one with a division by 0) but \(P\) is false, we consider the whole expression to be zero.
    \item For all \(i \in [n]\), we have \(e_i \in \Reals^n\) given by \(e_i(j) \coloneqq \boole{i = j}\) for all \(j \in [n]\).
    \item \(\ones \in \Reals^n\) is the all-ones vector.
    \item \(\Delta_n \coloneqq \setst{p \in [0,1]^n}{\iprodt{\ones}{p} = 1}\)
    \item \(\partial_i \coloneqq \partial_{x_i}\) and \(\partial_{x_i, x_j} \coloneqq \partial_{ij}\)
    \item \(\quantile(\eps, x) \coloneqq x_{\pi(\ceil{\eps n})}\) where \(\pi \colon [n] \to [n]\) is a permutation such that \(x_{\pi(1)} \geq x_{\pi(2)} \geq \dotsm \geq x_{\pi(n)}\)
    
    \paragraph{Notation for partial derivatives.} 
\end{itemize}
}

\section{The Continuous Prediction Problem}
\SectionName{ContinuousExperts}

As in the discrete experts' problem, we have a number \(n \in \Naturals\) of experts to choose from. In the continuous time setting, we model the cumulative gain of each expert as a mixture of \(n\) independent Brownian motions, as done by~\citet{Freund09a}.

More formally, let \(B_1, \dotsc, B_n\) be \(n\) independent standard Brownian motion processes (or, equivalently, let \(B\) be a \(n\)-dimensional standard Brownian motion). The \textbf{cumulative gain process} \(G_i(t)\) of expert \(i \in [n]\) is given by the following stochastic differential equation (SDE)
\begin{equation*}
    \diff G_{i}(t) \coloneqq \sum_{j = 1}^n w^{(i)}_{j}(t) \diff B_j(t)
    \eqqcolon \iprod{w^{(i)}(t)}{ \diff B(t)}, \qquad \forall t \geq 0, \forall i \in [n],
\end{equation*}
where \((w^{(i)}(t))_{t \geq 0}\) is any continuous stochastic process\footnote{Any stochastic process we mention in this paper is adapted to the filtration generated by \((B(t))_{t \geq 0}\)} in \(\Reals^n\), not necessarily non-negative, such that \(\norm{w^{(i)}(t)}_2 = 1\) at all times~\(t \geq 0\). For example, if \(w^{(i)}(t) = e_i\) for all \(i \in [n]\) and \(t \geq 0\), then \(G^{(i)}(t)\) is an independent Brownian motion for each \(i \in [n]\). An analogous situation in discrete time would be each expert receiving $\{\pm 1\}$ gain uniformly at random at each step, so each cumulative gain would be a standard random walk.\footnote{ Intuitively, that is one of the reasons results in this setting mirrors the discrete-time case with costs in \([-1,1]\).} In our analysis the ``instantaneous covariance matrix'' \(\Sigma(t)\) between the gain processes will be prominent. Formally, we define \(\Sigma(t) \in \Reals^{n \times n}\) by
\begin{equation*}
    \Sigma_{ij}(t) \coloneqq \iprod{w^{(i)}(t)}{w^{(j)}(t)}, \qquad \forall t \geq 0, \forall i,j \in [n].
\end{equation*}
From its definition, we have that \(\Sigma(t)\) is a positive semi-definite matrix with ones along its diagonal.

Next, we define what a player strategy is in continuous-time and its corresponding regret.
A \textbf{player (strategy)} is a left-continuous\footnote{One might loosen this to only assuming \((p(t))_{t \geq 0}\) is predictable. For a discussion, see \Appendix{predictability}.} process \((p(t))_{t \geq 0}\) on \(\bR^n\) such that \(p(t) \in \Delta_n\) for all \(t \geq 0\), where \(\Delta_n\) is the \((n-1)\)-dimensional simplex. The \textbf{player gain} process \((A(t))_{t \geq 0}\) is given by
\begin{equation*}
 \diff A(t) \coloneqq \sum_{i = 1}^n p_i(t) \diff G_i(t) = \iprod{p(t)}{\diff G(t)}.
\end{equation*}
Moreover, the (continuous) \textbf{regret vector process} is given by
\begin{equation*}
    R_i(t) \coloneqq G_i(t) - A(t), \qquad \forall i \in [n], \forall t \geq 0.
\end{equation*}
That is, \(R_i(t)\) is the regret---in the online learning sense---of the player with respect to expert \(i\). Finally, the \textbf{continuous regret} (of the player strategy $(p(t))_{t \geq 0}$) is
\begin{equation*}
    \ContRegret(t) \coloneqq \max_{i \in [n]} R_{i}(t)
    = \max_{i \in [n]} G_i(t) - A(t)
\end{equation*}
Also, define the \textbf{continuous \(\eps\)-quantile regret} to be \(\QuantRegret(\eps,t) \coloneqq \quantile(\eps, R(t))\). In this paper we will investigate player strategies that guarantee bounds on the (quantile) regret that hold almost surely, that is, with probability 1.

\section{A Continuous Multiplicative Weights Update Method}
\SectionName{ContMWU}

In this section, we describe a continuous-time version of the classical Multiplicative Weights Update (MWU) method. 
This serves as a way to introduce some of our technical tools while avoiding the complexities we later introduce in the choice of potential function. Furthermore, we show bounds on its continuous regret that exactly match the bounds that the discrete algorithm enjoys, giving evidence of the parallels between the discrete and continuous time settings.

Analogous to the discrete version of MWU, we want the probability mass of an expert \(i\) at time \(t\) to be proportional to \(\exp(\eta_t G_i(t))\), where \(\eta_t\) is some positive learning rate that is non-increasing in \(t\).
A familiar approach (see, e.g., \citealp[page 14]{Cesa-BianchiL06a} and \citealp[\S 2.5]{Bubeck11a}) is to use the \emph{LogSumExp} function given by
\begin{equation}
    \label{eq:logsumexp_def}
    \Phi(t,x) \coloneqq \frac{\boole{\eta_t > 0}}{\eta_t} \log\Big( \sum_{i = 1}^n e^{\eta_t x_i} \Big) \quad \text{with}~\eta_t \geq 0,
    \qquad \forall t \geq 0, \forall x \in \Reals^n.
\end{equation}
In our case, the main property that we shall use from \(\Phi\) is that \(\nabla_x \Phi(t, \cdot)\) is the \emph{softmax} function. That is, \(\nabla_x \Phi(t, x) \in \Delta_n\) and $(\nabla_x \Phi(t,x))_i \propto \exp(\eta_t x_i)$,
which is exactly the probability mass MWU places on expert \(i\) with cumulative gain \(x_i\). Thus, we define the player strategy \((p(t))_{t \geq 0}\) by 
\begin{equation}
    \label{eq:mwu_p}
    p(t) \coloneqq \nabla_x \Phi(t, G(t)), \qquad \forall t \geq 0.
\end{equation}
To analyze the regret of \(p\), we need a way to handle \(A(t) = \int_0^t \iprod{p(s)}{\diff G(s)}\). This is a stochastic integral, so we may use \Ito's formula (Theorem~\ref{thm:itos_formula}), which one can think of as the analogue of the fundamental theorem of calculus for stochastic integrals.

\begin{theorem}[{\Ito's Formula,~\citealp[Theorem~IV.3.3]{RevuzY99a}}]
    \label{thm:itos_formula}
    Let \(F \colon \Reals \times \Reals^n \to \Reals\) be continuously differentiable in its first argument and twice continuously differentiable in its second argument and let \((X_t)_{t \geq 0}\) be a continuous semimartingale in \(\Reals^n\). Then, for any \(T \geq 0\)
    \begin{align*}
        F(T, X(T)) - F(0, X(0)) 
        = \int_{0}^T &\iprod{\nabla_x F(t, X(t))}{\diff X(t)}
        + \int_{0}^T \partial_t F(t, X(t)) \diff t
        \\
        &+ \frac{1}{2} \int_{0}^T \sum_{i,j \in [n]} \partial_{x_i, x_j} F(t, X(t)) 
        \diff [X_i,X_j]_t
    \end{align*}
\end{theorem}

In the third derivative above we use the bracket notation: for two continuous (local) martingales \(M\) and \(N\), the process \([M,N]\), denoted as the \textbf{bracket of \(M\) and \(N\)}, is the unique adapted continuous process such that \(MN - [M,N]\) is a local martingale~\citep[Theorem~IV.1.9]{RevuzY99a}. 


In general, computing the bracket of two stochastic processes may be non-trivial, which makes the application of \Ito's formula more challenging. Luckily, all the process we deal with are defined as stochastic integrals with respect to other continuous martingales which, ultimately, depend only on integrals with respect to Brownian motions. For example, \(A(t)\) is defined as (a sum of) stochastic integrals of a left-continuous and bounded function \(p(t)\) with respect to the process \(G(t)\). The latter is also a continuous martingale since it is a stochastic integral of a continuous and bounded function \(w^{(i)}(t)\) with respect to the Brownian motion \(B(t)\), which is also a martingale. This is specially useful to compute the bracket of two of these processes. More specifically, using that 
\begin{equation}
    \label{eq:BM_diff}
    [B_i,B_j]_t = 
    \begin{cases}
        0 &\text{if}~i \neq j,\\
        t &\text{if}~i = j,
    \end{cases} 
\end{equation}
we can compute the bracket of martingales by use of formal rules to manipulate the differential symbols (see~\citealp[Remark~14.2.7]{CohenE15a} or~\citealp[Theorem~4.1.2]{Oksendal03a}). More specifically, for two continuous martingales \(M\) and \(N\), we have
\begin{equation*}
    \diff [M,N]_t = \diff M(t) \cdot \diff N(t),
\end{equation*}
and for the right-hand side above we usually can expand according to our definitions. In our case, we can always expand these expressions until they are written only in terms of the differentials of Brownian motions, and such expressions can be simplified using~\eqref{eq:BM_diff}. Let us know combine these rules together with \Ito's formula to analyze to regret of continuous MWU.

\begin{lemma}
    \LemmaName{CostPotentialRegret}
    Let \(\Phi\) be defined as in~\eqref{eq:logsumexp_def} and \(p\) be as in~\eqref{eq:mwu_p}. Then, almost surely,
    \begin{align*}
        \ContRegret(T) \leq  \Phi(0,0) +  \int_0^T \Big(  \partial_t \Phi(t, G(t)) + \frac{1}{2} \sum_{i,j \in [n]} \partial_{ij} \Phi(t, G(t))\Sigma_{ij}(t) \Big) \diff t.
    \end{align*}
\end{lemma}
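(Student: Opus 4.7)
The plan is to derive a closed-form expression for $\Phi(T, G(T)) - A(T)$ via \Ito's formula and then compare the resulting quantity to $\ContRegret(T)$ using the standard softmax-dominates-max inequality.

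First I would observe that, by construction of the LogSumExp function, $\Phi(t, x) \geq \max_{i \in [n]} x_i$ for every $t \geq 0$ and $x \in \Reals^n$ (with the convention $\Phi(t, x) = \max_i x_i$ when $\eta_t = 0$). Evaluating at $x = G(T)$ gives $\max_i G_i(T) \leq \Phi(T, G(T))$, so
\begin{equation*}
    \ContRegret(T) = \max_{i \in [n]} G_i(T) - A(T) \leq \Phi(T, G(T)) - A(T).
\end{equation*}
It thus suffices to show that the right-hand side equals the claimed integral expression.

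Next I would apply \Ito's formula (\Theorem{itos_formula}) to the process $t \mapsto \Phi(t, G(t))$. Since $G_i(t) = \int_0^t \iprod{w^{(i)}(s)}{\diff B(s)}$, the quadratic covariation satisfies $\diff \langle G_i, G_j \rangle_t = \iprod{w^{(i)}(t)}{w^{(j)}(t)} \diff t = \Sigma_{ij}(t) \diff t$. \Ito's formula then yields
\begin{equation*}
    \Phi(T, G(T)) = \Phi(0, 0) + \int_0^T \partial_t \Phi(t, G(t)) \diff t + \int_0^T \iprod{\nabla_x \Phi(t, G(t))}{\diff G(t)} + \frac{1}{2} \int_0^T \sum_{i,j \in [n]} \partial_{ij} \Phi(t, G(t)) \Sigma_{ij}(t) \diff t.
\end{equation*}
By the choice $p(t) = \nabla_x \Phi(t, G(t))$ in~\eqref{eq:mwu_p}, the stochastic integral is precisely $\int_0^T \iprod{p(t)}{\diff G(t)} = A(T)$. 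Rearranging to isolate $\Phi(T, G(T)) - A(T)$ and combining with the bound from the first step gives exactly the inequality claimed in the lemma.

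The main technical point is the justification for applying \Ito's formula: one needs $\Phi$ to be $C^{1,2}$ in $(t, x)$, which holds wherever $\eta_t > 0$ and is smooth in $t$; the indicator $\boole{\eta_t > 0}$ in the definition of $\Phi$ handles the boundary case $\eta_t = 0$ and may require treating the two regimes separately (or invoking continuity of $\Phi$ as $\eta_t \downarrow 0$, since $\Phi$ extends continuously to $\max_i x_i$). A minor secondary point is that identifying $p(t) = \nabla_x \Phi(t, G(t))$ with a bona fide player strategy relies on $\nabla_x \Phi(t, \cdot)$ mapping into $\Delta_n$ (softmax) and on left-continuity, both of which are automatic. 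Everything else is a direct computation.
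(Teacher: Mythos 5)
Your proof is correct and follows essentially the same route as the paper's: apply It\^o's formula to $\Phi(t,G(t))$, recognize the stochastic integral term as $A(T)$ via the choice $p(t)=\nabla_x\Phi(t,G(t))$, compute the covariation $\diff[G_i,G_j]_t = \Sigma_{ij}(t)\,\diff t$, and finish with the LogSumExp-dominates-max inequality. The only cosmetic difference is that you isolate $\Phi(T,G(T))-A(T)$ whereas the paper isolates $A(T)$ first; the substance and the two technical caveats you flag (smoothness of $\Phi$ for It\^o, and $\nabla_x\Phi$ landing in $\Delta_n$) are the same.
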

\begin{proof}
    For all \(t \geq 0\) and \(i,j \in [n]\),
    \begin{align*}
        \diff[G_i,G_j]_t &=
        \diff G_i(t) \cdot \diff G_j(t)
        = \iprod{w^{(i)}(t)}{\diff B(t)} 
        \cdot \iprod{w^{(j)}(t)}{\diff B(t)} \\
        &= \sum_{k = 1}^n w_k^{(i)}(t) \cdot w_k^{(j)}(t) \diff t
        = \Sigma_{ij}(t) \diff t,
    \end{align*}
    where the second to last equation follows from~\eqref{eq:BM_diff} and in the last equation we use the fact that~\(\Sigma_{ij}(t) = \iprod{w^{i}(t)}{w^{(j)}(t)}\). 
    
    Let \(T > 0\). \Ito's formula (\Theorem{itos_formula}) allows us to express \(A(T)\) as
    \begin{align*}
        A(T) &= \int_{0}^T \iprod{\nabla_x\Phi(t, G(t))}{\diff G(t)} 
        \\
        &= \Phi(T, G(T)) - \Phi(0,0) 
        - \int_0^T \Big(  \partial_t \Phi(t, G(t)) + \frac{1}{2} \sum_{i,j \in [n]} \partial_{ij} \Phi(t, G(t)) \Sigma_{ij}(t) \Big) \diff t,
    \end{align*}
    Thus,
    \begin{align*}
        \ContRegret(T) = &\max_{i \in [n]} G_i(T) - \Phi(T, G(T)) + \Phi(0,0)
        \\
        &+  \int_0^T \Big(  \partial_t \Phi(t, G(t)) + \frac{1}{2} \sum_{i,j \in [n]} \partial_{ij} \Phi(t, G(t))\Sigma_{ij}(t) \Big) \diff t.
    \end{align*}  
    Finally, recall that the LogSumExp function smoothly approximates the maximum function, that is, 
    \begin{equation*}
        \max_{i \in [n]} x_i \leq \Phi(T, x) \leq \max_{i \in [n]} x_i + \frac{\log n}{\eta_T},
        \qquad \forall x \in \Reals^n.
    \end{equation*}
    This implies that $ \max_{i \in [n]} G_i(T) - \Phi(T, G(T)) \leq 0$.
\end{proof}

At this point, to bound the continuous regret of \((p(t))_{t \geq 0}\) it suffices to bound the partial derivatives of \(\Phi\).
\Lemma{MWUPartialsBounds} bounds these partial derivatives in terms of a tunable learning rate $\eta_t$;
minimizing the regret bound boils down to optimizing $\eta_t$.
We defer the proof of the following lemma to \Appendix{ContMWU} since it essentially follows from simple properties of the LogSumExp function.
\begin{lemma}
    \LemmaName{MWUPartialsBounds}
    Let \(\Phi\) be as in~\eqref{eq:logsumexp_def}. Let \(\eta_t\) be either constant in \(t\) or of the form \(\frac{c}{\sqrt{t}}\), with \(c > 0\). Then,
    \begin{equation*}
        \frac{1}{2} \sum_{i,j \in [n]} \partial_{ij} \Phi(t, x) \Sigma_{ij}
        \leq \frac{\eta_t}{2} 
        \quad \text{and} \quad \partial_t \Phi(t, x) \leq \frac{\log n}{2 t \eta_t}, \qquad \forall t \geq 0, \forall x \in \Reals^n.
    \end{equation*}
\end{lemma}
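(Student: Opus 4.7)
The plan is to reduce both inequalities to explicit closed-form expressions for the partial derivatives of $\Phi$ and then invoke positive semi-definiteness and a standard entropy bound. I would assume throughout that $\eta_t > 0$ so the indicator equals $1$; if $\eta_t = 0$, then $\Phi \equiv 0$ and both bounds hold trivially.

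For the spatial Hessian bound, I would first write $p_i := e^{\eta_t x_i}/\sum_k e^{\eta_t x_k}$ so that $\partial_i \Phi(t,x) = p_i$ (the softmax identity the text already invokes). A short quotient-rule computation then yields
\[
\partial_{ij}\Phi(t,x) \;=\; \eta_t\bigl(\delta_{ij}\,p_i - p_i p_j\bigr).
\]
Contracting against $\Sigma(t)$ and using $\Sigma_{ii}(t) = 1$ together with $\sum_i p_i = 1$ collapses the expression to
\[
\tfrac{1}{2}\sum_{i,j}\partial_{ij}\Phi(t,x)\,\Sigma_{ij}(t) \;=\; \tfrac{\eta_t}{2}\bigl(1 - p^{\top}\Sigma(t)\,p\bigr),
\]
and the first inequality follows immediately from the positive semi-definiteness of $\Sigma(t)$.

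For the time derivative, I would apply the chain rule to $\Phi(t,x) = \tfrac{1}{\eta_t}\log S$ with $S := \sum_i e^{\eta_t x_i}$, obtaining a combination of two $\eta'_t$-terms. Substituting $\eta_t x_i = \log p_i + \log S$ inside $\sum_i p_i x_i$ cancels the $\log S$ contributions and leaves the compact entropy form
\[
\partial_t \Phi(t,x) \;=\; -\frac{\eta'_t}{\eta_t^2}\, H(p),
\]
where $H(p) := -\sum_i p_i \log p_i \in [0,\log n]$. Since both schedules satisfy $\eta'_t \leq 0$, this gives $\partial_t \Phi \leq -(\eta'_t/\eta_t^2)\log n$. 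Specializing: if $\eta_t$ is constant then $\eta'_t = 0$ and the right-hand side is $0$; if $\eta_t = c/\sqrt{t}$, then $\eta'_t = -\eta_t/(2t)$, so $-\eta'_t/\eta_t^2 = 1/(2t\eta_t)$, matching the claimed bound.

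I expect the only non-mechanical step is recognizing that $\partial_t \Phi$ simplifies cleanly to the entropy expression; once that identity is in hand, both inequalities follow from elementary facts ($\Sigma(t) \succeq 0$, $H(p) \leq \log n$, and the one-line form of $\eta'_t$ for $\eta_t = c/\sqrt{t}$).
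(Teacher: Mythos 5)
Your proposal is correct and follows essentially the same route as the paper: closed-form second derivatives contracted against $\Sigma$ with the PSD observation for the first bound, and a derivative computation combined with the entropy cap $H(p) \le \log n$ for the second. The only notable difference is stylistic but genuinely nicer for the time derivative: the paper reduces to $\partial_t \Phi = \frac{1}{2t}\bigl(\Phi(t,x) - \iprod{x}{p}\bigr)$ under the specific schedule $\eta_t = c/\sqrt{t}$ and then proves the inequality $\frac{1}{\eta_t}\log\sum_i e^{\eta_t x_i} \le \frac{\log n}{\eta_t} + \iprod{x}{p}$ from scratch via convexity of $\alpha \mapsto \alpha\log\alpha$, which is exactly $H(p) \le \log n$ in disguise; you instead identify the schedule-agnostic identity $\partial_t \Phi = -(\eta'_t/\eta_t^2) H(p)$ up front and then specialize, which makes the structure (monotone step size plus entropy bound) transparent and handles the constant-$\eta_t$ case uniformly rather than as a separate trivial branch.
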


\Theorem{ContinuousMWURegret} summarizes the continuous regret bounds for MWU with properly chosen learning rates, both for the fixed-time and anytime settings. Crucially, these regret bounds match the best known regret bounds for the discrete-time MWU method (see~\citealp[Theorems 2.1 and 2.4]{Bubeck11a}). 

\begin{theorem}
    \TheoremName{ContinuousMWURegret}
    Let \(\Phi\) be as in~\eqref{eq:logsumexp_def} and \(T\) be a positive number. If \(\eta_t \coloneqq \sqrt{\ln n/2T}\) for all \(t \geq 0\), then \(\ContRegret(T) \leq \sqrt{2 T \ln n}\) almost surely. If \(\eta_t \coloneqq \boole{t > 0} \sqrt{\ln n/t}\) for all \(t \geq 0\), then, almost surely, \(\ContRegret(t) \leq 2 \sqrt{t \ln n}\) for all \(t \geq 0\).
\end{theorem}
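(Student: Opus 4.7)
\begin{proofsketch}
The plan is to instantiate Lemma~\ref{lem:CostPotentialRegret} with the LogSumExp potential from~\eqref{eq:logsumexp_def} and then reduce the continuous regret bound to a one-variable optimization by applying the two partial-derivative estimates in Lemma~\ref{lem:MWUPartialsBounds}.  In both cases, the task is to bound the right-hand side
\[
  \Phi(0,0) + \int_0^T\!\Big(\partial_t\Phi(t,G(t)) + \tfrac{1}{2}\textstyle\sum_{i,j}\partial_{ij}\Phi(t,G(t))\Sigma_{ij}(t)\Big)\diff t
\]
by a deterministic, $G$-free expression, and then to plug in the specific choice of $\eta_t$.

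For the fixed-time statement with constant $\eta_t \equiv \eta$, the potential $\Phi(t,x)=(1/\eta)\log\sum_i e^{\eta x_i}$ has no explicit $t$-dependence, so $\partial_t \Phi \equiv 0$ (stronger than the bound in Lemma~\ref{lem:MWUPartialsBounds}).  The second-order term is at most $\eta/2$ by Lemma~\ref{lem:MWUPartialsBounds}, so the integral contributes at most $T\eta/2$, and $\Phi(0,0)=(\ln n)/\eta$.  Therefore $\ContRegret(T) \le (\ln n)/\eta + T\eta/2$, and inserting the prescribed $\eta$ collapses the two terms to the claimed $\sqrt{2T\ln n}$ after elementary algebra.

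For the anytime case with $\eta_t = \boole{t>0}\sqrt{\ln n/t}$, the indicator forces $\Phi(0,0)=0$ under the convention on invalid expressions from the notation section.  Using Lemma~\ref{lem:MWUPartialsBounds} with this $\eta_t$, a direct computation gives $\frac{\log n}{2t\eta_t} = \tfrac{1}{2}\sqrt{\ln n/t} = \eta_t/2$, so both the time-derivative term and the second-order term are bounded by $\eta_t/2$.  The integrand is thus at most $\eta_t = \sqrt{\ln n/t}$, and $\int_0^t\sqrt{\ln n/s}\,\diff s = 2\sqrt{t\ln n}$, giving the bound $\ContRegret(t)\le 2\sqrt{t\ln n}$.

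The only subtlety I anticipate is justifying the ``for all $t\ge 0$'' quantifier in the anytime claim, since Lemma~\ref{lem:CostPotentialRegret} is phrased for a fixed $T$.  The fix is that the almost-sure event underlying the lemma comes from a pathwise application of \Ito's formula (Theorem~\ref{thm:itos_formula}) to $\Phi(t,G(t))$, and that pathwise identity holds simultaneously for all $t\ge 0$ on a single full-measure event; once the drift is dominated by the deterministic function $\eta_t$, the uniform-in-$t$ bound follows.  Beyond this remark, the proof is pure bookkeeping; the real content has been packaged into the two lemmas and the choice of $\eta_t$.
\end{proofsketch}
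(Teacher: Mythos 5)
Your proposal matches the paper's proof essentially line for line: both cases plug Lemma~\ref{lem:MWUPartialsBounds} into Lemma~\ref{lem:CostPotentialRegret}, use $\partial_t\Phi\equiv 0$ and $\Phi(0,0)=(\ln n)/\eta$ in the fixed-time case, and integrate the bound $\sqrt{\ln n/t}$ in the anytime case. One small caution worth flagging: when you ``insert the prescribed $\eta$'' in the fixed-time case, note that the value written in the theorem statement, $\eta=\sqrt{\ln n/(2T)}$, does \emph{not} make $(\ln n)/\eta + T\eta/2$ collapse to $\sqrt{2T\ln n}$ --- the minimizer of that expression (and the value the paper's proof actually substitutes) is $\eta=\sqrt{2\ln n/T}$, so the theorem statement appears to contain a typo that your bookkeeping should silently correct.
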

\begin{proof}
    Let us first consider the fixed-time case, that is, \(\eta_t \coloneqq \sqrt{2 \ln n/T}\) for all \(t \geq 0\). In this case we have \(\partial_t \Phi(t,x) = 0\) since \(\Phi(\cdot,x)\) is constant for any \(x \in \Reals^n\). Moreover, note that \(\Phi(0,0) = \ln n/\eta_0 = \ln n/\eta_T\). Combining this with Lemmas~\ref{lem:CostPotentialRegret} and~\ref{lem:MWUPartialsBounds}, we have
    \begin{equation*}
        \ContRegret(T) \leq \Phi(0,0)
        +  \int_0^T \frac{1}{2} \sum_{i,j \in [n]} \partial_{ij} \Phi(t, G(t))\Sigma_{ij} \diff t
        \leq \frac{\ln n}{\eta_T}
        + \frac{\eta_T T}{2} = \sqrt{2 T \ln n}.
    \end{equation*}

    Let us now consider the anytime case, that is, when \(\eta_t \coloneqq \boole{t > 0} \sqrt{\ln n/t}\) for all \(t \geq 0\). In this case we have \(\Phi(0,0) = 0\), but \(\partial_t \Phi(t,x)\) is not necessarily 0 anymore. By \Lemma{MWUPartialsBounds}, we have
\begin{equation*}
    \partial_t \Phi(t, G(t)) + \frac{1}{2} \sum_{i,j \in [n]} \partial_{ij} \Phi(t, G(t)) \Sigma_{ij} 
    \leq \frac{\log n}{2 t \eta_t} + \frac{\eta_t}{2} = \sqrt{\frac{\log n}{t}}, \qquad \forall t \geq 0.
\end{equation*}
Thus, for all \(t \geq 0\), we have
$\ContRegret(t) \leq \int_0^t \sqrt{\frac{\log n}{s}} \diff s
\leq 2 \sqrt{t \log n}$.
\end{proof}

It is intriguing that these bounds on the continuous regret differ by a factor of $\sqrt{2}$, exactly as in the discrete experts' problem. A natural question is whether there is an anytime algorithm that enjoys continuous regret bound smaller than \(2 \sqrt{t \log n}\). This is the topic we investigate in \Section{IndependentExperts}.

\section{Quantile Regret Bounds with the Confluent Hypergeometric Potential}
\SectionName{QuantileRegretM0}

In this section, we design a different algorithm for the continuous prediction problem. We choose a potential function inspired by \Ito's formula and obtain quantile regret bounds that are better than the ones known with a relatively simple proof. Furthermore, we show that this strategy has a simple discretization and obtain an algorithm with the same bounds for the discrete experts' problem. In \Section{IndependentExperts} we shall see how a similar algorithm suggests an intriguing result for the anytime setting.

This time around we analyze a player strategy parameterized by a function of \(R(t)\) instead of \(G(t)\). That is, let \(\Phi \colon \Reals_{\geq 0} \times \Reals^n \to \Reals\) be a continuously differentiable function, which we refer to as a \emph{potential function}. We consider the player strategy \((p(t))_{t \geq 0}\) given by\footnote{Throughout this paper all entries of \(\nabla_x \Phi(t, R(t))\) have the same sign, implying that \(p(t) \in \Delta_n\). This \(p(t)\) can be discontinuous when $\nabla_x \Phi(t, R(t))=0$, so we need to ensure it is predictable. This issue is discussed in \Appendix{predictability}. }
\begin{equation}
    \label{eq:potential_player}
    p(t) \coloneqq
    \frac{1}{\iprodt{\ones}{\nabla_{x} \Phi(t, R(t))}} \nabla_{x} \Phi(t, R(t)),
     \qquad \forall t \geq 0,
\end{equation}
setting \(p(t) \coloneqq \frac{1}{n}\ones\) when $\iprodt{\ones}{\nabla_{x} \Phi(t, R(t))} = 0$.
This class of player strategies mimics the potential-based strategies from the discrete experts' problems~\citep[Chapter~2]{Cesa-BianchiL06a}. As in the discrete case, if \(\Phi\) is the LogSumExp potential from~\eqref{eq:logsumexp_def}, we obtain the same player strategy of the last section. In the next lemma we use \Ito's formula to get a useful expression for \(\Phi(T, R(T))\) that, in turn, will guide us in the choice of~\(\Phi\).

\begin{lemma}
    \LemmaName{RegretPlayerPotFormula}
    Let \(\Phi \colon \Reals_{\geq 0} \times \Reals^n \to \Reals\) be one time continuously differentiable on its first argument and two-times continuously differentiable on its second argument. Let the player strategy \((p(t))_{t \geq 0}\) be as in~\eqref{eq:potential_player}. Then, almost surely for all \(T \geq 0\) we have
    \begin{equation}
        \label{eq:generalPlayerPotFormula}
        \Phi(T, R(T)) - \Phi(0,0)
        = \int_0^T \paren[\big]{\partial_t\Phi(t, R(t)) + \frac{1}{2}\sum_{i,j \in [n]} \partial_{ij} \Phi(t, R(t)) (e_i - p(t))^\transp \Sigma(t) (e_j - p(t))} \diff t.
    \end{equation}
    In particular, if for all \(t \geq 0\) and \(x \in \Reals^n\) we have \(\partial_{ij} \Phi(t,x) = 0\) for all distinct \(i,j \in [n]\) and \(\partial_{ii}\Phi(t,x) \leq 0\) for each \(i \in [m]\), then almost surely for all \(T \geq 0\) we have
    \begin{equation}
        \label{eq:PotLowerBound}
        \Phi(T, R(T)) - \Phi(0,0)
        \geq \int_0^T \paren[\Big]{\partial_t\Phi(t, R(t)) + 2\sum_{i = 1}^n \partial_{ii} \Phi(t, R(t))} \diff t.
    \end{equation}
\end{lemma}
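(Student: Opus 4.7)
The plan is to apply \Ito's formula to $\Phi(t, R(t))$ after first computing the dynamics of the regret vector. Since $R_i(t) = G_i(t) - A(t)$ and $\diff A(t) = \iprod{p(t)}{\diff G(t)}$, we get $\diff R_i(t) = \iprod{w^{(i)}(t) - \bar{w}(t)}{\diff B(t)}$ where $\bar{w}(t) \coloneqq \sum_j p_j(t) w^{(j)}(t)$. The quadratic covariation is then $\diff\langle R_i, R_j\rangle(t) = \iprod{w^{(i)} - \bar{w}}{w^{(j)} - \bar{w}}\,\diff t$, which upon expansion becomes exactly $(e_i - p(t))^\transp \Sigma(t)(e_j - p(t))\,\diff t$, matching the quadratic form in~\eqref{eq:generalPlayerPotFormula}.

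Next I would apply \Ito's formula to $\Phi(t, R(t))$, which produces the drift term $\partial_t \Phi \,\diff t$, the stochastic integral term $\sum_i \partial_i \Phi(t, R(t))\,\diff R_i(t)$, and the Hessian term $\tfrac{1}{2}\sum_{ij} \partial_{ij}\Phi \cdot \diff \langle R_i, R_j\rangle$. The key step is showing that the stochastic integral term vanishes: substituting the expression for $\diff R_i$, the integrand becomes $\iprod{\sum_i \partial_i \Phi \cdot (w^{(i)} - \bar{w})}{\diff B(t)}$. Using the definition $p_j = \partial_j \Phi / \iprodt{\ones}{\nabla_x \Phi}$, a direct computation gives $(\iprodt{\ones}{\nabla_x \Phi})\,\bar{w} = \sum_j \partial_j \Phi \cdot w^{(j)}$, so the integrand is identically zero. (In the degenerate case $\iprodt{\ones}{\nabla_x \Phi(t, R(t))} = 0$, the applications considered have $\nabla_x \Phi$ with entries of a common sign, forcing $\nabla_x \Phi = 0$ and thus the integrand vanishes anyway.) Integrating from $0$ to $T$ then yields~\eqref{eq:generalPlayerPotFormula}.

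For the second part~\eqref{eq:PotLowerBound}, the hypothesis $\partial_{ij}\Phi = 0$ for $i \neq j$ collapses the Hessian sum to $\sum_i \partial_{ii}\Phi(t, R(t)) \cdot (e_i - p)^\transp \Sigma (e_i - p)$. I would rewrite $(e_i - p)^\transp \Sigma (e_i - p) = \norm{w^{(i)} - \bar{w}}_2^2$ and bound it by $(\norm{w^{(i)}}_2 + \norm{\bar{w}}_2)^2 \leq (1 + \sum_j p_j \norm{w^{(j)}}_2)^2 = 4$, using the triangle inequality together with the unit-norm assumption $\norm{w^{(i)}}_2 = 1$. Since $\partial_{ii}\Phi \leq 0$, multiplying by the nonnegative quadratic form reverses the inequality, giving $\partial_{ii}\Phi \cdot (e_i - p)^\transp \Sigma (e_i - p) \geq 4\,\partial_{ii}\Phi$. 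Combined with the factor $\tfrac{1}{2}$ in~\eqref{eq:generalPlayerPotFormula}, this produces the $2\sum_i \partial_{ii}\Phi$ term as claimed.

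The main conceptual step is the vanishing of the stochastic integral; this is precisely where the specific choice of $p(t)$ as the (sum-)normalized gradient of $\Phi$ pays off, functioning as a continuous-time analogue of the classical ``gradient trick'' in potential-based online learning. A secondary technical nuisance is that $p(t)$ may fail to be continuous on the null set where $\iprodt{\ones}{\nabla_x \Phi} = 0$, so that strictly speaking one needs \Ito's formula for processes of bounded variation and predictable integrands rather than the simplest $C^2$ version; this is handled via the predictability discussion referenced in \Appendix{predictability}.
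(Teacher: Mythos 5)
Your proof is correct and follows essentially the same route as the paper: apply Itô's formula to $\Phi(t,R(t))$, show the martingale term vanishes because $p(t)$ is the normalized gradient, identify $\diff[R_i,R_j]_t = (e_i-p(t))^\transp\Sigma(t)(e_j-p(t))\,\diff t$, and then bound the quadratic form by $4$ using $\partial_{ii}\Phi \leq 0$. The only substantive variation is in the last step: you bound $(e_i-p)^\transp\Sigma(e_i-p) = \norm{w^{(i)}-\bar{w}}_2^2 \leq (\norm{w^{(i)}}_2 + \norm{\bar{w}}_2)^2 \leq 4$ via the triangle inequality on the underlying $W$-representation, whereas the paper works directly with $\Sigma$, noting $|\Sigma_{ij}|\leq 1$ gives $v^\transp\Sigma v \leq \norm{v}_1^2$ and $\norm{e_i-p}_1\leq 2$. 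Both are valid and give the identical constant; the paper's version stays at the level of the covariance matrix without revisiting the $w^{(i)}$'s.
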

\begin{proof}
Let \(T \geq 0\). \Ito's formula gives us a useful formula to compute the evolution of the potential: 
\begin{align*}
    \Phi(T, R(T)) - \Phi(0,0)
    = & \int_0^T \iprod{\nabla_{x} \Phi(t, R_t)}{\diff R(t)}
    + \int_{0}^T \partial_t \Phi(t, R(t)) \diff t\\
    &+ \frac{1}{2} \sum_{i,j \in [n]} \int_{0}^T \partial_{i j} \Phi(t, R(t)) \diff [R_i,R_j]_t.
\end{align*}
For the first term above, note that $\iprod{\nabla_{x} \Phi(t, R(t))}{\diff R(t)}
= \iprodt{\ones}{\nabla_{x} \Phi(t, R(y))} \cdot \iprod{p(t)}{\diff R(t)}$ by the definition of \(p(t)\) in~\eqref{eq:potential_player}. Furthermore, this term is zero since
\begin{equation*}
    \iprod{p(t)}{\diff R(t)} = \iprod{p(t)}{\diff G(t)} - \iprod{p(t)}{\ones} 
    \diff A(t) = \diff A(t) - \diff A(t) = 0.
\end{equation*}
Finally, it only remains to show that \(\diff [R_i,R_j]_t = (e_i - p(t))^{\transp} \Sigma(t) (e_j - p(t)) \diff t\) for all \(i,j \in [n]\) to conclude the proof of~\eqref{eq:generalPlayerPotFormula}.
By the definition of \(R_i(t)\), we have
\begin{equation*}
    \diff R_i(t)
    = \diff G_i(t) - \diff A(t)
    = \diff G_i(t) - \sum_{k = 1}^n p_k(t) \diff G_k(t)
    = \iprod{e_i - p(t)}{\diff G(t)}.
\end{equation*}
Moreover, using that \(\diff B_i(t) \cdot \diff B_j(t) = \boole{i = j} \diff t\), we have
\begin{equation*}
    \diff G_i(t) \cdot \diff G_j(t)
    = \iprod{w^{(i)}(t)}{\diff B(t)} \cdot \iprod{w^{(j)}(t)}{\diff B(t)} 
    = \iprod{w^{(i)}(t)}{w^{(j)}(t)} \diff t
    = \Sigma_{ij}(t) \diff t.
\end{equation*}
Combining these two facts we have
\begin{align*}
    \diff [R_i,R_j]_t 
    &= \diff R_i(t) \cdot \diff R_j(t) 
    = \iprod{e_i - p(t)}{\diff G(t)} \cdot
    \iprod{e_j - p(t)}{\diff G(t)}
    \\
    &= (e_i - p(t))^{\transp} \Sigma(t)
    (e_j - p(t)) \diff t,
\end{align*}
which concludes the proof of~\eqref{eq:generalPlayerPotFormula}

Let us now prove~\eqref{eq:PotLowerBound}. Suppose that 
 for all \(t \geq 0\) and \(x \in \Reals^n\) we have \(\partial_{ij} \Phi(t,x) = 0\) for all distinct \(i,j \in [n]\). 
Then,
\begin{equation*}
    \Phi(T, R_T) - \Phi(0,0)
    =  \int_{0}^{T}  \Bigg(\partial_t \Phi(t, R(t)) + \frac{1}{2}\sum_{i = 1}^n
    (e_i - p(t))^{\transp} \Sigma(t) (e_i - p(t)) \partial_{ii} \Phi(t, R(t)) \Bigg)\diff t.
\end{equation*}
Since \(\Sigma(t)\) is positive definite with ones in its diagonal entries, we have \(\abs{\Sigma_{ij}(t)} \leq \abs{\Sigma_{ii}(t)} = 1\). Therefore, for any \(v \in \Reals^n\), we have
$\iprodt{v}{\Sigma(t) v}
    \leq \norm{v}_1^2$.
Thus, if \(\partial_{ii}\Phi(t,x) \leq 0\) for all $i \in [n]$, then the second inequality stated in the lemma follows since \(\norm{e_i - p(t)}_1 \leq 2\) for all \(i \in [n]\).
\end{proof}

\begin{wrapfigure}{l}{0.42\textwidth}
    \centering
    \vspace{-27pt}
    \begin{mdframed}
        \captionsetup{format=plain}
        \includegraphics[width=\textwidth]{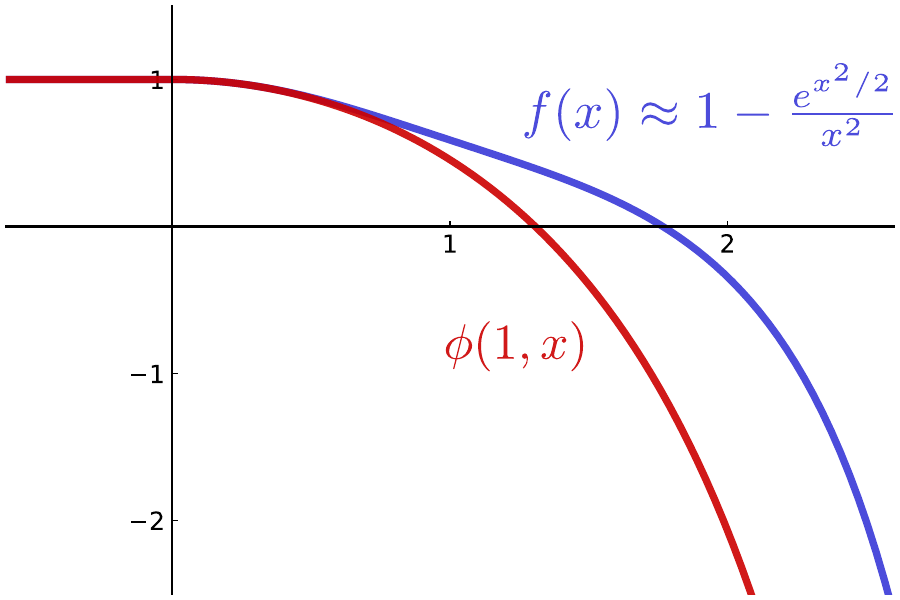}
    \caption{Plot of $\phi(1,x)$ in red and of the bound from \Lemma{expx2} in blue. }
    \label{fig:mo_plot}
    \end{mdframed}
\end{wrapfigure}
The second expression in \Lemma{RegretPlayerPotFormula} (\Equation{PotLowerBound}) hints at properties of potential functions \(\Phi\) that may be particularly useful. More precisely, separable functions \(\Phi\) that satisfy a diffusion constraint of the form \(\left(\partial_t + 2 \sum_{i = 1}^n \partial_{ii}\right) \Phi(t,\alpha) \geq 0\) would guarantee that \(\Phi(t, R(t))\) is non-decreasing in \(t\), which in turn may allow us to bound the continuous regret.

The player strategies in the rest of this paper involve the function \(M_0\) defined as
\begin{equation*}
    M_0(\alpha) \coloneqq e^\alpha - \sqrt{\pi \alpha} \erfi(\sqrt{\alpha}), \qquad \forall \alpha \in \Reals.
\end{equation*}
This is an example of a \emph{confluent hypergeometric function} (of the first kind). We use $M_0$ in the form
\begin{equation}
    \label{eq:phi_m0_def}
    \phi(t, \alpha) \coloneqq \sqrt{t} M_0\paren[\Big]{\frac{[\alpha]_+^2}{2t}},
    \quad \forall \alpha \in \Reals, \forall t > 0.
\end{equation}
Similar functions have been used in the stochastic process literature \citep{Breiman,Davis76, Perkins} and in the online learning literature (see~\citealt[eq.~(2.6)]{HarveyLPR20a} and \citealt[eq.~(11)]{ZhangCP22}).
Two particularly useful properties of $\phi$ are:
\begin{itemize}
\item \((\partial_t + \frac{1}{2} \partial_{xx})\phi(t,\alpha)\) is zero\footnote{Actually, \(\phi\) is not doubly differentiable in its second argument because of the truncation in the definition of \(\phi\). Although this might seem like a problem to apply \Ito's formula, we luckily have a single point of non-differentiability at each time \(t \geq 0\), and the truncation makes \((\partial_t + \frac{1}{2}\partial_{xx})\phi(t,x)\) no smaller everywhere else. Thus, standard smooth-truncation arguments can be made to apply \Ito's formula. For an example, see~{\citet[Section~5.2.2]{HarveyLPR20a_arxiv}}. For the sake of simplicity, we set \(\partial_{xx} \phi(t,0) \coloneqq \lim_{\eps \to 0} \partial_{xx} \phi(t,\eps)\) } for all \(t > 0\) and $ \alpha \geq 0$, and non-negative for all \(\alpha < 0\). This is a PDE known as the \emph{backwards heat equation} (BHE). Diffusion terms like these appear in \Ito's formula, so functions satisfying the BHE are well-behaved under stochastic integration.

\item \(\phi(t,\alpha) \approx -\frac{1}{\alpha^2}e^{\alpha^2/2}\) (see~\Lemma{expx2} and Figure~\ref{fig:mo_plot}), and so the potential resembles the density of the normal distribution. 
Potentials of this form have been useful in the literature
such as for NormalHedge~\citep{ChaudhuriFH09} and
AdaNormalHedge~\citep{LuoS15}.
Moreover, \citet{Freund09a} has used these normal-like potentials in continuous time.
\end{itemize}

The algorithm in this section uses the separable potential function \(\Phi\) given by\footnote{Ideally we would like to modify the potential by eliminating the denominator of $2$.
If this could be analyzed, it would yield an optimal quantile regret bound. At present, we have been unable to accomplish this.}
\begin{equation}
    \label{eq:hyperg_pot}
      \Phi(t, x) \coloneqq \sum_{i = 1}^n \phi\paren[\Big]{t, \frac{x_i}{2}}
     \qquad \forall t > 0, \forall x \in \Reals^n.
\end{equation}

\begin{lemma}
    \LemmaName{CtsPotentialNonNegative}
    Let \(\Phi\) and \((p(t))_{t \geq 0}\) be as in~\eqref{eq:hyperg_pot} and~\eqref{eq:potential_player}, respectively. Then \(\Phi(T, R(T)) \geq 0\) for all~\(T \geq 0\) almost surely.
\end{lemma}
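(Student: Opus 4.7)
The plan is to apply the second conclusion of \Lemma{RegretPlayerPotFormula}, namely \Equation{PotLowerBound}, to the separable potential $\Phi$ defined in \eqref{eq:hyperg_pot}. This requires verifying that $\partial_{ij}\Phi(t,x) = 0$ for $i \neq j$ (immediate from separability) and that $\partial_{ii}\Phi(t,x) \leq 0$ for every $i$. The latter reduces, via the chain rule $\partial_{ii}\Phi(t,x) = \tfrac{1}{4}\partial_{xx}\phi(t, x_i/2)$, to showing $\partial_{xx}\phi \leq 0$. This I would establish by direct computation: for $\alpha \leq 0$, the truncation $[\alpha]_+$ makes $\phi(t,\alpha) = \sqrt{t}\,M_0(0) = \sqrt{t}$ constant in $\alpha$, so $\partial_{xx}\phi = 0$; for $\alpha > 0$, differentiating the definition of $M_0$ and using $\operatorname{erfi}'(x) = \tfrac{2}{\sqrt{\pi}}e^{x^2}$ yields (after cancellation) $\partial_\alpha\phi(t,\alpha) = -\sqrt{\pi/2}\,\operatorname{erfi}(\alpha/\sqrt{2t})$, and hence $\partial_{xx}\phi(t,\alpha) = -e^{\alpha^2/(2t)}/\sqrt{t} < 0$.

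With both hypotheses in hand, \Equation{PotLowerBound} gives
\[
\Phi(T,R(T)) - \Phi(0,0) \;\geq\; \int_0^T \left( \partial_t \Phi(t,R(t)) + 2\sum_{i=1}^n \partial_{ii} \Phi(t,R(t)) \right) dt.
\]
By the chain rule, the integrand equals $\sum_{i=1}^n \bigl[(\partial_t + \tfrac{1}{2}\partial_{xx})\phi\bigr](t, R_i(t)/2)$. The BHE property of $\phi$ (the first bullet preceding \eqref{eq:hyperg_pot}) asserts that $(\partial_t + \tfrac{1}{2}\partial_{xx})\phi$ is zero for non-negative arguments and non-negative for negative arguments, so each summand is $\geq 0$ and the integral is non-negative. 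Combined with $\Phi(0,0) = \lim_{t \to 0^+} n\sqrt{t}\,M_0(0) = 0$, this yields $\Phi(T, R(T)) \geq 0$.

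The main technical subtlety is that the truncation $[\alpha]_+$ makes $\phi$ only $C^1$ (not $C^2$) in $\alpha$ at $\alpha = 0$, which prevents a direct application of the \Ito\ formula underlying \Lemma{RegretPlayerPotFormula}. As noted in footnote~10, this can be circumvented by a standard smooth-truncation argument near the kink; since the truncation only makes $(\partial_t + \tfrac{1}{2}\partial_{xx})\phi$ no smaller in a neighborhood of the kink, the non-negativity of the integrand is preserved in the limit. This is the only real obstacle in an otherwise routine calculation.
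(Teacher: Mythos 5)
Your high-level plan is the right one, and most of the ingredients are there: separability gives $\partial_{ij}\Phi = 0$ for $i \neq j$, your differentiation of $\phi$ is correct, and invoking the backwards heat inequality to make the integrand in \Equation{PotLowerBound} non-negative is exactly what is needed. The smooth-truncation remark for the spatial kink at $\alpha = 0$ is also fine. However, you have misidentified which obstacle is ``the main technical subtlety,'' and the one you overlooked is a genuine gap.

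The real difficulty is the temporal singularity at $t = 0$, not the spatial kink. The function $\Phi$ is defined in \eqref{eq:hyperg_pot} only for $t > 0$, and it does not extend to a $C^{1,2}$ function on $\Reals_{\geq 0} \times \Reals^n$: for any fixed $x$ with some $x_i > 0$ one has $\phi(t, x_i/2) = \sqrt{t}\,M_0(x_i^2/8t) \to -\infty$ as $t \downarrow 0$, and even at $x = 0$ the time derivative $\partial_t \Phi(t,0) = n/(2\sqrt{t})$ is unbounded. So the hypotheses of \Lemma{RegretPlayerPotFormula} (and of the underlying \Ito\ formula, \Theorem{itos_formula}) are not met on $[0,T]$, and one cannot directly write down \Equation{PotLowerBound} with ``$\Phi(0,0)$'' on the left. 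Your assertion that $\Phi(0,0) = \lim_{t\to 0^+} n\sqrt{t}M_0(0) = 0$ is a limit along the single ray $(t,0)$, whereas $(t,R(t))$ enters every neighborhood of $(0,0)$ through regions where $\Phi$ blows up, so this substitution is not justified by the lemma.

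The paper resolves precisely this point by a time shift: it replaces $\Phi$ with $\Phi_\delta(t,x) := \Phi(t+\delta,x)$, which \emph{is} smooth on all of $\Reals_{\geq 0}\times\Reals^n$ (modulo the spatial kink), applies \Equation{PotLowerBound} legitimately to $\Phi_\delta$ and the corresponding regret process $R^\delta$ to obtain $\Phi_\delta(T, R^\delta(T)) \ge \Phi_\delta(0,0) = \sqrt{\delta}$, and then sends $\delta \to 0$. That limit is itself nontrivial because the player strategy $p^{(\delta)}$, and hence $R^\delta$, depend on $\delta$; the paper verifies $R^\delta(T) \to R(T)$ almost surely using dominated convergence for stochastic integrals. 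This regularization-plus-limit argument is the bulk of the proof, and your write-up skips it entirely. (An alternative that also works, and you may find cleaner, is to apply \Ito\ on $[\varepsilon,T]$ for $\varepsilon > 0$ and show directly via the law of the iterated logarithm that $\Phi(\varepsilon,R(\varepsilon)) \to 0$ a.s.\ as $\varepsilon \downarrow 0$; either way, some such limiting argument is indispensable and must be made explicit.)
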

Note that if $\Phi(0, 0) = 0$ then \Lemma{CtsPotentialNonNegative} would immediately follow from \Lemma{RegretPlayerPotFormula} (in particular, \Equation{PotLowerBound}) and our choice of $\Phi$ since $\Phi$ is concave and $\left( \partial_t + 2\sum_{i=1}^n \partial_{ii} \right) \Phi(t, x) \geq 0$.
The only minor snag is that $\Phi(0, 0)$ is not well-defined since \Equation{phi_m0_def} would involve a division by zero.
Nonetheless, it is possible to resolve this issue since \(\lim_{\delta \downarrow 0} \Phi(\delta, 0) = 0\); the details are relegated to \Appendix{CtsPotentialNonNegative}.

It remains to translate bounds on the value of the potential \(\Phi(t, R(t))\) to bounds on the  continuous regret. The following function is used in the regret bounds throughout the remainder of the paper.
\begin{definition}
\DefinitionName{lambda}
For $\alpha \in \bR_{\geq 0}$, let $\lambda(\alpha) > 0$ be the unique positive solution to the equation
\begin{equation}
    \EquationName{M0_unique_solution}
    \alpha = -M_0( \lambda(\alpha)^2/2 ) \equiv -\phi(1, \lambda(\alpha)).
\end{equation}
\end{definition}
We note that the function $M_0(x^2/2)$ is \emph{strictly} decreasing on $\bR_{\geq 0}$ and the image of $M_0$ on $\bR_{\geq 0}$ is $(-\infty, 1]$ (see~\Appendix{M0Properties}). 
In particular, a solution to \Equation{M0_unique_solution} exists and is unique so $\lambda(\alpha)$ is well-defined for all $\alpha \geq 0$.
We also note that $\lambda(\alpha)$ is strictly increasing in $\alpha$.

The next lemma show us how bounds in the value of \(\Phi(t, x)\) can be translated into bounds on the quantiles of $x$,
which were defined in \eqref{eq:QuantileDef}.
\begin{lemma}
    \LemmaName{M0ToRegret}
    Let \(T > 0\) and \(x \in \Reals^n\). Suppose that \(\Phi(T, x) \geq 0\). Then, for any \(\eps \in (0,1]\),
    \begin{equation*}
        \quantile(\eps,x)
        ~\leq~ 2\lambda\paren[\Big]{\frac{1 - \eps}{\eps}} \sqrt{T}
        ~\leq~ 2 \big(3 + \sqrt{2 \ln(1/\eps)}\big) \sqrt{T}.
    \end{equation*}
\end{lemma}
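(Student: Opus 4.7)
\begin{proofsketch}
The plan is to isolate the contribution to the potential from the ``large'' coordinates of $x$, use non-negativity of $\Phi(T, x)$ to upper bound that contribution, and then invert via the definition of $\lambda$.

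First I would dispose of the trivial case: if $q := \quantile(\eps, x) \leq 0$ then the stated inequality $q \leq 2\lambda((1-\eps)/\eps)\sqrt{T}$ holds immediately, since $\lambda > 0$. So assume $q > 0$. By the definition of quantile in~\eqref{eq:QuantileDef}, at least $\lceil \eps n \rceil$ indices satisfy $x_i \geq q$, and the remaining (at most $n - \lceil \eps n \rceil$) indices satisfy $x_i < q$. The key monotonicity facts, which follow from $M_0(\cdot^2/2)$ being strictly decreasing on $\bR_{\geq 0}$ (referenced in \Appendix{M0Properties} right after \Definition{lambda}), are that $\phi(T, \cdot)$ is non-increasing on $\bR_{\geq 0}$ and $\phi(T, \alpha) \leq \sqrt{T}$ for every $\alpha \in \bR$, with equality when $\alpha \leq 0$.

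Next, split the sum in \eqref{eq:hyperg_pot} and apply these bounds:
\begin{equation*}
0 \;\leq\; \Phi(T, x) \;=\; \sum_{i : x_i \geq q} \phi(T, x_i/2) \;+\; \sum_{i : x_i < q} \phi(T, x_i/2) \;\leq\; \lceil \eps n \rceil \, \phi(T, q/2) \;+\; (n - \lceil \eps n \rceil) \sqrt{T}.
\end{equation*}
Rearranging and using $\lceil \eps n \rceil \geq \eps n$ gives $\phi(T, q/2) \geq -\tfrac{1-\eps}{\eps}\sqrt{T}$. Since $q > 0$, we have $\phi(T, q/2) = \sqrt{T}\, M_0(q^2/(8T))$, so this rearranges to $-M_0(q^2/(8T)) \leq (1-\eps)/\eps$. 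By the definition $-M_0(\lambda(\alpha)^2/2) = \alpha$ from \Equation{M0_unique_solution} applied at $\alpha = (1-\eps)/\eps$, combined with strict monotonicity of $-M_0(\cdot^2/2)$ on $\bR_{\geq 0}$, this gives $q/(2\sqrt{T}) \leq \lambda((1-\eps)/\eps)$, which is the first inequality.

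For the second inequality, it suffices to show $\lambda((1-\eps)/\eps) \leq 3 + \sqrt{2\ln(1/\eps)}$. By monotonicity of $-M_0(\cdot^2/2)$, this reduces to the numeric inequality $-M_0(s^2/2) \geq (1-\eps)/\eps$ where $s := 3 + \sqrt{2\ln(1/\eps)}$, which I would establish by invoking the quantitative upper bound on $\phi(1, \cdot)$ from \Lemma{expx2} (the blue curve in Figure~\ref{fig:mo_plot}). Concretely, that bound yields an estimate of the form $-M_0(s^2/2) \gtrsim e^{s^2/2}/s^2$, and expanding $s^2/2 \geq \ln(1/\eps) + 3\sqrt{2\ln(1/\eps)} + 9/2$ produces enough slack in the exponential to dominate the $1/s^2$ polynomial factor and the $1 - \eps < 1$ on the right-hand side. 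The main obstacle is precisely this last step: getting the numeric constant $3$ to be large enough that the crude exponential lower bound on $-M_0$ swallows the polynomial correction for all $\eps \in (0, 1]$, including $\eps$ close to $1$ where $\ln(1/\eps)$ is small and $(1-\eps)/\eps$ is also small so one must rely on $\lambda$ being bounded below by a positive constant (namely $\lambda(0) > 0$).
\end{proofsketch}
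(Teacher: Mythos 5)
Your argument is correct and follows essentially the same route as the paper: split $\Phi(T,x)$ into the top $\lceil \eps n \rceil$ coordinates and the rest, use monotonicity of $\phi$ and the bound $\phi(T,\cdot) \leq \sqrt{T}$ together with non-negativity of $\Phi$, and then invert the definition of $\lambda$. For the second inequality the paper simply cites \Lemma{lambdabound} (itself a consequence of \Lemma{expx2}) rather than re-deriving the bound, but the computation you sketch is exactly that lemma's proof; your explicit handling of the $q \leq 0$ case and of $\lceil \eps n \rceil$ is slightly more careful than the paper's.
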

\begin{proof}
    For simplicity, suppose \(x_1 \geq x_2 \geq \dotsc \geq x_n\). Since \(\phi(T, \cdot)\) is decreasing, we have \(\phi(T, x_{\eps n}/2) \geq \phi(T, x_i/2)\) for every \(i \leq \eps n \). Summing this inequality for \(i \in \{1,\cdots,\eps n\}\), using the assumption \(\sum_{i =1}^n \phi(T, x_i/2) = \Phi(T,x) \geq 0\) and since \(-\phi(T,\alpha) \geq -\sqrt{T}\) for any \(\alpha \in \Reals\), we have
\begin{equation}
    \label{eq:lambda_regret_1}
    \eps n \phi(T,x_{\eps n}/2) \geq \sum_{i = 1}^{\eps n} \phi(T, x_i/2) \geq - \sum_{i = \eps n + 1}^n \phi(T,x_i/2) \geq - (1 - \eps)n\sqrt{T}.
\end{equation}
By the definition of \(\phi\), the above series of inequalities implies
\begin{equation*}
    \eps n \sqrt{T}M_0\paren[\Big]{\frac{([x_{\eps n}]_+/2)^2}{2T}} \geq - \paren[\big]{(1 - \eps)n}\sqrt{T}
    \implies 
    M_0\paren[\Big]{\frac{([x_{\eps n}]_+/2)^2}{2T}} \geq - \frac{1 - \eps}{\eps} \eqqcolon - \gamma.
\end{equation*}
Using the definition of \(\lambda\) and the fact that \(M_0\) is a decreasing function, we have
\begin{equation*}
    M_0\paren[\Big]{\frac{([x_{\eps n}]_+/2)^2}{2T}} \geq
    M_0\paren[\Big]{\frac{\lambda(\gamma)^2}{2}}
    \implies x_{\eps n} \leq 2\lambda(\gamma) \sqrt{T}.
\end{equation*}
The second inequality in~\eqref{eq:lambda_regret_1} follows from \Lemma{lambdabound}.
\end{proof}

Finally, we can combine \Lemma{CtsPotentialNonNegative} and \Lemma{M0ToRegret} to prove a bound on the quantile regret in continuous-time.
These quantile regret bounds improve upon the best known in the discrete case. In~\Section{DiscreteQuantileRegret} we discretize this algorithm while preserving the same quantile regret bound.

\begin{theorem}
    \TheoremName{LooseM0Regret}
    Let \(\Phi\) and \((p(t))_{t \geq 0}\) be as in~\eqref{eq:hyperg_pot} and~\eqref{eq:potential_player}, respectively. Then almost surely 
    \[
    \QuantRegret(\eps, T) ~\leq~ 2\lambda((1 - \eps)/\eps)\sqrt{T}
    ~\leq~ 2 \big(3 + \sqrt{2 \ln(1/\eps)}\big) \sqrt{T}
        \quad \forall T \geq 0.
    \]
\end{theorem}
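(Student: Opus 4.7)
The theorem is a direct combination of the two preceding lemmas of this section. \Lemma{CtsPotentialNonNegative} guarantees that $\Phi(T, R(T)) \geq 0$ almost surely, and \Lemma{M0ToRegret} converts such a non-negativity statement into a quantile bound on $R(T)$. Since $\QuantRegret(\eps, T) = \quantile(\eps, R(T))$ by definition, these two facts immediately yield the first inequality of the theorem, and the second inequality then follows by invoking \Lemma{lambdabound} with $\alpha = (1-\eps)/\eps$.

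In more detail, I would first fix $\eps \in (0, 1]$ and observe that the non-negativity $\Phi(T, R(T)) \geq 0$ can be arranged to hold on a \emph{single} almost-sure event for every $T \geq 0$ simultaneously. This uniformity in $T$ is inherited from \Lemma{RegretPlayerPotFormula}, whose potential identity holds as a process-level statement (i.e.\ for all $T$ outside a single null set), together with the pathwise concavity/BHE argument used in \Lemma{CtsPotentialNonNegative}. On this good event, I then fix any $T \geq 0$ and apply \Lemma{M0ToRegret} with $x = R(T)$, which gives
\[
    \QuantRegret(\eps, T) \;=\; \quantile(\eps, R(T)) \;\leq\; 2\lambda\!\left(\tfrac{1-\eps}{\eps}\right)\sqrt{T}
    \;\leq\; 2\bigl(3 + \sqrt{2\ln(1/\eps)}\bigr)\sqrt{T},
\]
where the last step is \Lemma{lambdabound}.

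Given that essentially all of the analytical content has already been packaged into \Lemma{CtsPotentialNonNegative} and \Lemma{M0ToRegret}, there is no substantive obstacle remaining; the theorem really is a one-line consequence. The only minor care required is to isolate the single null set on which the underlying \Ito{} identity fails, and to handle $T = 0$ trivially (where both sides vanish). If one wanted a version of the bound that is also uniform in $\eps$ on a single event, one could take a countable intersection over rational $\eps \in (0,1]$ and extend to all $\eps$ using that both sides of the inequality are monotone (and continuous) in $\eps$ along any fixed sample path.
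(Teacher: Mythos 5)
Your proposal matches the paper's argument exactly: the paper itself states that the theorem is obtained by ``combining \Lemma{CtsPotentialNonNegative} and \Lemma{M0ToRegret},'' with the second inequality already built into \Lemma{M0ToRegret} via \Lemma{lambdabound}. The measure-theoretic bookkeeping you flag (a single null set, uniformity in $T$ via rationals plus continuity) is handled inside the paper's proof of \Lemma{CtsPotentialNonNegative}, so nothing is missing.
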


\subsection{Discretization}
\SectionName{DiscreteQuantileRegret}

In this section, we propose an algorithm for the original experts' problem based on the continuous-time solution of the previous section. As in the continuous setting, we have \(n \in \Naturals\) experts.
At each round \(t \in \Naturals\), the player picks a probability vector \(p_t \in \Delta_n\) and the adversary picks a gain vector~\(g_t \in [-1,1]^n\). The \textbf{instantaneous regret vector} at round \(t \geq 1\) is given by $r_t \coloneqq g_t - \ones \cdot \iprodt{p_t}{g_t}$.
Moreover, define the \textbf{regret vector} at round \(t\) by \(R_t \coloneqq \sum_{s = 1}^t r_s\).

To discretize the algorithm from the previous section, we shall make use of \emph{discrete derivatives} in a way similar to \citet{HarveyLPR20a_arxiv}.
For a bivariate function $f$, define its discrete derivatives as
\begin{equation}
\EquationName{DiscreteDerivDef}
\begin{aligned}
    f_t(t,x) & ~=~ f(t, x) - f(t-1, x), \\
    f_x(t,x) & ~=~ \frac{f(t, x+1)-f(t,x-1)}{2}, \\
    f_{xx}(t,x) & ~=~ (f(t, x+1) + f(t, x-1)) - 2f(t, x).
\end{aligned}
\end{equation}
Let \(\Phi\) be defined as in \Equation{hyperg_pot}.
For $i \in [n]$, we define the discrete derivative of $\Phi$ as
\begin{equation*}
\begin{aligned}
    \Phi_t(t,x)  ~=~ \sum_{i=1}^n \phi_{t} \paren[\Big]{t, \frac{x_i}{2}}; \quad
    \Phi_i(t,x)  ~=~ \frac{1}{2}\phi_x \paren[\Big]{t, \frac{x_i}{2}}; \quad
    \Phi_{ii}(t,x)  ~=~ \frac{1}{4}\phi_{xx} \paren[\Big]{t, \frac{x_i}{2}}.
\end{aligned}
\end{equation*}
For notation convenience, we also define the discrete gradient $\dnabla \Phi(t,x) \coloneqq (\Phi_1(t,x), \ldots, \Phi_n(t,x))^{\transp}$.


The algorithm we use for the discrete setting is the natural analogue of the algorithm for the continuous setting as defined in \Equation{potential_player}.
Specifically, for $t \in \Naturals_{\geq 1}$, we set
\begin{equation}
    \label{eq:discrete_quantile_algo}
    p_t \coloneqq
    \begin{cases}
        \frac{1}{n} \ones &\quad\text{if $\dnabla \Phi(t, R_{t-1}) = 0$} \\
        \frac{1}{\iprodt{\ones}{\dnabla \Phi(t,R_{t-1})}}\dnabla \Phi(t, R_{t-1}) &\quad\text{otherwise.}
    \end{cases}
\end{equation}
Note that $\phi(t, x)$ is non-increasing in $x$ so $\dnabla \Phi(t, x) \leq 0$ (component-wise), which implies $p_t \in \Delta_n$. Let us now analyze the performance of this algorithm. We summarize our results in the next theorem.


\begin{theorem}
\TheoremName{eps_quantile_regret}
We have \(\quantile(\eps, R_t) \leq 2\lambda((1-\eps)/\eps) \sqrt{t} \leq 2\sqrt{t} + \sqrt{8\ln(1/\eps) + 20} \cdot \sqrt{t}\).
\end{theorem}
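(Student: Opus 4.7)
My plan is to mirror the continuous-time proof of \Theorem{LooseM0Regret} by establishing a discrete analog of \Lemma{CtsPotentialNonNegative}, namely that $\Phi(t, R_t) \geq 0$ deterministically. Once this is in hand, applying \Lemma{M0ToRegret} with $x = R_t$ immediately yields $\quantile(\eps, R_t) \leq 2\lambda((1-\eps)/\eps)\sqrt{t}$, and the second inequality follows from \Lemma{lambdabound}. So the entire task reduces to proving the potential non-negativity in the discrete setting.

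To control $\Phi(t, R_t) - \Phi(t-1, R_{t-1})$ I would split it as
\[
    \big[\Phi(t, R_{t-1}) - \Phi(t-1, R_{t-1})\big] + \big[\Phi(t, R_t) - \Phi(t, R_{t-1})\big],
\]
recognizing the first bracket as $\Phi_t(t, R_{t-1})$ by definition. For the second, set $h_i \coloneqq r_t(i)/2 \in [-1,1]$ (using that $r_t(i) \in [-2,2]$), so the increment in the $i$-th coordinate of the argument of $\phi$ lives in a unit interval. I would then prove, coordinate-by-coordinate, a discrete Taylor-type inequality of the form
\[
    \phi\paren[\Big]{t, \frac{R_{t-1}(i)}{2} + h_i} \,\geq\, \phi\paren[\Big]{t, \frac{R_{t-1}(i)}{2}} + h_i \,\phi_x\paren[\Big]{t, \frac{R_{t-1}(i)}{2}} + \tfrac{h_i^2}{2}\,\phi_{xx}\paren[\Big]{t, \frac{R_{t-1}(i)}{2}},
\]
which is an \emph{equality} at $h_i = \pm 1$ (by the very definition of the discrete derivatives in \Equation{DiscreteDerivDef}) and is a genuine inequality in between provided $\phi(t,\cdot)$ has the appropriate concavity/convexity pattern on unit intervals. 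Summed over $i$ and rewritten in terms of $\Phi_i$ and $\Phi_{ii}$, this yields the space-step lower bound $\iprod{\dnabla\Phi(t, R_{t-1})}{r_t} + \tfrac{1}{2}\sum_i h_i^2\,\Phi_{ii}(t, R_{t-1}) \cdot 4$ (the factor $4$ absorbs the $1/2$ in the definition of $\Phi_i$ versus $h_i = r_t(i)/2$). The crucial algebraic cancellation is that, by the definition of $p_t$ in \Equation{discrete_quantile_algo}, we have $\dnabla\Phi(t, R_{t-1}) \propto p_t$, and $\iprodt{p_t}{r_t} = \iprodt{p_t}{g_t} - \iprodt{p_t}{g_t} = 0$, so the first-order term vanishes exactly---the discrete counterpart of the $\iprod{p(t)}{\diff R(t)} = 0$ identity from the proof of \Lemma{RegretPlayerPotFormula}.

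What remains is to show the surviving ``heat'' term is non-negative, i.e., $\Phi_t(t, R_{t-1}) + \tfrac{1}{2}\sum_i h_i^2\,\Phi_{ii}(t, R_{t-1}) \geq 0$. Since $h_i^2 \leq 1$ and (to be verified) $\Phi_{ii} \leq 0$, this reduces to the single-variable discrete backwards heat inequality $\phi_t(t,y) + \tfrac{1}{2}\phi_{xx}(t,y) \geq 0$. This should be inherited from the continuous identity $(\partial_t + \tfrac{1}{2}\partial_{xx})\phi(t,\alpha) = 0$ for $\alpha > 0$ and $\geq 0$ for $\alpha \leq 0$ (noted after \Equation{phi_m0_def}), together with Taylor remainders that have favorable signs; for the region $\alpha > 0$, the discrete operator equals the continuous one plus an integral of the fourth derivative of $\phi$, and similarly in $t$, so the discrete BHE follows from sign properties of the higher derivatives of $M_0$.

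The main obstacle will be verifying the two sign conditions: the discrete Taylor lower bound on unit intervals (which needs concavity-type information about $\phi$ up to a few orders) and the discrete BHE (which needs sign-definiteness of the fourth-order and second-order-in-$t$ error terms for $\phi$). Both reduce to monotonicity/convexity statements about $M_0$ and its derivatives, which I expect to establish using the confluent hypergeometric ODE satisfied by $M_0$. The kink at $\alpha = 0$ created by the truncation $[\alpha]_+$ must be handled carefully, exactly as the continuous proof handles it via a smooth-truncation argument; the discrete setting is actually kinder here, since the discrete second derivative only sees values and not local differentiability. Once the inductive step is established, telescoping from $t = 1$ gives $\Phi(t, R_t) \geq \Phi(0, 0)$, with the base case $\Phi(0,0) \geq 0$ handled by the same limiting argument used for \Lemma{CtsPotentialNonNegative} in \Appendix{CtsPotentialNonNegative}. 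Applying \Lemma{M0ToRegret} then closes the proof.
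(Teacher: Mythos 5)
Your high-level strategy matches the paper's: telescope $\Phi(t,R_t)$, show the first-order term dies because $\iprodt{p_t}{r_t}=0$, invoke a discrete backwards-heat inequality, conclude $\Phi(t,R_t)\geq 0$, and finish with \Lemma{M0ToRegret}. However, two concrete pieces do not hold up.

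First, the ``discrete Taylor-type inequality'' $\phi(t, x+h) \geq \phi(t,x) + h\,\phi_x(t,x) + \tfrac{h^2}{2}\phi_{xx}(t,x)$ for $h\in[-1,1]$ is false in general and false for $\phi$ in particular. Set $g(h) := \phi(t,x+h) - \phi(t,x) - h\phi_x(t,x) - \tfrac{h^2}{2}\phi_{xx}(t,x)$. Then $g(0)=g(1)=g(-1)=0$, but $g'(0) = \partial_x\phi(t,x) - \phi_x(t,x)$, which for $x>0$ is strictly positive because $\partial_{xxx}\phi(t,\cdot)<0$ there (so $\phi_x(t,x) = \tfrac{1}{2}(\phi(t,x+1)-\phi(t,x-1)) < \partial_x\phi(t,x)$). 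Since $g$ vanishes at $-1$ and $0$ but leaves $0$ with positive slope, $g$ is strictly negative somewhere in $(-1,0)$. (You can also see the failure with $f(x)=-e^x$ at $x=0$, $h=-1/2$.) The inequality you want---and all you need---replaces $\tfrac{h^2}{2}\phi_{xx}$ by the constant $\tfrac{1}{2}\phi_{xx}$, i.e.\ $\phi(t,x+h)\geq \phi(t,x)+h\phi_x(t,x)+\tfrac12\phi_{xx}(t,x)$. This \emph{does} follow from concavity alone: the right-hand side is now \emph{linear} in $h$, the two sides agree at $h=\pm1$, and the left side is concave in $h$. This is precisely \Lemma{LBIsDiscreteIto}, and after plugging in the cancellation $\iprodt{\dnabla\Phi}{r_t}=0$ it yields \Lemma{DiscreteRegretPlayerPotFormula} directly. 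Your subsequent ``relax $h_i^2\leq 1$'' step then becomes unnecessary because you never had the stronger inequality.

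Second, the base case. You propose to telescope down to $\Phi(0,0)$ and handle the singularity by the $\delta$-shift limiting argument used in \Appendix{CtsPotentialNonNegative}. Beyond being extra work, this does not directly transfer: in discrete time the player $p_t\propto\dnabla\Phi(t,R_{t-1})$ is already fixed, so $\iprodt{\dnabla\Phi_\delta(t,R_{t-1})}{r_t}\neq 0$ and the first-order cancellation fails unless you also change the algorithm to one based on $\Phi_\delta$, analyze that algorithm's regret $R^\delta_t$, and then argue $R^\delta_t\to R_t$. It is much simpler to do what the paper does: start the telescope at $t=1$ (so \Lemma{DiscreteBHE} only needs $t>1$, where $\phi(t-1,\cdot)$ is well defined), and verify the base case directly from $|R_{1,i}|\leq 2$: $\Phi(1,R_1)=\sum_i\phi(1,R_{1,i}/2)\geq n\,M_0(1/2)>n\,M_0(\lambda(0)^2/2)=0$. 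That gives $\Phi(t,R_t)\geq\Phi(1,R_1)>0$ for all $t\geq 1$, and \Lemma{M0ToRegret} finishes. Finally, your plan to prove the discrete BHE ``from sign properties of the higher derivatives of $M_0$'' is plausible but underspecified; the paper instead reduces it to the clean two-point inequality $M_0\big(\tfrac{(x+z)^2}{2}\big)+M_0\big(\tfrac{(x-z)^2}{2}\big)\geq 2\sqrt{1-z^2}\,M_0\big(\tfrac{x^2}{2(1-z^2)}\big)$ (\Lemma{M0_ineq}), which handles the truncation kink via a short case analysis.
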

Let $c$ be the optimal constant multiplying  $\sqrt{t \ln(1/\eps)}$ in the minimax optimal \(\eps\)-quantile regret for anytime algorithms.
\Theorem{eps_quantile_regret} shows $c \leq 2\sqrt{2}$.
Previously, \citet[Theorem 3]{ChernovV10} and \citet[Example 3]{NegreaBCOR21} both proved\footnote{Note that these papers consider costs in $[0,1]$ so their results must be multiplied by $2$ to coincide with our setting.} $c \leq 4$.
On the lower bound side, \citet[Theorem 1]{NegreaBCOR21} proved that $c \geq \sqrt{2}$.
So there remains a gap of $2$ for the constant $c$.
Finally, we note that if $T$ is known beforehand (the fixed-time setting), \citet[Corollary 6]{OrabonaP16} showed $c \leq 2 \sqrt{3}$.
\Theorem{eps_quantile_regret} improves this to $c \leq 2\sqrt{2}$.

Interestingly, \citet{ZhangCP22} proposed independently of us an algorithm using coin-betting with a potential similar to \eqref{eq:hyperg_pot} also inspired by the work of~\citet{HarveyLPR20a}. Although their work is mostly on unconstrained online learning, one can obtain an algorithm for the experts' problem from their coin-betting algorithm whose \(\eps\)-quantile regret is similar to our bound from~\Theorem{eps_quantile_regret}, that is, 
roughly no more than \(2\sqrt{2 t \ln(1/\eps)}\). More precisely, \citet[Theorem~4]{OrabonaP16} show how to obtain an algorithm for the experts' problem with costs in \([0,1]\) from a coin-betting algorithm together with regret guarantees against any comparison point \(u \in \Delta_n\). We may obtain bounds on the \(\eps\)-quantile regret by noting that it is the same as the regret against all points \(u \in \Delta_n\) with \(\lceil \eps n \rceil\) non-zero equal entries. Combining this with Theorem 1 and Lemma~B.2 of~\citet{ZhangCP22} yields the desired bound.
In contrast, our analysis in this section is tailored specifically for the experts problem, and does not make use of the coin-betting framework.

To prove \Theorem{eps_quantile_regret}, we make use of the following lemma.
\begin{lemma}
    \LemmaName{LBIsDiscreteIto}
    Let $x_1, x_2, \cdots$ be a sequence of real numbers such that $|x_t - x_{t-1}| \leq 1$.
    Then for any function $f$ that is \emph{concave} in its second argument and any integer $T \geq 2$, we have
    \begin{equation*}
    \begin{aligned}
        f(T, x_T) - f(1, x_1)
        ~\geq~\sum_{t=2}^T f_x(t, x_{t-1})\cdot (x_t - x_{t-1})
        ~+~\sum_{t=2}^T \left( \frac{1}{2} f_{xx}(t, x_{t-1}) + f_t(t, x_{t-1}) \right).
    \end{aligned}
    \end{equation*}
\end{lemma}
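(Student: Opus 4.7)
The plan is to reduce the lemma to a purely \emph{one-dimensional} concavity inequality applied at each time step, and then telescope. Concretely, I would first write
\[
f(T, x_T) - f(1, x_1)
~=~ \sum_{t=2}^{T} \bigl( f(t, x_t) - f(t-1, x_{t-1}) \bigr)
\]
and split each summand into a time-shift piece and a spatial-shift piece:
\[
f(t, x_t) - f(t-1, x_{t-1})
~=~ \underbrace{\bigl( f(t, x_{t-1}) - f(t-1, x_{t-1}) \bigr)}_{=\,f_t(t, x_{t-1})}
~+~ \bigl( f(t, x_t) - f(t, x_{t-1}) \bigr).
\]
The first piece is exactly $f_t(t, x_{t-1})$ by definition of the discrete time derivative in \Equation{DiscreteDerivDef}, which matches one of the terms on the right-hand side of the lemma. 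So everything reduces to showing that the spatial piece satisfies
\[
f(t, x_t) - f(t, x_{t-1})
~\geq~ f_x(t, x_{t-1}) \cdot (x_t - x_{t-1}) + \tfrac{1}{2}\, f_{xx}(t, x_{t-1}),
\]
for each $t \geq 2$, after which summing and combining with the time-shift piece gives the claim.

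For this spatial inequality, fix $t$, set $g(\cdot) \coloneqq f(t, \cdot)$ (which is concave by hypothesis), let $y \coloneqq x_{t-1}$ and $\delta \coloneqq x_t - x_{t-1} \in [-1,1]$ by the step-size assumption. Plugging the definitions of $f_x$ and $f_{xx}$ from \Equation{DiscreteDerivDef}, the target inequality becomes
\[
g(y+\delta) ~\geq~ g(y) + \frac{g(y+1)-g(y-1)}{2}\,\delta + \frac{g(y+1)+g(y-1) - 2 g(y)}{2}.
\]
The right-hand side is affine in $\delta$; call it $L(\delta)$. A direct computation shows $L(1) = g(y+1)$ and $L(-1) = g(y-1)$, so $L$ is precisely the affine interpolant of $g$ through the points $y-1$ and $y+1$. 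Writing $\delta = \alpha \cdot 1 + (1-\alpha)\cdot(-1)$ with $\alpha = (1+\delta)/2 \in [0,1]$, concavity of $g$ gives
\[
g(y+\delta) ~=~ g\bigl(\alpha(y+1) + (1-\alpha)(y-1)\bigr)
~\geq~ \alpha\, g(y+1) + (1-\alpha)\, g(y-1) ~=~ L(\delta),
\]
which is exactly what we need. Summing this bound over $t = 2, \ldots, T$ and adding the already-identified $f_t(t, x_{t-1})$ terms yields the lemma.

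The only genuinely nontrivial ingredient is the concavity interpolation step above, and even that is essentially immediate once one notices that $L$ is the chord through the points $(y\pm 1, g(y\pm 1))$; the step-size bound $|x_t - x_{t-1}| \leq 1$ is exactly what is needed to keep $\delta$ inside $[-1,1]$ so that $y + \delta$ lies in the convex hull of $\{y-1, y+1\}$. No differentiability of $f$ is ever used, which is consistent with the lemma statement only asking for concavity. I would expect the main pitfall to be bookkeeping of indices in the telescoping identity (the sum starts at $t=2$, matching the statement), rather than any analytical difficulty.
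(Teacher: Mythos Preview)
Your proof is correct and is essentially the same argument as the paper's. The paper first invokes a cited ``Discrete \Ito\ Formula'' identity that rewrites the telescoping sum as $\sum_t\bigl[f(t,x_t)-\tfrac{1}{2}(f(t,x_{t-1}+1)+f(t,x_{t-1}-1))\bigr]+\sum_t\bigl[\tfrac{1}{2}f_{xx}+f_t\bigr]$ and then bounds the first bracket below by $f_x(t,x_{t-1})\cdot(x_t-x_{t-1})$ via the same chord-vs.-concave-function comparison you use; your version simply absorbs the $\tfrac{1}{2}f_{xx}$ term into the spatial inequality instead of leaving it in the identity, which is an immaterial regrouping.
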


The above lemma is essentially a corollary of the following discrete version of \Ito's formula (see \citealp[Example 10.9]{Klenke} or \citealp[Lemma~3.13]{HarveyLPR20a_arxiv}).
\begin{lemma}[Discrete \Ito's Formula]
    \LemmaName{AlmostDiscreteIto}
    Let $x_1, \cdots$ be a sequence of real numbers.
    Then for any function $f$ and any fixed time $T \geq 2$, we have
    \begin{equation}
    \begin{aligned}
        f(T, x_T) - f(1, x_1)
        & = \sum_{t=2}^T f(t, x_t) - \frac{f(t, x_{t-1} + 1) + f(t, x_{t-1} -1)}{2} \\
        & + \sum_{t=2}^T \left( \frac{1}{2} f_{xx}(t, x_{t-1}) + f_t(t, x_{t-1}) \right).
    \end{aligned}
    \end{equation}
\end{lemma}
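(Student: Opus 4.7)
The proof plan is to verify the identity by direct expansion and telescoping, as the statement is an algebraic identity that requires no hypotheses on $f$ (no concavity, no bounded increments). The key observation is that both the "second derivative" and "time derivative" discrete analogues, when summed, collapse into a simple difference after cancellations.

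First I would expand the summand on the right-hand side using the definitions in \Equation{DiscreteDerivDef}. Specifically,
\[
\frac{1}{2} f_{xx}(t, x_{t-1}) ~=~ \frac{f(t, x_{t-1}+1)+f(t, x_{t-1}-1)}{2} - f(t, x_{t-1}),
\]
and
\[
f_t(t, x_{t-1}) ~=~ f(t, x_{t-1}) - f(t-1, x_{t-1}).
\]
Adding these two, the $\pm f(t, x_{t-1})$ terms cancel, giving
\[
\tfrac{1}{2} f_{xx}(t, x_{t-1}) + f_t(t, x_{t-1})
~=~ \frac{f(t, x_{t-1}+1)+f(t, x_{t-1}-1)}{2} - f(t-1, x_{t-1}).
\]

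Next I would add this to the other summand on the right-hand side, namely $f(t,x_t) - \tfrac{1}{2}\bigl(f(t,x_{t-1}+1)+f(t,x_{t-1}-1)\bigr)$. The half-sum terms cancel exactly, leaving the combined summand equal to $f(t, x_t) - f(t-1, x_{t-1})$. Summing from $t = 2$ to $T$ telescopes to $f(T, x_T) - f(1, x_1)$, which matches the left-hand side.

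There is no real obstacle here: the identity holds for arbitrary real sequences $x_1, x_2, \dots$ and arbitrary bivariate $f$, precisely because the discrete second-derivative and time-derivative operators are defined so that their sum at $(t, x_{t-1})$ plus the "remainder" $f(t, x_t) - \tfrac{1}{2}(f(t, x_{t-1}+1) + f(t, x_{t-1}-1))$ collapses to a one-step increment along the trajectory. The only thing worth remarking on is that we never use the assumption $|x_t - x_{t-1}| \leq 1$; that hypothesis only becomes relevant when one subsequently wants to bound the remainder term using concavity (as in \Lemma{LBIsDiscreteIto}).
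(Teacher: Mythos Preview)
Your argument is correct: expanding $\tfrac{1}{2}f_{xx}(t,x_{t-1})+f_t(t,x_{t-1})$ via the definitions, adding the remainder term, and telescoping $\sum_{t=2}^T \bigl(f(t,x_t)-f(t-1,x_{t-1})\bigr)$ is exactly what is needed, and you are right that no hypotheses on $f$ or the increments are required. The paper does not give its own proof of this lemma---it simply cites \cite[Example~10.9]{Klenke} and \cite[Lemma~3.13]{HarveyLPR20a_arxiv}---so there is nothing to compare against beyond noting that your computation is the standard one.
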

\begin{proofof}[\Lemma{LBIsDiscreteIto}]
   We prove the following statement. Let $f$ be a bivariate function that is concave in its second argument.
   Then for all $t, x \in \bR$ and $y \in [-1, 1]$ we have
   \[
        f(t, x+y) - \frac{f(t,x+1)-f(t,x-1)}{2} \geq f_x(t, x) \cdot y. 
   \]
   Equality holds for $y \in \{-1, 1\}$. Since the LHS is concave in $y$ and the RHS is linear in $y$,
   the inequality holds for all $y \in [-1, 1]$.
   The lemma now follows from \Lemma{AlmostDiscreteIto} with $x = x_{t-1}$ and~$y = x_t - x_{t-1}$.
\end{proofof}

We now prove a discrete analogue of the second assertion of \Lemma{RegretPlayerPotFormula}.
Note that for technical reasons we start at $t = 1$ instead of $t = 0$.
One can directly deal with the case $t = 0$ by customizing \Equation{phi_m0_def}. However, this cumbersome approach does not yield improved bounds.
\begin{lemma}
    \LemmaName{DiscreteRegretPlayerPotFormula}
    Fix any $T \geq 2$. Then
    \begin{align*}
        \Phi(T, R_T) - \Phi(1, R_1)
        & ~\geq~ \sum_{t=2}^T \left( \Phi_t(t, R_{t-1}) + 2 \sum_{i=1}^n \Phi_{ii}(t, R_{t-1}) \right) \\
        & ~=~ \sum_{t=2}^T \sum_{i=1}^n \left( \phi_t(t, R_{t-1,i}/2) + \frac{1}{2} \phi_{xx}(t, R_{t-1,i}/2) \right).
    \end{align*}
\end{lemma}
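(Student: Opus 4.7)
The plan is to apply \Lemma{LBIsDiscreteIto} coordinatewise—once for each expert $i \in [n]$—and then sum. Specifically, for each $i \in [n]$ I set $f = \phi$ and $x_t = R_{t,i}/2$ in \Lemma{LBIsDiscreteIto}. Two hypotheses must be verified: (i) the increments satisfy $|x_t - x_{t-1}| = |r_{t,i}|/2 \leq 1$, since $r_{t,i} = g_{t,i} - \iprodt{p_t}{g_t} \in [-2,2]$ whenever $g_t \in [-1,1]^n$ and $p_t \in \Delta_n$; and (ii) $\phi(t,\cdot)$ is concave in its second argument, which is a standard property of the confluent hypergeometric $M_0$ (see \Appendix{M0Properties}) and is already implicit in the paper's assertion that $\Phi$ is concave in the discussion following \Lemma{CtsPotentialNonNegative}.

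Once \Lemma{LBIsDiscreteIto} is applied coordinatewise and summed over $i \in [n]$, the left-hand side becomes $\Phi(T, R_T) - \Phi(1, R_1)$ by the definition of $\Phi$ in \Equation{hyperg_pot}. The ``second-order'' portion of the right-hand side becomes $\sum_{t=2}^T \sum_{i=1}^n \bigl(\phi_t(t, R_{t-1,i}/2) + \tfrac{1}{2}\phi_{xx}(t, R_{t-1,i}/2)\bigr)$, which is already the bottom line of the lemma and, by the definitions $\Phi_t(t,x) = \sum_i \phi_t(t, x_i/2)$ and $\Phi_{ii}(t,x) = \tfrac{1}{4}\phi_{xx}(t, x_i/2)$, coincides with the claimed top line $\sum_{t=2}^T \bigl(\Phi_t(t,R_{t-1}) + 2\sum_i \Phi_{ii}(t,R_{t-1})\bigr)$.

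The only non-routine step, which I expect to be the main (modest) obstacle, is to show that the ``first-order'' term from \Lemma{LBIsDiscreteIto} vanishes at every round. Using $\Phi_i(t,x) = \tfrac{1}{2}\phi_x(t,x_i/2)$, I rewrite this term as
\[
\sum_{i=1}^n \phi_x(t, R_{t-1,i}/2)\cdot \tfrac{r_{t,i}}{2} ~=~ \sum_{i=1}^n \Phi_i(t, R_{t-1})\, r_{t,i} ~=~ \iprodt{\dnabla \Phi(t, R_{t-1})}{r_t}.
\]
By the algorithm \Equation{discrete_quantile_algo}, either $\dnabla\Phi(t, R_{t-1}) = 0$—in which case the inner product is trivially zero—or $\dnabla\Phi(t, R_{t-1})$ is a scalar multiple of $p_t$. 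In the latter case, the inner product is a scalar multiple of $\iprodt{p_t}{r_t}$, which vanishes because $\iprodt{p_t}{r_t} = \iprodt{p_t}{g_t} - \iprodt{p_t}{\ones}\iprodt{p_t}{g_t} = 0$. This cancellation exactly mirrors the one in the continuous-time proof of \Lemma{RegretPlayerPotFormula}, and combining it with the bookkeeping above yields both displayed inequalities in the statement.
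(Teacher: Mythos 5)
Your proof is correct and follows essentially the same route as the paper: apply \Lemma{LBIsDiscreteIto} coordinatewise with $x_t = R_{t,i}/2$, sum over $i \in [n]$, and show the first-order term vanishes using the algorithm's definition of $p_t$. Your phrasing of the cancellation step---rewriting the first-order term as $\iprodt{\dnabla\Phi(t,R_{t-1})}{r_t}$ and observing that $\dnabla\Phi(t,R_{t-1})$ is a scalar multiple of $p_t$ with $\iprodt{p_t}{r_t} = 0$---is a slightly tidier way of stating the same substitution the paper carries out explicitly via \Equation{pt_bound}.
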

\begin{proof}
    Note that the equality follows from the definition of $\Phi_{ii}$ and $\Phi_t$.
    Here, we prove that the term on the left is an upper bound on the final term on the right.

    Recall that $\Phi(t, R_t) = \sum_{i=1}^n \phi(t, R_{t,i}/2)$.
    Since the gains are in $[-1, 1]$, it follows that $|(R_{t,i} - R_{t-1,i})/2| \leq 1$.
    From \Lemma{LBIsDiscreteIto}, since $\phi$ is concave in its second argument, we have
    \begin{equation}
    \EquationName{phiLB}
    \begin{aligned}
        \sum_{i=1}^n\paren[\big]{ \phi(T, R_{t,i}/2) - \phi(1, R_{1,i}/2)}
        & ~\geq~ \sum_{t=2}^T \sum_{i=1}^n \phi_x(t, R_{t-1,i}/2) \cdot (R_{t,i} - R_{t-1,i}) \\
        & ~+~ \sum_{t=2}^T \sum_{i=1}^n \left( \frac{1}{2} \phi_{xx}(t, R_{t-1,i}/2) + \phi_t(t, R_{t-1,i}/2) \right).
    \end{aligned}
    \end{equation}
    To prove the lemma, it suffices to show that the first sum on the RHS of \Equation{phiLB} is exactly $0$.
    To that end, fix $t \in [T]$ such that $t \neq 1$.
    We have $R_{t,i} - R_{t-1,i} = g_{t,i} - p_t^\transp g_t$.
    Hence,
    \begin{equation}
        \EquationName{phiLBFirstTerm}
        \sum_{i=1}^n \phi_x(t, R_{t-1,i}/2) \cdot (R_{t,i} - R_{t-1,i})
        ~=~ \sum_{i=1}^n \phi_x(t, R_{t,i}/2) \cdot (g_{t,i} - p_t\transpose g_t).
    \end{equation}
    If $\phi_x(t, R_{t-1,i}/2) = 0 ~\forall i \in [n]$ then the RHS of \eqref{eq:phiLBFirstTerm} is $0$.
    Otherwise, with $p_t$ as defined in \eqref{eq:discrete_quantile_algo},
    \begin{equation}
        \EquationName{pt_bound}
        p_t\transpose g_t
        = \frac{\sum_{i=1}^n \phi_x(t, R_{t-1,i}/2) \cdot g_{t,i}}{\sum_{i=1}^n \phi_x(t, R_{t-1,i}/2)}.
    \end{equation}
    Plugging \eqref{eq:pt_bound} into \eqref{eq:phiLBFirstTerm} gives $\sum_{i=1}^n \phi_x(t, R_{t-1,i}/2) \cdot (R_{t,i} - R_{t-1,i}) = 0$, as required.
\end{proof}
In the continuous setting, we used the key fact that for any $t \in \bR_{\geq 0}$ and $x \in \bR$,
we have $\left( \partial_t + \frac{1}{2} \partial_{xx} \right) \phi(t, x) \geq 0$.
Luckily, the same fact holds in the discrete setting.
\begin{lemma}
    \LemmaName{DiscreteBHE}
    For any $t > 1$ and $x \in \bR$, we have $\phi_t(t,x) + \frac{1}{2} \phi_{xx}(t,x) \geq 0$.
\end{lemma}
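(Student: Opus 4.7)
The plan is to reduce the inequality to a manifestly non-negative Gaussian integral via a Hubbard--Stratonovich trick, and then dispatch the remaining boundary cases by direct estimates. First, from the ODE satisfied by $M_0$ one checks that, for $y \geq 0$, $\partial_s\phi(s,y) = e^{y^2/(2s)}/(2\sqrt s)$ and $\partial_{yy}\phi(s,y) = -e^{y^2/(2s)}/\sqrt s$, while these become $1/(2\sqrt s)$ and $0$ respectively for $y < 0$. Since $\phi$ is $C^1$ in its spatial argument and has a locally bounded (jump) second derivative, I would rewrite the discrete derivatives as Riemann integrals of their continuous counterparts:
\begin{equation*}
\phi_t(t,x) ~=~ \int_{t-1}^t \partial_s\phi(s,x)\,ds, \qquad
\phi_{xx}(t,x) ~=~ \int_{-1}^1 (1-|u|)\,\partial_{yy}\phi(t,x+u)\,du.
\end{equation*}

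Next I introduce the ``untruncated'' companion $\tilde{\phi}_{xx}(t,x) := -\tfrac{1}{\sqrt t}\int_{-1}^1 (1-|u|)\,e^{(x+u)^2/(2t)}\,du$, which equals $\phi_{xx}(t,x)$ when $x \geq 1$ and satisfies $\tilde{\phi}_{xx}(t,x) \leq \phi_{xx}(t,x)$ in general (truncating to $\{x+u > 0\}$ only removes non-positive contributions). Applying the Gaussian identity $e^{a^2/(2b)} = \sqrt{b/(2\pi)}\int_{\bR} e^{az - bz^2/2}\,dz$ inside both integrals and swapping the order of integration, a short computation yields, for $x \geq 0$,
\begin{equation*}
\phi_t(t,x) ~+~ \tfrac12\,\tilde{\phi}_{xx}(t,x)
~=~ \frac{1}{\sqrt{2\pi}}\int_{\bR} e^{xz - tz^2/2}\cdot \frac{e^{z^2/2} - \cosh z}{z^2}\,dz.
\end{equation*}
The Gaussian weight is positive and, since $t > 1$, the integrand decays like $e^{-(t-1)z^2/2}/z^2$ at infinity, so the integral converges absolutely. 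Comparing the Taylor series $e^{z^2/2} = \sum_k z^{2k}/(2^k k!)$ and $\cosh z = \sum_k z^{2k}/(2k)!$, the identity $(2k)! = 2^k k!\,(2k-1)!!$ gives $e^{z^2/2} \geq \cosh z$ termwise, so the second factor is pointwise non-negative. Hence $\phi_t + \tfrac12 \tilde{\phi}_{xx} \geq 0$, and for $x \geq 0$ this upgrades to $\phi_t + \tfrac12 \phi_{xx} \geq 0$.

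The case $x < 0$ I would handle directly. If $x \leq -1$, the points $x-1, x, x+1$ all lie in $\{y \leq 0\}$ where $\phi(t,\cdot) \equiv \sqrt t$, so $\phi_{xx}(t,x) = 0$ while $\phi_t(t,x) = \sqrt t - \sqrt{t-1} > 0$. If $-1 < x < 0$, only $\phi(t,x+1)$ contributes non-trivially, and on the support of $\phi_{xx}$'s integrand one has $(x+u)^2 \leq 1$, giving $|\phi_{xx}(t,x)| \leq e^{1/(2t)}/(2\sqrt t)$. For $t > 1$, $e^{1/(2t)} < e^{1/2} < 2$, so $\tfrac12 |\phi_{xx}(t,x)| < 1/(2\sqrt t) \leq \phi_t(t,x)$, which suffices.

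The crux of the argument---and the step I expect to be the main obstacle---is the Hubbard--Stratonovich computation together with the clean termwise inequality $e^{z^2/2} \geq \cosh z$. The latter is precisely the pointwise domination of the MGF of a standard normal over that of a Rademacher random variable, reflecting that the Gaussian distribution has larger even moments than the Rademacher, and its role here is exactly to promote the continuous sub-solution property of $\phi$ to its discrete counterpart. Without the Gaussian duality, a head-on comparison of the two integrals $\int_{t-1}^t e^{x^2/(2s)}/\sqrt s\,ds$ and $\frac{1}{\sqrt t}\int_{-1}^1 (1-|u|)\,e^{(x+u)^2/(2t)}\,du$ appears to be quite delicate.
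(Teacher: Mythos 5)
Your proof is correct, and it takes a genuinely different route from the one in the paper. The paper reduces $\phi_t(t,x)+\tfrac12\phi_{xx}(t,x)\geq 0$ to $\phi(t,x+1)+\phi(t,x-1)\geq 2\phi(t-1,x)$ and then invokes Lemma~3.10 of \citet{HarveyLPR20a_arxiv} as a black box (the inequality $M_0((x+z)^2/2)+M_0((x-z)^2/2)\geq 2\sqrt{1-z^2}\,M_0(x^2/2(1-z^2))$), handling the truncation and intermediate regimes by a four-way case split on $x\in(-\infty,-1]$, $[-1,0]$, $[0,1]$, $[1,\infty)$. You instead make the argument self-contained: writing both discrete derivatives as integrals of their continuous counterparts, applying $e^{a^2/(2b)}=\sqrt{b/(2\pi)}\int e^{az-bz^2/2}\,dz$, and swapping orders correctly produces
\begin{equation*}
\phi_t(t,x)+\tfrac12\tilde\phi_{xx}(t,x)=\frac{1}{\sqrt{2\pi}}\int_{\bR}e^{xz-tz^2/2}\,\frac{e^{z^2/2}-\cosh z}{z^2}\,dz,
\end{equation*}
and the termwise domination $1/(2^k k!)\geq 1/(2k)!$ finishes the $x\geq 0$ case. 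I verified the intermediate computations ($\int_{t-1}^{t}e^{-sz^2/2}\,ds=\tfrac{2}{z^2}e^{-tz^2/2}(e^{z^2/2}-1)$ and $\int_{-1}^{1}(1-|u|)e^{uz}\,du=\tfrac{2(\cosh z-1)}{z^2}$), the absolute integrability for $t>1$ (so Fubini applies and the final integral converges), the bound $\tilde\phi_{xx}\leq\phi_{xx}$, and your direct estimates for $x\in(-1,0)$ and $x\leq -1$; all hold. What your approach buys is transparency: the discrete backwards-heat inequality is revealed as precisely the statement that the Gaussian MGF dominates the Rademacher MGF, which is conceptually the right reason the discretization works. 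What the paper's approach buys is brevity, at the cost of importing an external lemma. One small polish you might add: state explicitly that the integration-by-parts step behind $\phi_{xx}(t,x)=\int_{-1}^1(1-|u|)\partial_{yy}\phi(t,x+u)\,du$ is valid because $\partial_y\phi(t,\cdot)$ is Lipschitz (hence absolutely continuous), since $\phi(t,\cdot)$ is only $C^1$ across $y=0$, not $C^2$.
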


Before we prove the above lemma, let us combine it together with the results in this section to prove \Theorem{eps_quantile_regret}.

\begin{proofof}[\Theorem{eps_quantile_regret}]
   \Lemma{DiscreteRegretPlayerPotFormula} and \Lemma{DiscreteBHE} imply that $\Phi(T, R_T) \geq \Phi(1, R_1)$ for all $T \geq 1$.
   Note that $\Phi(1, R_1) = \sum_{i=1}^n \phi(1, R_{1,i}/2) \geq n \cdot M_0(1/2) > n\cdot M_0(\gamma^2/2) = 0$.
   The bound on the quantile regret now follows from \Lemma{M0ToRegret}.
\end{proofof}

Finally, let us conclude this section with the proof of \Lemma{DiscreteBHE}.

\begin{proofof}[\Lemma{DiscreteBHE}]
    Recalling the definition of the discrete derivatives (see \Equation{DiscreteDerivDef}),
    we have that
    \[
        \phi_t(t,x) + \frac{1}{2} \phi_{xx}(t,x)
        ~=~ \frac{\phi(t, x+1) + \phi(t, x-1)}{2} - \phi(t-1, x).
    \]
    Hence, it suffices to prove that
    \begin{equation}
        \EquationName{DiscreteBHESufficient}
        \phi(t, x+1) + \phi(t, x-1) \geq 2 \phi(t-1, x).
    \end{equation}
    We consider several cases depending on the value of $x$.
    
    \paragraph{Case 1: $x \geq 1$.}
    In this case, \Equation{DiscreteBHESufficient} is equivalent to
    \begin{equation}
        \EquationName{DiscreteBHECase1}
        \sqrt{t} \cdot M_0\left( \frac{(x+1)^2}{2t} \right)
        ~+~ \sqrt{t} \cdot M_0\left( \frac{(x-1)^2}{2t} \right)
        ~\geq~ 2\sqrt{t-1} \cdot M_0 \left( \frac{x^2}{2(t-1)} \right).
    \end{equation}
    Note that $t > 1$ so all terms are well-defined.
    Rearranging, this is equivalent to
    \[
        M_0\left( \frac{(x+1)^2}{2t} \right)
        ~+~ M_0 \left( \frac{(x-1)^2}{2t} \right)
        ~\geq~ 2\sqrt{1-1/t} \cdot M_0 \left( \frac{x^2}{2t} \right)
    \]
    which follows from \Lemma{M0_ineq} by setting $x$ and $z$ in \Lemma{M0_ineq} to $x/\sqrt{t}$ and $1/\sqrt{t}$, respectively.
    
    \paragraph{Case 2: $x \in [0,1]$.}
    Note that \Equation{DiscreteBHECase1} holds for all $x \in \bR$ (because \Lemma{M0_ineq} holds for all $x \in \bR$).
    Next, observe that $\phi(t,x) \geq M_0(x^2/2t)$ with equality whenever $x \geq 0$ (this is because $M_0(x^2/2t)$ is increasing on the interval $(-\infty, 0]$ while $\phi(t,x) = \phi(t,0)$ due to the truncation defined in \Equation{phi_m0_def}).
    So the LHS of \Equation{DiscreteBHECase1} is upper bounded by the LHS of \Equation{DiscreteBHESufficient} and the RHS of \Equation{DiscreteBHECase1} is equal to the RHS of \Equation{DiscreteBHESufficient}.
    So \Equation{DiscreteBHECase1} holds in this case as well.
    
    \paragraph{Case 3: $x \in [-1, 0]$.}
    Note that $\phi(t,x+1)$ is the only term in \Equation{DiscreteBHESufficient} that is not constant on the interval $[-1, 0]$.
    Further, note that \Equation{DiscreteBHESufficient} holds for $x = 0$ by the previous case and it holds for $x = -1$ because the LHS is $2\sqrt{t}$ and the RHS is $2\sqrt{t-1}$.
    So the inequality holds since $\phi$ is concave in its second argument (\Lemma{PropertiesOfPhi}).
    
    \paragraph{Case 4: $x \leq -1$.}
    Due to the truncation in \Equation{phi_m0_def}, \Equation{DiscreteBHESufficient} becomes $2\sqrt{t} \geq 2\sqrt{t-1}$.
\end{proofof}

\vspace*{-14pt}

\section{Minimax Optimal Continuous Regret with Independent Experts}
\SectionName{IndependentExperts}

We have shown that an algorithm based on the hypergeometric potential~\eqref{eq:hyperg_pot} suffers no more than \(2 \sqrt{2 t \ln n}\) at any round \(t\) (\Theorem{LooseM0Regret}). Similarly,  we saw that  the \emph{anytime} continuous MWU suffers at most \(2\sqrt{t \ln n}\) regret at any round \(t\) (\Theorem{ContinuousMWURegret}) while its fixed-time variant suffer no more than \(\sqrt{2t \ln n}\). A natural question is whether there are anytime algorithms in the continuous setting that enjoy regret almost surely smaller than \(2\sqrt{t \ln n}\). 

In the discrete version of the problem, we know that no algorithm can guarantee regret smaller than \(\sqrt{2 t \ln n}\), although there is no known anytime algorithm that attains the lower-bound. This lower-bound comes from the fact that the expected regret against an adversary that assigns \(\pm 1\) gains uniformly at random---or \emph{uniformly random adversary}, for short---on the experts gets arbitrarily close to \(\sqrt{2 t \ln n}\) as \(n\) grows, regardless of the player's strategy. Interestingly, no tighter lower-bounds are known for \emph{anytime} algorithms, that is, algorithms that do not know the length of the game. Furthermore, for two experts (\(n = 2\)) we know that \emph{there is} a separation between the minimax optimal regret in the fixed-time and anytime settings, and both lower bounds arise from the expected regret against the uniformly random adversary! Namely, for fixed-time algorithms the best possible regret is \(\sqrt{\frac{2T}{\pi}}\)~\citep{Cover67a} while for anytime algorithms the best possible regret is~\(\lambda(0)\sqrt{T}\)~\citep{HarveyLPR20a}, where \(\lambda(0) \approx 1.3069 > \sqrt{2/\pi} \approx 0.798\). In both cases the lower-bond arises from the expected regret against the uniformly random adversary (the latter with a suitably chosen stopping-time). Yet, to this day we do not know if the minimax regret attained by fixed-time and anytime algorithms are different as \(n\) tends to infinity.


Intriguingly, we show that in continuous time with \emph{independent experts}---that is, when each \((G_i(t))_{t \geq 0}\) is an independent Brownian motion---there is an anytime algorithm whose regret is at most \(\lambda(3n) \sqrt{t} \approx \sqrt{2t \ln n} \) for all \(t \geq 0\). Therefore, if there is a gap between the best regret attained by fixed-time and anytime algorithms, this shows that the anytime lower-bound does not arise from independent experts. Furthermore, in \Proposition{ContinuousLB} we show a matching lower-bound, just as in the discrete-time case.
We conjecture that this algorithm can be discretized, and $\sqrt{2t \ln n}$ anytime regret bound that holds with high probability against independent experts is possible in discrete time.


\subsection{Algorithm}
The player strategy we use in this section is similar to the one from the last section, with a crucial difference on \(\Phi\): we do not divide the second argument by 2. Namely, define
\begin{equation}
    \label{eq:hyperg_pot_2}
      \Phi(t, x) \coloneqq \sum_{i = 1}^n \phi\paren{t, x_i}
     \qquad \forall t \geq 0, \forall x \in \Reals^n,
\end{equation}
and we set \((p(t))_{t \geq 0}\) as in~\eqref{eq:potential_player} but using the above potential.
We can still use \Lemma{RegretPlayerPotFormula} to get a formula for \(\Phi(t, R(t))\). However, the lower-bound given in~\eqref{eq:PotLowerBound} is not of much use anymore since we do not have \((\partial_t + 2 \sum_{i = 1}^n\partial_{ii})\Phi(t,x) \geq 0\) anymore. Thus, we need to directly analyze the formula in~\eqref{eq:generalPlayerPotFormula}. It turns out that when the instantaneous correlation matrix \(\Sigma(t)\) is the identity matrix for all \(t \geq 0\), which happens when we have independent experts, we can show that this term does not become too negative. This term will be denoted
\begin{equation*}
    \sBHT(t) \coloneqq \partial_t \Phi(t,R(t)) + \frac{1}{2}\sum_{i = 1}^n \partial_{ii} \Phi(t,R(t)) \iprodt{(e_i - p(t))}{(e_i - p(t))}.
\end{equation*}

 That is, the sBHT is the integrand that appears in \Lemma{RegretPlayerPotFormula}. Intuitively, since \(\Phi\) satisfies the backwards-heat inequality, we should expect \(\sBHT(t)\) to not be too negative. That is exactly what we show in the next lemma.
 A complete proof is in \Section{CoolConjecture},
 but we give a brief sketch here.
\begin{lemma}
    \LemmaName{CoolConjecture}
    Let \(\Phi\) be as in~\eqref{eq:hyperg_pot_2}. Then \(\sBHT(t) \geq (2 - n)/2\sqrt{t}\) for all \(t > 0\).
\end{lemma}
\begin{proofsketch}
    For simplicity, fix \(t > 0\) and assume \(R_1(t) \geq R_2(t) \geq \dotsm \geq R_n(t)\). Moreover, explicitly write the dependency of the sBHT on the regret vector by writing \(\sBHT(t, R(t))\). The first step is to show that forcefully setting $R_n(t)$ to zero, denote such a vector by \(\tilde{R}(t)\), can only decrease the sBHT. 
That is, \(\sBHT(t, R(t)) \geq \sBHT(t, \tilde{R}(t))\). 
Then, we show that \(\sBHT(t, \tilde{R}(t)) + 1/2\sqrt{t}\) is equal to the sBHT restricted to the experts \(1, \dotsc, n - 1\). Then, by induction we get that \(\sBHT(t, R(t)) + (n -1)/2\sqrt{t}\) is greater than the sBHT for a single expert, which one can verify that is at least \(1/2\sqrt{t}\), concluding the proof.
\end{proofsketch}


Finally, the above lemma together with the expression for \(\Phi(t, R(t))\) given by~\eqref{eq:generalPlayerPotFormula} yields the desired regret bound when the instantaneous covariance matrix \(\Sigma(t)\) is always the identity matrix.
\begin{theorem}
    \TheoremName{CtsOptRegret}
    Let \(\Phi\) be defined as in~\eqref{eq:hyperg_pot_2} and let \((p(t))_{t \geq 0}\) be as in \eqref{eq:potential_player}. Suppose \(\Sigma(t) = I ~\forall t \geq 0\). Then, almost surely for all \(t \geq 0\) we have \(\ContRegret(t) \leq \lambda(3n -1)\sqrt{t} \leq  \sqrt{2 t \ln n} + 6\sqrt{t}\).
\end{theorem}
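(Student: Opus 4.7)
The plan is to mirror the approach that worked for \Theorem{LooseM0Regret} (via \Lemma{RegretPlayerPotFormula}, \Lemma{CoolConjecture}, and the inversion argument behind \Lemma{M0ToRegret}), but now using the potential defined in~\eqref{eq:hyperg_pot_2} and leveraging the hypothesis $\Sigma(t)=I$. The key structural difference from \Section{QuantileRegretM0} is that the lower bound on $\Phi(T, R(T))$ will now degrade linearly in $n$ rather than being identically zero, so this degradation must be absorbed into the final regret constant.

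First I would apply~\eqref{eq:generalPlayerPotFormula} to obtain
\begin{equation*}
\Phi(T, R(T)) - \Phi(0,0) ~=~ \int_0^T \sBHT(t)\,dt.
\end{equation*}
Because $\Phi$ is separable and $\Sigma(t)=I$, the integrand reduces exactly to the $\sBHT(t)$ of \Lemma{CoolConjecture}. I interpret $\Phi(0,0)$ as $\lim_{t\to 0^+} n\sqrt{t}\cdot M_0(0) = 0$; technical issues around non-differentiability at $\alpha=0$ can be handled by the same smooth-truncation argument used in \Appendix{CtsPotentialNonNegative}. \Lemma{CoolConjecture} then gives
\begin{equation*}
\Phi(T, R(T)) ~\geq~ \int_0^T \frac{2-n}{\sqrt{t}}\,dt ~=~ (4-2n)\sqrt{T}
\end{equation*}
almost surely for all $T\geq 0$.

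To convert this into a bound on $\ContRegret(T)$, I would follow the argument of \Lemma{M0ToRegret}, now adapted to the potential without the division by $2$. Because $M_0(u)\leq M_0(0)=1$ for $u\geq 0$, we have $\phi(T,\alpha)\leq\sqrt{T}$ for every $\alpha\in\Reals$. Letting $i^\star$ be any index attaining $\max_i R_i(T)=\ContRegret(T)$, this implies
\begin{equation*}
\phi(T,\ContRegret(T)) ~=~ \Phi(T, R(T)) - \sum_{i\neq i^\star}\phi(T, R_i(T)) ~\geq~ (4-2n)\sqrt{T} - (n-1)\sqrt{T} ~=~ (5-3n)\sqrt{T}.
\end{equation*}
Unpacking $\phi$ gives $M_0([\ContRegret(T)]_+^2/(2T))\geq 5-3n$, and \Definition{lambda} together with the monotonicity of $M_0$ on $\Reals_{\geq 0}$ then yields $\ContRegret(T)\leq \lambda(3n-5)\sqrt{T}\leq \lambda(3n-1)\sqrt{T}$ (the second inequality uses that $\lambda$ is increasing). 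The final inequality $\lambda(3n-1)\leq \sqrt{2\ln n}+6$ follows from the bound $\lambda(\gamma)\leq 3+\sqrt{2\ln(\gamma+1)}$ invoked in the proof of \Lemma{M0ToRegret}, applied with $\gamma=3n-1$ and the crude inequality $\sqrt{2\ln 3}\leq 3$.

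The main obstacle is upgrading the almost-sure equality from \Ito's formula (which holds for each fixed $T$) to an almost-sure-for-all-$T$ statement; this requires both sides of~\eqref{eq:generalPlayerPotFormula} to be continuous in $T$, which is standard given that the integrand and the regret process are continuous. A secondary technicality is the non-differentiability of $\phi$ along $\{\alpha=0\}$, but the smooth-truncation workaround cited in \Appendix{CtsPotentialNonNegative} applies verbatim since the truncation only makes $(\partial_t+\frac{1}{2}\partial_{xx})\phi$ no smaller, which is the direction of the inequality we need.
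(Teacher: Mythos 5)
Your proposal is correct and follows essentially the same route as the paper: invoke \Lemma{RegretPlayerPotFormula} with $\Sigma(t)=I$ so that the integrand is exactly $\sBHT(t)$, lower-bound via \Lemma{CoolConjecture}, then translate $\Phi(T,R(T))\geq (4-2n)\sqrt T$ into a regret bound by the inversion argument of \Lemma{M0ToRegret} (the paper rounds down to $-2n\sqrt T$ before inverting, which is why it lands directly on $\lambda(3n-1)$ rather than your slightly sharper $\lambda(3n-5)$). The handling of $\Phi(0,0)$ and the final $\lambda$-to-$\sqrt{2\ln n}$ conversion via \Lemma{lambdabound} also match the paper's treatment.
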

\begin{fakeproof}
    Let \(T \geq 0\). By Lemma~\ref{lem:RegretPlayerPotFormula} we have\footnote{In fact, we should be careful with \(\Phi(0,0)\) as we were in the proof of  \Theorem{LooseM0Regret} in \Appendix{QuantileRegretM0}. Since the exact same trick works in this case as well, we take \(\Phi(0,0) \coloneqq 0\) for the sake of simplicity.}
    \begin{equation}
        \label{eq:indep_exp_1}
        \Phi(T, R(T)) 
        = \int_0^T \paren[\big]{\partial_t\Phi(t, R(t)) + \frac{1}{2}\sum_{i,j \in [n]} \partial_{ij} \Phi(t, R(t)) (e_i - p(t))^\transp \Sigma(t) (e_j - p(t))} \diff t.
    \end{equation}
    Since \(\Sigma(t) = I\) for all \(t \geq 0\), the above integrand is exactly \(\sBHT(t)\). Therefore, by \Lemma{CoolConjecture},
    \begin{equation*}
        \eqref{eq:indep_exp_1} = \int_0^T \sBHT(t) \diff t
        \geq \int_0^T \frac{2 - n}{2\sqrt{t}} \diff t
        = (2 - n)\sqrt{T} \geq -n \sqrt{T}.
    \end{equation*}
    Finally, we translate this lower bound $\Phi(T, R(T))$ to an upper bound on $\ContRegret(T)$.
    By an easy modification of \Lemma{M0ToRegret} for \(\eps = 1/n\) and non-zero lower-bounds on \(\Phi(T, R(T))\), we get
    \begin{equation*}
        \ContRegret(T) \leq \lambda(2n -1)\sqrt{T} \leq \sqrt{2t \ln(2n)} + 4\sqrt{T} \leq \sqrt{2 t \ln n} + 6\sqrt{T}. \qedhere
    \end{equation*}
\end{fakeproof}

\vspace*{-14pt}

\subsection{A matching lower-bound}
The next proposition shows that, against independent experts, the \emph{expected} regret is always roughly $\sqrt{2t \ln n}$.
This matches the discrete-time lower bound and shows that \Theorem{CtsOptRegret} is tight.
The proof is a straightforward modification of the analogous discrete-time result \cite[Theorem 3.7]{Cesa-BianchiL06a}.
It is important to note that \Theorem{CtsOptRegret} is considerably stronger than \Proposition{ContinuousLB}: the latter bounds the expectation separately for each $t$, whereas the former gives a bound that holds almost surely for all $t$.

\begin{proposition}
    \PropositionName{ContinuousLB}
    Assume \(w^{(i)} = e_i\;\) for each \(i \in [n]\). Then, for any player strategy,
    \begin{equation*}
        \sqrt{2 t \ln n}(1 - o(1)) ~\leq~ \expect{\ContRegret(t)} ~\leq~ \sqrt{2 t \ln n}
        \qquad\forall t>0.
    \end{equation*}
\end{proposition}
\begin{proof}
    Since the functions in the stochastic differential equations in \Section{ContinuousExperts} are at least left-continuous and bounded, the stochastic integrals are well-defined, vanish at time \(0\), and are martingales~\citep[Def.~IV.2.1 and~IV.2.3]{RevuzY99a}. In particular, \(\expect{A(t)} = 0\) for all \(t \geq 0\).  For each \(i \in [n]\) we have \(G_i(t) = B_i(t)\) since \(w^{(i)}(t) = e_i\). Thus, each gain process is an independent Brownian motion. Therefore,
    \begin{equation*}
        \expect{\ContRegret(t)} = \expect{\max_{i \in [n]} B_i(t)} = \sqrt{2 T \ln n}(1 - o(1)),
    \end{equation*}
    where in the last equation we used the well-known asymptotics for the maximum of \(n\) Gaussian random variables (e.g.~\citealp[Exercise~2.11]{Wainwright19} or~\citealp[Theorem~3]{OrabonaP15a}) and the fact that for any \(i \in [n]\) the random variable \(B_i(t)\) follows a Gaussian distribution with mean zero and variance \(t\).
\end{proof}

\subsection{\protect Proof of \Lemma{CoolConjecture}}
\SectionName{CoolConjecture}

We will prove the lower-bound on the sBHT evaluated at any point \(x \in \bR^n\) instead of only when it is evaluated at \(R(t)\). Thus, we shall define some notation to analyze the sBHT evaluated at arbitrary points. Namely, let \(x \in \Reals^n\). Define
\begin{equation*}
    p(t,x) \coloneqq
    \frac{1}{\iprodt{\ones}{\nabla_{x} \Phi(t, x)}} \nabla_{x} \Phi(t, x),
     \qquad \forall t \geq 0,
\end{equation*}
setting \(p(t,x) = (1/n) \ones\) if $\iprodt{\ones}{\nabla_{x} \Phi(t, x)} = 0$. Moreover, define
\begin{equation*}
    \sBHT(t,x) \coloneqq \partial_t \Phi(t,x) + \frac{1}{2}\sum_{i = 1}^n \partial_{ii} \Phi(t,x) \iprodt{(e_i - p(t,x))}{(e_i - p(t,x))}.
\end{equation*}
Note now that we may assume \(x \geq 0\). Indeed, assume \(x_i < 0\) for some \(x_i\) and take \(t > 0\). Then, due to the truncation in the definition of \(\phi\), we have \(\partial_{ii} \Phi(t,x) = \partial_{xx} \phi(t,x_i) = 0\). Since \(\partial_t \phi(t,x_i) = \frac{1}{\sqrt{2t}} \exp(\frac{x_i^2}{2t}) \geq 0\), we conclude that \(\sBHT(t, x) \geq \sBHT(t, x - e_i x_i)\), that is, setting the \(i\)-th entry of \(x\) to zero can only decrease the value of the \(\sBHT\).

Let \(T > 0\). For each \(i \in [n]\) define
\begin{align*}
    q_i &\coloneqq e^{x_i^2/2T},&  Q_i&\coloneqq \erfi\left(\frac{x_i}{\sqrt{2T}}\right),\\
    \Theta &\coloneqq \sum_{j = 1}^n Q_j,&  p_i&\coloneqq \frac{Q_i}{\Theta} \equiv p(T, x)_i.
\end{align*}
Then, evaluating the derivatives according to \Lemma{PropertiesOfPhi} and using that \(p_i \propto \partial_x \phi(t, x_i) \propto Q_i\) we have
\begin{align*}
    \sBHT(T,x)
    &= 
    \frac{1}{2\sqrt{T}}\sum_{i = 1}^n e^{x_i^2/2t}\left(1 - (e_i - p)^{\transp} (e_i - p)\right)
    \\
    &= 
    \frac{1}{2\sqrt{T}}\sum_{i = 1}^n e^{x_i^2/2t}\left(2 p_i - p^{\transp} p\right)
    \\
    &= 
    \frac{1}{2 \Theta^2 \sqrt{T}}\sum_{i = 1}^n q_i\left(2 \Theta Q_i - Q^\transp Q\right).
\end{align*}
Let us now make a couple of simplifying assumptions:
\begin{itemize}
    \item Assume \(x_1 \geq x_2 \geq \dotsm \geq x_n\);
    \item We may assume without loss of generality that \(T = 1\) since, from the above calculations, one can see that \(\sBHT(T, x) = \sqrt{T}\sBHT(1, x/\sqrt{T})\).
\end{itemize}
Now, it suffices to show that
\begin{equation}
    \label{eq:derivative_claim}
    2 \partial_{x_n} (\Theta^2 \sBHT(1,x)) \geq 0  
\end{equation}
    to prove the desired claim by induction. Indeed, note that if \(\partial_{x_n} (\Theta^2 \sBHT(1,x)) \geq 0\), then decreasing \(x_{n}\) all the way to zero only decreases the value of \(\Theta^2 \sBHT(1,x)\). More specifically, let \(x' \coloneqq x - x_n e_n\) and define \(\Theta', Q'\), and $q'$ accordingly (that is, substituting \(x\) by \(x'\) in the original definition of these terms). In this case, we have \(Q_n' = 0\) and \(q_n' = 1\), yielding
    \begin{align*}
        \Theta^2 \sBHT(1, x)
        &\geq (\Theta')^2 \sBHT(1, x')
        = \frac{1}{2}\sum_{i = 1}^n q_i' (2 \Theta Q_i' - \iprodt{Q'}{Q'})\\
        &= \frac{1}{2}\sum_{i < n} q_i' (2 \Theta Q_i' - \iprodt{Q'}{Q'})
        - \frac{\iprodt{Q'}{Q'}}{2}.
    \end{align*}
    Dividing everything by \((\Theta')^2\), we get
    \begin{align*}
    \left(\frac{\Theta^{}}{\Theta^\prime}\right)^2 \sBHT(1, x)
    &\geq \frac{1}{2 (\Theta')^2}\sum_{i < n} q_i' (2 \Theta Q_i' - \iprodt{Q'}{Q'})
    - \frac{\iprodt{p'}{p'}}{2}
    \\
    &\geq \frac{1}{2(\Theta')^2}\sum_{i < n} q_i' (2 \Theta Q_i' - \iprodt{Q'}{Q'})
    - \frac{1}{2}.
    \end{align*}
    Note that if \(\sBHT(1,x) \geq 0\), then the lower-bound we are trying to prove in the statement of \Lemma{CoolConjecture} holds trivially, thus assume otherwise. Then \(\sBHT(1,x) \geq \left(\frac{\Theta^{}}{\Theta^\prime}\right)^2 \sBHT(1, x)\) since \(\Theta \geq \Theta'\). Finally, the last summation in the above calculation is exactly the sBHT with \(n-1\) experts at time \(1\). In the case of only 1 expert, one can easily check that the sBHT is always at least \(1/2\). So by induction we have \(\sBHT(1,x) \geq (2 - n)/2\), as we wanted to show. Thus, let us finally proceed with the proof of~\eqref{eq:derivative_claim}.

First, let us summarize the properties on the partial derivatives:
\begin{align*}
    \partial_{x_n} Q_n &= \sqrt{\frac{2}{\pi}} q_n,
    & \partial_{x_n} q_n &= x_n q_n, \\
    \partial_{x_n} \Theta &= \partial_{x_n} Q_n = \sqrt{\frac{2}{\pi}} q_n,
    & \partial_{x_n} (Q^{\transp}Q) &= 2 \sqrt{\frac{2}{\pi}} Q_n q_n.
\end{align*}
Then,
\begin{align}
    &2 \partial x_n(\Theta^2 \sBHT(1,x)) \nonumber
    \\
    =& \sum_{i = 1}^n \partial_{x_n}(q_i(2 Q_i \Theta - Q^{\transp}Q))\nonumber
    \\
    =& \sum_{i < n} q_i(2 Q_i \partial_{x_n} \Theta - \partial_{x_n}(Q^{\transp}Q) )
    + (\partial_{x_n}q_n) \cdot (q_n(2 Q_n \Theta - Q^{\transp}Q))
    + q_n \cdot     \partial_{x_n}(2 Q_n \Theta - Q^{\transp}Q)
    \nonumber
    \\
    =& \sum_{i < n} 2 \sqrt{\frac{2}{\pi}}q_i q_n(Q_i - Q_n)
    + x_n q_n(2 Q_n \Theta - Q^{\transp}Q)
    + q_n (2 \Theta \partial_{x_n}(Q_n) +
    2 Q_n \partial_{x_n}(\Theta)  - \partial_{x_n}(Q^{\transp}Q))
    \nonumber
    \\
    =& \sum_{i < n} 2 \sqrt{\frac{2}{\pi}}q_i q_n(Q_i - Q_n)
    + x_n q_n(2 Q_n \Theta - Q^{\transp}Q)
    + \underbrace{2 \sqrt{\frac{2}{\pi}}q_n^2 \Theta}_{\geq 0}.
    \label{eq:easy_cool_conj_1}
\end{align}
For the second term in~\eqref{eq:easy_cool_conj_1}, since \(Q_i \geq Q_n\) for any \(i \in [n]\) we have
\begin{align*}
    x_n q_n(2 Q_n \Theta - Q^{\transp}Q)
    = x_n q_n \sum_{i = 1}^n(2 Q_n Q_i - Q_i^2)
    &= x_n q_n \sum_{i = 1}^nQ_i(2 Q_n - Q_i)\\
    &\geq x_n q_n \sum_{i < n}Q_i(2 Q_n - Q_i).
\end{align*}
Thus,
\begin{align*}
    \eqref{eq:easy_cool_conj_1}
    \geq \sum_{i < n}
    \Bigg(
        2 \sqrt{\frac{2}{\pi}} q_i q_n(Q_i - Q_n)    
        + x_n q_n Q_i(2 Q_n - Q_i)
    \Bigg).
\end{align*}
Since \(Q_i - Q_n \geq 0\), if \(2 Q_n - Q_i \geq 0\) we are done. Assume otherwise. The next lemma, which relies in a classical bound on \(\erfi\) \citep[Section~7.8]{OlverLBC10a}, will be crucial for the rest of the proof.
\begin{lemma}
    \LemmaName{erfiBound}
    For any \(z > 0\), we have
    \begin{equation}
        \label{eq:erfi_ineq}
        \frac{\sqrt{\pi}}{2} \erfi(z)
        = \int_0^z e^{t^2} \diff t
        < \frac{e^{z^2} - 1}{z}.
    \end{equation}
    In particular, 
    \begin{equation*}
        Q_i < 2 \sqrt{\frac{2}{\pi}} \cdot\frac{q_i-1}{x_i}, \qquad \forall i \in [n].
    \end{equation*}
\end{lemma}
\begin{fakeproof}
    The inequality from~\eqref{eq:erfi_ineq} can be found in~\citet[Section~7.8]{OlverLBC10a}. For the second inequality, note that
    \begin{equation*}
        Q_i = \erfi\Big(\frac{x_i}{\sqrt{2}}\Big)
        \stackrel{\eqref{eq:erfi_ineq}}{<} \frac{2}{\sqrt{\pi}} \cdot \frac{e^{x_i^2/2} -1}{x_i/\sqrt{2}}
        =   2 \sqrt{\frac{2}{\pi}} \cdot \frac{q_i-1}{x_i}. \qedhere
    \end{equation*}
\end{fakeproof}
The above lemma together with \(2 Q_n - Q_i \leq 0\) implies, for each \(i \in [n]\),
\begin{equation*}
    x_n q_n Q_i (2 Q_n - Q_i)
    \geq \underbrace{\frac{x_n}{x_i}}_{\leq 1} q_n 2 \sqrt{\frac{2}{\pi}}\underbrace{(q_i - 1)}_{\leq q_i}(2 Q_n - Q_i) \geq q_n 2 \sqrt{\frac{2}{\pi}} q_i (2 Q_n - Q_i).
\end{equation*}
Thus,
\begin{align*}
    &\sum_{i < n}
    \Bigg(
        2 \sqrt{\frac{2}{\pi}} q_i q_n(Q_i - Q_n)    
        + x_n q_n Q_i(2 Q_n - Q_i)
    \Bigg)
    \\
    \geq
    &\Bigg(
        2 \sqrt{\frac{2}{\pi}} q_i q_n(Q_i - Q_n)    
        + q_n 2 \sqrt{\frac{2}{\pi}} q_i (2 Q_n - Q_i)
    \Bigg)
    = 2 \sqrt{\frac{2}{\pi}} q_n q_i Q_n \geq 0.
\end{align*}
This completes the proof of~\eqref{eq:derivative_claim} and, thus, of \Lemma{CoolConjecture}.


\acks{We would like to thank some anonymous reviewers for their feedback and for pointing out the connection with the work of~\citet{ZhangCP22}.}

\newpage

\appendix

\section{Properties of the Confluent Hypergeometric Function}
\AppendixName{M0Properties}

In this section we outline some of the main properties of the confluent hypergeometric function \(M_0\) that we use throughout the paper. Many of the properties we use in this section come from \citet[Section~2.6]{HarveyLPR20a_arxiv}.

\begin{fact}[{\citealp[Facts~2.4, 2.5, and 2.6]{HarveyLPR20a_arxiv}}]
    \label{fact:m0_properties}
    For any \(x \in \Reals\) we have
    \begin{enumerate}[(i)]
        \item \(M_0'(x) = - \frac{\sqrt{\pi}}{2 \sqrt{x}} \erfi(\sqrt{x})\)
        \item $M_0(x)$ is strictly decreasing and concave on \([0,\infty)\)
    \end{enumerate}
\end{fact}

From the above facts together with \(M_0(0) = 1\) shows us that \(M_0\) is strictly decreasing on \([0, +\infty)\) and its image over this domain is \((-\infty,1]\) (since its derivative is negative and strictly decreasing). Furthermore, the above properties of \(M_0\) allow us to derive many properties about the function \(\phi(t,x) = \sqrt{t} \cdot M_0([x]_+^2/2t)\), as we show in the next lemma.

\begin{lemma}
    \LemmaName{PropertiesOfPhi}
    Let \(t \in \Reals_{\geq 0}\) and \(x \in \Reals\). Then
    \begin{enumerate}[(i)]
        \item \(\phi(t,x)\) is concave and non-increasing in \(x\); \label{item:phi_1}
        \item $\partial_x \phi(t,x) = - \sqrt{\frac{\pi}{2}}\erfi(x/\sqrt{2t})$ for \(x > 0\); \label{item:phi_2}
        \item \(\partial_{xx} \phi(t,x) = -\frac{1}{\sqrt{t}} \exp(x^2/2t)\) for \(x > 0\); \label{item:phi_3}
        \item \(\partial_t \phi(t,x) = \frac{1}{2 \sqrt{t}} \exp([x]_+^2/2t)\);
        \label{item:phi_4}
    \end{enumerate}
\end{lemma}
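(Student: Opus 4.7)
The approach is straightforward calculus, applying the chain rule together with the properties of $M_0$ summarized in Fact~\ref{fact:m0_properties}. Since $\phi(t,x) = \sqrt{t}\, M_0([x]_+^2/(2t))$ is piecewise-defined through the truncation, I would split the analysis into the regions $x>0$ and $x\le 0$, and glue them together at $x=0$ using the convention $\partial_{xx}\phi(t,0) := \lim_{\eps \to 0}\partial_{xx}\phi(t,\eps)$ adopted in the paper.

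First I would establish \eqref{item:phi_2}, \eqref{item:phi_3}, \eqref{item:phi_4} on the open half-line $x>0$, where $\phi(t,x) = \sqrt{t}\, M_0(x^2/(2t))$ is smooth. For (ii) the chain rule gives $\partial_x\phi(t,x) = \sqrt{t}\, M_0'(x^2/(2t))\cdot(x/t)$; substituting the formula for $M_0'$ from Fact~\ref{fact:m0_properties}, namely $M_0'(\alpha) = -\tfrac{\sqrt{\pi}}{2\sqrt{\alpha}}\erfi(\sqrt{\alpha})$, and simplifying the resulting factors of $x$ and $\sqrt{2t}$ yields the claimed $-\sqrt{\pi/2}\,\erfi(x/\sqrt{2t})$. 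Differentiating this once more in $x$ and using $\erfi'(y) = \tfrac{2}{\sqrt{\pi}}e^{y^2}$ gives (iii). For (iv) the time derivative produces two contributions, one from the prefactor $\sqrt{t}$ and one from the $1/t$ inside the argument of $M_0$; after expanding via the definitions of $M_0$ and $M_0'$, the two $\erfi$-containing terms cancel exactly and only $\tfrac{1}{2\sqrt{t}}\exp(x^2/(2t))$ survives. This is in fact the backward heat equation $(\partial_t + \tfrac12\partial_{xx})\phi = 0$ that was promised earlier in the paper, and it is the one nontrivial computation in the proof.

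Next I would handle $x<0$, where $\phi(t,x) = \sqrt{t}\,M_0(0) = \sqrt{t}$ is constant in $x$. Thus $\partial_x\phi = \partial_{xx}\phi = 0$ on this region, and $\partial_t\phi = 1/(2\sqrt{t})$, which matches the formula in (iv) since $[x]_+ = 0$ there. Taking limits as $x\to 0$ from both sides, together with the stated convention at $x=0$, extends (iii) and (iv) to all of $\mathbb{R}$ (with the understanding that (ii) and (iii) in the statement are restricted to $x>0$).

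Finally I would deduce \eqref{item:phi_1}. Non-increasingness is immediate: $\phi(t,\cdot)$ is constant on $(-\infty,0]$, and by (ii) we have $\partial_x\phi(t,x) = -\sqrt{\pi/2}\,\erfi(x/\sqrt{2t}) < 0$ for $x>0$ since $\erfi$ is positive on $(0,\infty)$. For concavity, note that $\phi(t,\cdot)$ is $C^1$ at $x=0$ because $\partial_x\phi(t,0^+) = 0 = \partial_x\phi(t,0^-)$, while $\partial_{xx}\phi(t,x)$ is zero for $x<0$ and strictly negative for $x>0$ by (iii). Hence $\partial_x\phi(t,\cdot)$ is continuous and non-increasing on $\mathbb{R}$, giving concavity globally. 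The only mild obstacle is bookkeeping the cancellation in the computation of $\partial_t\phi$; everything else reduces to routine chain rule together with the standard identities for $M_0$ and $\erfi$.
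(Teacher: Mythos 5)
Your proposal is correct and follows essentially the same route as the paper: chain rule calculations using the formula for $M_0'$, with (iii) obtained by differentiating (ii) and (iv) by a direct computation in which the $\erfi$-terms cancel. You spell out the $x\le 0$ case, the gluing at $x=0$, and the deduction of (i) from (ii) and (iii) a bit more explicitly than the paper (which reads (i) off Fact~\ref{fact:m0_properties} directly via the composition of a concave non-increasing function with a convex non-decreasing one), but the substance is the same.
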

\begin{fakeproof}
    Properties~\eqref{item:phi_1} and~\eqref{item:phi_2} follow directly from Fact~\ref{fact:m0_properties} and the chain rule. Property~\eqref{item:phi_3} follows from the fundamental theorem of calculus together with the chain rule since \(\sqrt{\frac{\pi}{2}}\erfi(x) = \sqrt{2}\int_0^x e^{z^2} \diff z\). For~\eqref{item:phi_4}, assume for notational simplicity only that \(x > 0\). Then,
    \begin{align*}
        \partial_t\paren*{\sqrt{t} M_0\paren*{\frac{x^2}{2t}}}
        &= \frac{1}{2\sqrt{t}} M_0\paren*{\frac{x^2}{2t}} - \frac{x^2}{2 t^{3/2}} M_0'\paren*{\frac{x^2}{2t}}
        \\
        &= \frac{1}{2\sqrt{t}} \paren*{ \exp\paren*{\frac{x^2}{2t}} - \sqrt{\pi} \frac{x}{\sqrt{2t}} \erfi\paren*{\frac{x}{\sqrt{2t}}} - \frac{x^2}{t} \frac{\sqrt{\pi 2 t}}{2 x} \erfi\paren*{\frac{x}{\sqrt{2t}}} 
        }
        \\
        &= \frac{1}{2\sqrt{t}} \exp\paren*{\frac{x^2}{2t}}.
        \qedhere
    \end{align*}
\end{fakeproof}

The next lemma gives an upper-bound to \(M_0(x^2/2)\) for \(x \geq 0\). Beyond other uses, this will be useful to upper-bound the regret bounds we derive with better-known functions.

\begin{lemma}
\LemmaName{expx2}
For every \(x \geq 0\),
\[
1-M_0(x^2/2)
    ~\geq~ \frac{\exp(x^2/2)}{x^2+1+2/x^2} \qquad \forall x \geq 0. \]
\end{lemma}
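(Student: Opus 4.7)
The plan is to reduce the inequality to an elementary power series comparison. First, I would handle the degenerate point $x = 0$ separately (both sides tend to $0$ because $2/x^2 \to \infty$), and for $x > 0$ clear denominators: multiplying both sides by $x^2 + 1 + 2/x^2$ and then substituting $y = x^2/2$ (so that $x^2 = 2y$ and $x^2 + 1 + 2/x^2 = (2y^2 + y + 1)/y$) turns the desired inequality into the equivalent statement
\[
  (2y^2 + y + 1)\,\bigl(1 - M_0(y)\bigr) \;\geq\; y\,e^y \qquad \text{for all } y \geq 0.
\]

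The next step is to derive a clean Taylor expansion of $1 - M_0(y)$. Starting from $M_0(y) = e^y - \sqrt{\pi y}\,\erfi(\sqrt{y})$ and the standard series $\erfi(z) = \tfrac{2}{\sqrt{\pi}}\sum_{k \geq 0} z^{2k+1}/(k!(2k+1))$, one obtains $\sqrt{\pi y}\,\erfi(\sqrt{y}) = \sum_{k\geq 0} 2y^{k+1}/(k!(2k+1))$, and subtracting from the series for $e^y$ collapses into
\[
  1 - M_0(y) \;=\; \sum_{k=1}^{\infty} \frac{y^k}{k!\,(2k-1)}.
\]
Every coefficient is strictly positive and the series converges on all of $\bR$, so both sides of the target inequality are power series in $y$ with non-negative coefficients.

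The main step is a coefficient-by-coefficient comparison. Writing $a_k \coloneqq 1/(k!(2k-1))$ with the convention $a_0 \coloneqq 0$, the coefficient of $y^m$ on the left is $a_m + a_{m-1} + 2a_{m-2}$, while on the right it is $1/(m-1)!$. The cases $m = 1$ (both sides equal $1$) and $m = 2$ (left is $a_2 + a_1 = 7/6$, right is $1$) are immediate. For $m \geq 3$, drop $a_m + a_{m-1}$ and use only the cubic-or-higher term:
\[
  2a_{m-2} \;=\; \frac{2}{(m-2)!\,(2m-5)} \;=\; \frac{1}{(m-1)!} \cdot \frac{2(m-1)}{2m-5} \;\geq\; \frac{1}{(m-1)!},
\]
where the final step is the trivial inequality $2(m-1) \geq 2m - 5$. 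Since both sides are convergent power series with non-negative coefficients, termwise domination gives the inequality pointwise on $[0,\infty)$.

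I don't foresee a real obstacle. The only mildly non-routine ingredient is the closed-form Maclaurin expansion of $1 - M_0(y)$, and the coefficient comparison is slack enough that the crude bound $a_m + a_{m-1} + 2a_{m-2} \geq 2a_{m-2}$ already suffices; this is essentially because the factorial in the denominator of $a_{m-2}$ already matches $(m-1)!$ up to the extremely mild factor $(2m-5)/(2(m-1))$, which is bounded by $1$ for every $m \geq 3$.
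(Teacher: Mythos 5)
Your proof is correct, and it takes a genuinely different route from the paper's. The paper compares $f(x) = 1 - M_0(x^2/2)$ and $g(x) = e^{x^2/2}/(x^2 + 1 + 2/x^2)$ directly in the variable $x$: it computes both second derivatives, proves $f''(x) \geq g''(x)$ by comparing coefficients of a degree-$12$ polynomial inequality, and then integrates twice using $f'(0) = g'(0) = 0$ and $f(0) = g(0) = 0$. You instead clear the denominator, substitute $y = x^2/2$, and reduce the claim to
\[
(2y^2 + y + 1)\bigl(1 - M_0(y)\bigr) \;\geq\; y\,e^y,
\]
then derive the Maclaurin expansion $1 - M_0(y) = \sum_{k \geq 1} y^k/(k!(2k-1))$ from the series for $e^y$ and $\erfi$ (this derivation is correct: $\sqrt{\pi y}\,\erfi(\sqrt{y}) = 2\sum_{k \geq 0} y^{k+1}/(k!(2k+1))$, and subtracting from $e^y$ collapses the $y^m$-coefficient to $-1/(m!(2m-1))$). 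The termwise comparison is also right: with $a_k = 1/(k!(2k-1))$ and $a_0 = 0$, the $y^m$-coefficient on the left is $a_m + a_{m-1} + 2a_{m-2}$ versus $1/(m-1)!$ on the right; equality at $m=1$, strict at $m=2$, and for $m \geq 3$ the lone term $2a_{m-2} = 2/\bigl((m-2)!(2m-5)\bigr) \geq 1/(m-1)!$ because $2(m-1) \geq 2m - 5$. Since both sides are entire, domination of non-negative Taylor coefficients gives the inequality on all of $[0,\infty)$. Your route buys a fully elementary, self-contained argument that never differentiates $g$ (the bulkiest calculation in the paper's version), at the modest cost of working out the series for $M_0$; the paper's derivative-and-integrate approach avoids the substitution and the series but pays with heavier rational-function manipulation.
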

\begin{proof}
Define $f(x) = 1-M_0(x^2/2)$ and $g(x)=\exp(x^2/2)/(x^2+1+2/x^2)$.
The derivatives are
\[
f'(x) = \sqrt{\pi/2} \erfi(x/\sqrt{2})
\qquad\text{and}\qquad
g'(x) = \exp(x^2/2) \cdot \frac{x^7 - x^5 + 2 x^3 + 4x}{(x^4+x^2+2)^2}.
\]
The second derivatives are
\[
f''(x) = \exp(x^2/2)
\qquad\text{and}\qquad
g''(x) = \exp(x^2/2) \cdot
    \frac{x^{12} - x^{10} + 9x^8 + 7x^6 - 32x^4 + 8x^2 + 8}
         {(x^4+x^2+2)^3}.
\]
We will show that $f''(x) \geq g''(x)$.
By rearranging, this amounts to showing that
\begin{align*}
x^{12} - x^{10} + 9x^8 + 7x^6 - 32x^4 + 8x^2 + 8
 &~\leq~ (x^4+x^2+2)^3.
\intertext{Expanding the right-hand side, we get}
 &~=~ x^{12} + 3 x^{10} + 9 x^8 + 13 x^6 + 18x^4 + 12 x^2 + 8.
\end{align*}
The right-hand side coefficients are no smaller,
which shows that $f''(x) \geq g''(x)$ for $x \geq 0$.

By integrating and using that $f'(0)=g'(0)=0$, we obtain that $f'(x) \geq
g'(x)$ for all $x \geq 0$.
Finally, by integrating and using that $f(0)=g(0)=0$,
we obtain $f(x) \geq g(x)$ for all $x \geq 0$.
\end{proof}

The next lemma from \citet[Lemma 3.10]{HarveyLPR20a_arxiv} is used to proved that the hypergeometric potential satisfies the discrete version of the backwards heat inequality.
\begin{lemma}[\protect{\citealp[Lemma 3.10]{HarveyLPR20a_arxiv}}]
    \LemmaName{M0_ineq}
    For all $z \in [0, 1)$ and $x \in \bR$, we have
    \[
        M_0\left( \frac{(x+z)^2}{2} \right) + M_0\left( \frac{(x-z)^2}{2} \right) \geq 2 \sqrt{1-z^2} M_0\left( \frac{x^2}{2(1-z^2)} \right).
    \]
\end{lemma}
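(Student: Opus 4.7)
My plan is to reformulate the inequality via $\psi(t, x) := \sqrt{t}\,M_0(x^2/(2t))$, the untruncated analogue of $\phi$ from \Equation{phi_m0_def}; this $\psi$ satisfies the explicit derivative formulas $\partial_{xx}\psi(t, y) = -e^{y^2/(2t)}/\sqrt{t}$ and $\partial_t\psi(t, y) = e^{y^2/(2t)}/(2\sqrt{t})$ for all $y\in\mathbb{R}$ (the $y<0$ case follows from the $y>0$ case by symmetry, since $\psi$ depends only on $y^2$). In these terms the target is $\psi(1, x+z) + \psi(1, x-z) \geq 2\psi(1-z^2, x)$. Defining
\[
F(z) := \tfrac{1}{2}\bigl[\psi(1, x+z) + \psi(1, x-z)\bigr] - \psi(1-z^2, x), \quad z \in [0, 1),
\]
one has $F(0) = 0$, so it suffices to establish $F'(z) \geq 0$.

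Using the explicit derivative formulas above together with $\partial_x\psi(1, x+z) - \partial_x\psi(1, x-z) = \int_{x-z}^{x+z} \partial_{xx}\psi(1, \xi)\,d\xi$, I compute
\[
F'(z) = \tfrac{z\, e^{x^2/(2(1-z^2))}}{\sqrt{1-z^2}} - \tfrac{1}{2}\!\int_{x-z}^{x+z} e^{\xi^2/2}\,d\xi =: G(z).
\]
Since $G(0) = 0$, the same idea reduces the problem to showing $G'(z) \geq 0$. A direct differentiation (using $\frac{d}{dz}[z/\sqrt{1-z^2}\cdot e^{x^2/(2(1-z^2))}] = \frac{(1-z^2)+x^2z^2}{(1-z^2)^{5/2}} e^{x^2/(2(1-z^2))}$ and $e^{(x+z)^2/2}+e^{(x-z)^2/2} = 2 e^{(x^2+z^2)/2}\cosh(xz)$) gives
\[
G'(z) = \tfrac{(1-z^2) + x^2 z^2}{(1-z^2)^{5/2}}\, e^{x^2/(2(1-z^2))} - e^{(x^2+z^2)/2}\cosh(xz).
\]

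To close the argument I plan to apply the classical sub-Gaussian bound $\cosh(a) \leq e^{a^2/2}$, which reduces $G'(z) \geq 0$ to the stronger algebraic inequality obtained after cancelling $e^{x^2/2}$ from both sides. Substituting $v := 1 - z^2 \in (0, 1]$ and $u := x^2 z^2 \geq 0$ and taking logarithms, the task becomes
\[
\log(v + u) - \tfrac{5}{2}\log v + \tfrac{u(1-v)}{2v} \geq \tfrac{1-v}{2}.
\]
At $u = 0$ this reduces to $-3\log v \geq 1 - v$, which follows from the tangent-line estimate $-\log v \geq 1 - v$ on $(0, 1]$. For general $u$, the $u$-derivative of the LHS equals $\tfrac{1}{v+u} + \tfrac{1-v}{2v} > 0$ while the RHS is constant in $u$, so the inequality is preserved as $u$ increases, completing the proof.

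The main obstacle will be choosing a relaxation that is neither too weak nor too strong: a naive attempt using only concavity of $\psi(1, \cdot)$ together with the backwards-heat-equation identity $\psi(1-z^2, x) = \mathbb{E}[\psi(1, x + zN)]$ (with $N \sim \mathcal{N}(0,1)$) fails, because one can exhibit concave functions (e.g.\ $g(u) = \min(u,0)$) for which the concave-ordering comparison between the Rademacher and Gaussian distributions goes the wrong way. Thus the specific algebraic structure of $\psi$ encoded in the explicit closed forms for $\partial_t\psi$ and $\partial_{xx}\psi$ is essential, and the sub-Gaussian relaxation of $\cosh$ is exactly tight enough to reduce the problem to the clean monotonicity argument above.
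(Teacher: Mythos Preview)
Your proof is correct. The reformulation via $\psi(t,x)=\sqrt{t}\,M_0(x^2/(2t))$ is exactly right, the derivative computations for $F'(z)=G(z)$ and $G'(z)$ check out line by line, and the final reduction after applying $\cosh(a)\le e^{a^2/2}$ is handled cleanly: with $v=1-z^2\in(0,1]$ and $u=x^2z^2\ge 0$, the $u=0$ case is $-3\log v\ge 1-v$, and the $u$-derivative of the left side is $\frac{1}{v+u}+\frac{1-v}{2v}>0$, so the inequality persists for all $u\ge 0$.

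As for comparison with the paper: the paper does not give its own proof of this lemma. It simply quotes the result as \cite[Lemma~3.10]{HarveyLPR20a_arxiv} and uses it as a black box in the proof of \Lemma{DiscreteBHE}. So there is nothing in the present paper to compare your argument against; your write-up supplies a self-contained proof where the paper defers to the cited reference.
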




Recall from \Definition{lambda} that $\lambda$ is the non-negative inverse of $x \mapsto -M_0(x^2/2)$.
The bound on \(M_0\) from \Lemma{expx2} allows us to upper-bound \(\lambda\) as follows.

\begin{lemma}
\LemmaName{lambdabound}
Let $n \in \Reals$ be positive. Then,
\[ \lambda(n) ~\leq~ 3 + \sqrt{2 \ln(n+1)}. \]
Consequently,
$$
\lim_{n \rightarrow \infty}
\frac{ \lambda(n) }{ \sqrt{2\ln(n)} } ~\leq~ 1.
$$
\end{lemma}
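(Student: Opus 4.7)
The plan is to exploit the monotonicity of the map $x \mapsto -M_0(x^2/2)$ on $[0,\infty)$ (Fact~\ref{fact:m0_properties}), together with the quantitative lower bound from Lemma~\ref{lem:expx2}. Since $\lambda(n)$ is defined as the unique nonnegative root of $-M_0(\lambda(n)^2/2)=n$ and this map is strictly increasing, it suffices to exhibit some $x^{\star}\geq 0$ satisfying $-M_0((x^{\star})^2/2)\geq n$; monotonicity then forces $\lambda(n)\leq x^{\star}$. I will take $x^{\star} := 3+\sqrt{2\ln(n+1)}$.

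By Lemma~\ref{lem:expx2} applied to $x^{\star}$, the desired inequality $-M_0((x^{\star})^2/2) \geq n$ will follow once we verify
\[
\frac{\exp((x^{\star})^2/2)}{(x^{\star})^2+1+2/(x^{\star})^2}\ \geq\ n+1.
\]
Since $x^{\star}\geq 3$, we have $2/(x^{\star})^2\leq 1$, so the denominator is at most $(x^{\star})^2+2$. Introducing the shorthand $v:=\sqrt{2\ln(n+1)}\geq 0$, direct expansion gives $(x^{\star})^2/2 = 9/2+3v+\ln(n+1)$, and therefore $\exp((x^{\star})^2/2)=e^{9/2}(n+1)e^{3v}$. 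After cancelling the factor $n+1$, the inequality reduces to the purely scalar claim
\[
e^{9/2}\,e^{3v}\ \geq\ v^2+6v+11\qquad\text{for all } v\geq 0.
\]
This is verified by a short convexity argument: the function $g(v)=e^{9/2}e^{3v}-v^2-6v-11$ satisfies $g''(v)=9e^{9/2}e^{3v}-2>0$, so $g'$ is increasing, and since $g'(0)=3e^{9/2}-6>0$ and $g(0)=e^{9/2}-11>0$ (using $e^{9/2}\approx 90$), we have $g(v)\geq g(0)>0$ for all $v\geq 0$.

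For the ``consequently'' statement, dividing through by $\sqrt{2\ln n}$ gives
\[
\frac{\lambda(n)}{\sqrt{2\ln n}}\ \leq\ \frac{3}{\sqrt{2\ln n}}+\sqrt{\frac{\ln(n+1)}{\ln n}},
\]
and the right-hand side tends to $0+1=1$ as $n\to\infty$.

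The only real obstacle is bookkeeping the constants so that the slack at $v=0$ is positive; the choice of the additive $3$ in $x^{\star}$ is precisely what yields the comfortable factor $e^{9/2}\approx 90$, leaving ample room above $11$. If one attempted a smaller additive constant, the base-case check $g(0)\geq 0$ would become the binding constraint, but everything else in the plan is purely mechanical.
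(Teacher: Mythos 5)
Your proof is correct and takes essentially the same route as the paper: both set $\ell = 3 + \sqrt{2\ln(n+1)}$, invoke \Lemma{expx2}, cancel the factor of $n+1$ coming from $e^{\ln(n+1)}$, and reduce to a scalar inequality that is verified by a value-at-the-base-point plus convexity (second derivative) comparison. The only cosmetic difference is that you reparameterize by $v = \ell - 3$ and absorb $2/\ell^2 \leq 1$ into the polynomial before checking the sign, whereas the paper keeps $\ell$ as the variable and compares $e^\ell$ with $\ell^2 + 1 + 2/\ell$ directly.
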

\begin{proof}
Define $\ell = 3+\sqrt{2\ln(n+1)}$.
Note that $\ell^2 = 2 \ln(n+1) + 6 \ell - 9$.
So, by \Lemma{expx2},
\begin{equation}\EquationName{expx2a}
1-M_0(\ell^2/2)
    ~\geq~ \frac{\exp(\ell^2/2)}{\ell^2+1+2/\ell^2} \\
    ~=~ \frac{\exp(\ln(n+1) + 3\ell - 9/2)}{\ell^2+1+2/\ell^2}.
\end{equation}
Since $\ell \geq 3$ we have $3\ell - 9/2 \geq \ell$, and also
\begin{equation}\EquationName{expx2b}
e^{\ell} ~\geq~ \ell^2+1+2/\ell.
\end{equation}
This may be seen by a direct calculation for $\ell=3$, then observing that the second derivative of the left-hand side exceeds the second derivative of the right-hand side for $\ell \geq 3$.
Combining \eqref{eq:expx2a} and \eqref{eq:expx2b} we obtain
\[
1-M_0(\ell^2/2) ~\geq~ \exp(\ln(n+1)) ~=~ n+1.
\]
Since $-M_0$ is monotonically increasing, it follows that $\lambda(n) \leq \ell$.
\end{proof}

\section{Missing Proofs of \Section{ContMWU}}
\AppendixName{ContMWU}

\begin{lemma}
    Let \(\Phi\) be as in~\eqref{eq:logsumexp_def} and \(\eta_t \geq 0\) for all \(t \geq 0\). Then,
    \begin{equation*}
        \frac{1}{2} \sum_{i,j \in [n]} \partial_{ij} \Phi(t, x) \Sigma_{ij}(t)
        \leq \frac{\eta_t}{2},
        \qquad \forall t \geq 0, x \in \Reals^n.
    \end{equation*}
\end{lemma}
\begin{fakeproof}
    Define \(\Theta \coloneqq (\sum_{i = 1}^n \exp(\eta_t x_i))^2\). First of all, one may verify that
    \begin{equation*}
        \partial_{ii} \Phi(t,x) = \frac{1}{\Theta} \eta_t e^{\eta_t x_i}\sum_{j \neq i} e^{\eta_t x_j}
        = \frac{1}{\Theta} \eta_t \Big(\sum_{j =1}^n e^{\eta_t (x_i + x_j)} -  e^{2\eta_t x_i}\Big)
    \end{equation*}
    and that, for \(i \neq j\),
    \begin{equation*}
        \partial_{ij} \Phi(t,x) = - \frac{1}{\Theta} \eta_t e^{\eta_t (x_i + x_j)}.
    \end{equation*}
    Therefore, using that \(\Sigma_{ii}(t) =  1\) for any \(i \in [n]\) and defining \(v_i \coloneqq e^{\eta_t x_i}\) for each \(i \in [n]\)
    \begin{equation*}
        \sum_{i,j \in [n]} \partial_{ij} \Phi(t,x) \Sigma_{ij}(t)
        = \frac{\eta_t}{\Theta} \underbrace{\sum_{i,j \in [n]} e^{\eta_t (x_i + x_j)}}_{= \Theta} - \frac{\eta_t}{\Theta} \underbrace{\sum_{i,j} e^{\eta_t(x_i + x_j)} \Sigma_{ij}(t)}_{=v^{\transp} \Sigma(t) v \geq 0}
        \leq \eta_t. \qedhere
    \end{equation*}
\end{fakeproof}

\begin{lemma}
    Let \(\Phi\) be as in~\eqref{eq:logsumexp_def} and   
    \(\eta_t\) be either constant in \(t\) or of the form \(\boole{t > 0}c/\sqrt{t}\) for some \(c > 0\). Then,
    \begin{equation*}
        \partial_t \Phi(t, x) \leq \frac{\log n}{2 t \eta_t},
        \qquad \forall t \geq 0, x \in \Reals^n.
    \end{equation*}
\end{lemma}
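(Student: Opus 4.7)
The plan is to split along the two cases in the hypothesis. The constant case is immediate: if $\eta_t \equiv \eta$, then $\Phi(\cdot, x)$ does not depend on $t$, so $\partial_t \Phi(t,x) = 0 \leq \log n/(2 t \eta_t)$ for every $t > 0$, and we are done.

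For $\eta_t = c/\sqrt{t}$, I would reduce the time derivative to a one-variable computation by writing $\Phi(t, x) = \psi(\eta_t)$, where
\[
\psi(\eta) ~\coloneqq~ \tfrac{1}{\eta}\log\sum_{i=1}^{n} e^{\eta x_i}
\]
(treating $x$ as fixed). A direct differentiation, using $\frac{\mathrm d}{\mathrm d\eta}\log\sum_i e^{\eta x_i} = \sum_i p_i^{\star}(\eta)\, x_i \eqqcolon \bar x_\eta$, gives
\[
\psi'(\eta) ~=~ \tfrac{1}{\eta}\bigl(\bar x_\eta - \psi(\eta)\bigr),
\]
where $p^{\star}(\eta) \in \Delta_n$ is the softmax with $p_i^{\star}(\eta) \propto e^{\eta x_i}$. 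Combining this with $\eta_t' = -\eta_t/(2t)$ and the chain rule yields
\[
\partial_t \Phi(t, x) ~=~ \psi'(\eta_t)\,\eta_t' ~=~ \tfrac{1}{2t}\bigl(\psi(\eta_t) - \bar x_{\eta_t}\bigr).
\]

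The key remaining step is to upper-bound $\psi(\eta) - \bar x_\eta$ by $(\log n)/\eta$. For this I plan to invoke the standard Legendre-duality identity
\[
\log\sum_{i=1}^n e^{\eta x_i} ~=~ \max_{p \in \Delta_n} \Bigl(\eta \sum_{i=1}^n p_i x_i + H(p)\Bigr),
\]
where $H(p) \coloneqq -\sum_i p_i \log p_i$ is the Shannon entropy and the maximum is attained exactly at $p = p^{\star}(\eta)$. Dividing by $\eta$ rewrites this as $\psi(\eta) = \bar x_\eta + H(p^{\star})/\eta$, so $\psi(\eta) - \bar x_\eta = H(p^{\star})/\eta \leq (\log n)/\eta$. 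Substituting this bound with $\eta = \eta_t$ into the expression for $\partial_t \Phi$ delivers the claimed inequality.

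I do not anticipate any real obstacles: the argument is essentially a chain-rule computation combined with the well-known entropy representation of the softmax. The only mildly delicate point is ensuring that the bound is stated correctly at the boundary (i.e.\ only for $t > 0$, so that $\eta_t > 0$ and the right-hand side is well-defined), but this is already implicit in the statement of the lemma.
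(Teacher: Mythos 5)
Your proof is correct. The constant case and the chain-rule reduction to bounding $\Phi(t,x) - \iprod{x}{p}$ by $(\log n)/\eta_t$ are identical to the paper's. Where you diverge is in the last step: the paper proves the inequality
\[
\frac{1}{\eta_t}\log\Bigl(\sum_{i=1}^n e^{\eta_t x_i}\Bigr) \;\leq\; \frac{\log n}{\eta_t} + \iprod{x}{p}
\]
directly from the convexity of $\alpha \mapsto \alpha\log\alpha$, after a somewhat opaque sequence of algebraic rearrangements. You instead invoke the Legendre-duality (entropy) representation of LogSumExp, which immediately gives the sharper \emph{identity} $\psi(\eta) - \bar x_\eta = H(p^\star(\eta))/\eta$; the bound $H(p^\star) \leq \log n$ is then trivial. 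Your route is a bit cleaner and more conceptual: it shows the slack in the inequality is exactly the softmax entropy, so equality holds if and only if $p^\star$ is uniform (e.g.\ at $\eta \to 0^+$ or when all $x_i$ coincide). The paper's argument is more self-contained and elementary, avoiding any appeal to duality, but obtains only the inequality. Both are valid; the overall structure matches, the key-lemma proof does not.
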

\begin{proof}
    If \(\eta_t\) is constant as a function of \(t\), then \(\partial_t \Phi(t, x) = 0\). Otherwise, one may verify that (using the fact \(\eta_t = c/\sqrt{t}\) )
    \begin{align*}
        \partial_t \Phi(t,x)
        &= 
        \frac{1}{2 t \eta_t}\log\Big(\sum_{i = 1}^n e^{\eta_t x_i}\Big) - \frac{1}{2 \eta_t} \sum_{i = 1}^n \frac{\eta_t}{t} x_i  \frac{e^{\eta_t x_i}}{\sum_{j = 1}^n e^{\eta_t x_j}} 
        \\
        &= \frac{1}{2t} \Bigg( 
            \underbrace{\frac{1}{\eta_t} \log
            \Big( 
                \sum_{i = 1}^n e^{\eta_t x_i}
            \Big)}_{= \Phi(t, x)}
            - \underbrace{\sum_{i = 1}^n  x_i  \frac{e^{\eta_t x_i}}{\sum_{j = 1}^n e^{\eta_t C_j}}}_{=\iprodt{C}{p}}     
        \Bigg).
    \end{align*}
    Let us now show that
    \begin{equation}
        \label{eq:tight_logsumexp}
        \frac{1}{\eta_t}\log \Big(\sum_{i = 1}^n e^{\eta_t x_i} \Big)
        \leq \frac{\log n}{\eta_t} + \iprodt{x}{p},
    \end{equation}
    which completes the proof of the lemma.
    We have
    \begin{equation*}
        \frac{1}{\eta_t}\log \Big(\sum_{i = 1}^n e^{\eta_t x_i} \Big)
        = \frac{\log n}{\eta_t} + \frac{1}{\eta_t}\log \Big(\sum_{i = 1}^n \frac{1}{n}e^{\eta_t x_i} \Big).
    \end{equation*}
    Define \(z_i \coloneqq {\eta_t x_i}\) for every \(i \in [n]\). Then,
    \begin{align*}
        &\frac{1}{\eta_t}\log \Big(\sum_{i = 1}^n \frac{1}{n}e^{z_i} \Big) \leq \frac{1}{\eta_t} \sum_{i = 1}^n \frac{e^{z_i}}{\sum_{j = 1}^n e^{z_j}} z_i 
        \\
        \iff&
        \Big(\sum_{j = 1}^n e^{z_j} \Big)\log \Big(\sum_{i = 1}^n \frac{1}{n}e^{z_i} \Big) \leq  \sum_{i = 1}^n  e^{z_i} z_i
        \\
        \iff&
        \Big( \sum_{j = 1}^n \frac{1}{n}e^{z_j} \Big) \log \Big(\sum_{i = 1}^n \frac{1}{n}e^{z_i} \Big) \leq  \sum_{i = 1}^n \frac{1}{n}e^{z_i} \log(e^{z_i}),
    \end{align*}
    and this last inequality is true by the convexity of \(\alpha \in \bR_{\geq 0} \mapsto \alpha \log \alpha\). This concludes the proof of~\eqref{eq:tight_logsumexp} and, thus, of the lemma.
\end{proof}

\section{Ensuring Predictability of the Player Strategy}
\AppendixName{predictability}

In \Section{ContinuousExperts}, we required player strategies to be left-continuous in time. In fact, one could possibly loosen this assumption to only require \((p(t))_{t \geq 0}\) to be \emph{predictable} (with respect to the filtration generated by \((B(t))_{t \geq 0}\)) as defined in~\citet[Definition~IV.5.2]{RevuzY99a}. Yet, adapted left-continuous process are predictable~\citep[Lemma~7.2]{MortersP10a} and are easier to reason about directly.

One should note, however, that the player strategies defined in \eqref{eq:potential_player} may not be left-continuous. This happens due to the discontinuity when the gradient entries sum to \(0\).  For simplicity, we will discuss here how to modify player strategies generated by the potentials in~\eqref{eq:hyperg_pot} and~\eqref{eq:hyperg_pot_2} to ensure left-continuity, but similar techniques  work for players generated by other potentials. In particular, the discontinuity problems may happen only when the gradient is \(0\). Finally, we note that the exact same predictability issue arises in the continuous NormalHedge algorithm due to~\citet{Freund09a}.

Let us look at an example to see when \(p\) can be discontinuous and that it is not clear how to avoid the discontinuity in our case. Suppose \(R(s) < 0\) for \(s \in (t - \eps, t]\) for some \(t, \eps > 0\), and that \(R_1(s) > 0\) while \(R_i(s) \leq 0\) for \(s \in (t, t+ \eps)\) and \(i \in \{2, \dotsc, n\}\). Then we for all \(s \in (t - \eps, t + \eps)\) we have \(p(s) = (1/n)\ones\) if \(s \leq t\) and \(p(s) = e_1\) for \(s > t\). Ideally, we would like a smooth transition between the uniform distribution and the point mass in the first expert, but there is no clear way to enforce that. In the case of the LogSumExp potential from \Section{ContMWU}, the gradient always lives in the \emph{relative interior} of the simplex, so we never place 0 probability on any of the experts.

To ensure left-continuity of the player strategy, we can simply redefine it to be left-continuous. In our case, this will only modify the player strategy at the points of discontinuity and shall not affect our calculations. More precisely, let \((\phat(t))_{t \geq 0}\) be the player strategy as described in~\eqref{eq:potential_player} and define \((p(t))_{t \geq 0}\) by
\begin{equation*}
    p(t) \coloneqq \lim_{s \; \uparrow \; t} \phat(s),
\end{equation*}
where \((R(t))_{t \geq 0}\) is still defined in terms of \((p(t))_{t \geq 0}\). One might worry that this definition becomes circular, but note that to define \(p(t)\) we only need the values of \(R(s)\) for \(s < t\), and for \(t = 0\) we have \(R(t) = 0\). This ensures that \(p(t)\) and \(R(t)\) are well-defined. Furthermore, since \((G(t))_{t \geq 0}\) is continuous, we also have that \((A(t))_{t \geq 0}\), and thus \((R(t))_{t \geq 0}\), are continuous, even though \((p(t))_{t \geq 0}\) may not be continuous~\citep[Remark~12.1.12]{CohenE15a}.

Now \emph{by definition} we have that \(p\) is left-continuous. Moreover, the points \(t\) of discontinuity for \(p\) are the points such that \(R(t)\) enters or leaves the non-positive orthant \(\setst{x \in \Reals^n}{x \leq 0}\). Thus, we need only to ensure that any claims that explicitly use the form of \(p\) given by~\eqref{eq:potential_player} also hold in the discontinuity points. In such points it is clear we have 
\begin{equation*}
    \int_{0}^t \iprod{\nabla \Phi(s, R(s))}{\diff R(s)} = 0,
\end{equation*}
as required by~\Lemma{RegretPlayerPotFormula}. This also should not affect the calculations in the proof of~\Lemma{CoolConjecture} since we can avoid the points of non-discontinuity my small perturbations without changing the value of the sBHT significantly.

\section{Missing Proofs for \Section{QuantileRegretM0}}
\AppendixName{QuantileRegretM0}
\AppendixName{CtsPotentialNonNegative}
\begin{proofof}[\Lemma{CtsPotentialNonNegative}]
    Let \(T \geq 0\). Intuitively, we want to show that \(\Phi(T, R(T)) \geq 0\) by using \Lemma{RegretPlayerPotFormula}. However, it is not clear what should be the value of \(\Phi(0,0)\). To handle this issue, let \(\delta > 0\) and define \(\Phi_{\delta}(t,x) \coloneqq \Phi(t + \delta, x)\) for all \(t \geq 0\) and \(x \in \Reals^n\), let \(p^{(\delta)}\) be defined as in~\eqref{eq:potential_player} but replacing \(\Phi\) by \(\Phi_{\delta}\), and let \(R^{\delta}\) be the continuous regret vector of \(p^{\delta}\).  Our goal now is to show that
    \begin{equation}
        \label{eq:ctspotnonnegaitve_1}
        \Phi_{\delta}(T, R^{\delta}(T)) \geq \Phi_{\delta}(0,0) = \sqrt{\delta}~\text{almost surely}.
    \end{equation}
     Then, by taking the limit with \(\delta\) tending to \(0\) we have \(\Phi(T, R(T)) \geq 0\) almost surely. Furthermore, by a union bound we have \(\Phi(t, R(t)) \geq 0\) for all rational \(t \geq 0\), and since both \(\Phi\) and \(R\) are continuous in \(t\), this implies that \(\Phi(t, R(t)) \geq 0\) for all \(t \geq 0\) almost surely by density of the rationals. Note, however, that there is a subtlety in the step in which we take the limit with \(\delta \to 0\), since we are implicitly assuming that
     \begin{equation}
     \label{eq:limitin_argument}
     \lim_{\delta \to 0} R^{\delta}(T) = R(T)\quad \text{almost surely}.   
     \end{equation}
     Let us prove that~\eqref{eq:limitin_argument} indeed holds. Since \(R_i^{\delta}(T) = G_i(T) - \int_0^T \iprod{p^{(\delta)}(t)}{\diff G(t)}\) and \(G_i(T)\) is independent of \(\delta\) for each \(i \in [n]\), we only need to show that
     \begin{equation*}
        \lim_{\delta \to 0}\int_0^T \iprod{p^{(\delta)}(t)}{\diff G(t)} = \int_0^T \iprod{p(t)}{\diff G(t)}\quad \text{almost surely}.
     \end{equation*}
      Since \(p^{(\delta)}(t)\) is bounded and predictable (since we can assume it is left-continuous, see \Appendix{predictability}) and \(G_i\) is a continuous martingale (since it is given by a stochastic integral of continuous functions with respect to a Brownian motion) for each \(i \in [n]\), the above holds by the Dominated Convergence Theorem for stochastic integrals~\citep[Theorem~IV.2.12]{RevuzY99a}. It is also worth mentioning that we indeed have \(\lim_{\delta \downarrow 0} p^{\delta} = p\) point-wise since taking \(\delta\) to \(0\) would not make \(x^2/2(t + \delta)\) cross the negative orthant where the points of discontinuity of \(p\) may be. This completes the proof of~\eqref{eq:limitin_argument}. We now proceed with the proof of~\eqref{eq:ctspotnonnegaitve_1}.

    Since \(\Phi_{\delta}\) is separable and \(\phi(t + \delta,\cdot)\) is concave for any \(t \geq 0\), by \Lemma{RegretPlayerPotFormula} we have
    \begin{equation*}
        \Phi_{\delta}(T, R^{\delta}(T)) - \Phi_{\delta}(0,0)
        \geq \int_0^T \paren[\Big]{\partial_t\Phi_{\delta}(t, R^{\delta}(t)) + 2\sum_{i = 1}^n \partial_{ii} \Phi_{\delta}(t, R^{\delta}(t))} \diff t.
    \end{equation*}
    Note that, for any \(x \in \Reals^n\), we have \(\partial_t \Phi_{\delta}(t,x) = \sum_{i = 1}^n \partial_t \phi(t + \delta,x/2)\) and for all \(i \in [n]\) we have \(\partial_{ii} \Phi(t,x) = (1/4) \partial_{xx} \phi(t + \delta,x_i)\). Therefore,
    \begin{equation*}
        \partial_t\Phi_{\delta}(t, R^{\delta}(t)) + 2\sum_{i = 1}^n \partial_{ii} \Phi_{\delta}(t, R^{\delta}(t))
        = \sum_{i = 1}^n \paren[\Big]{ \partial_t \phi(t + \delta, R_i^{\delta}(t)) + \frac{1}{2} \partial_{xx} \phi(t + \delta, R_i^{\delta}(t))}
        \geq 0,
    \end{equation*}
    where the last equation holds since \(\partial_t \phi(t, \alpha) + (1/2) \partial_{xx}\phi(t,\alpha) \geq 0\) for any \(t > 0\) and \(\alpha \in \Reals\).
    This implies that \(\Phi_{\delta}(T, R^{\delta}(T)) \geq \Phi_{\delta}(0,0) = \sqrt{\delta}\) and \(\Phi(T, R^{}(T)) \geq 0\) by taking the limit \(\delta \to 0\).
\end{proofof}

\comment{
\section{Discretizing the Algorithm for Independent Experts}
\AppendixName{DiscreteAlg}

\victor{This will probably not end up in the final version of the paper}

In this section we will derive a high-probability optimal anytime regret bounds for the experts' problem when the expert costs are independent symmetric random walks. The algorithm we use is the same one as in Section~\ref{sec:disc_m0_bound} that plays according to the \(M_0\) potential with standard \(\ell_1\) normalization. That is, we have the potentials
\begin{equation*}
    \phi(t,x) \coloneqq \sqrt{t} M_0\Big( \frac{x_i^2}{2t} \Big) 
    \qquad \text{and} \qquad
    \Phi(t,x) \coloneqq \sum_{i = 1}^n \phi(t, x_i)
\end{equation*}
and the player plays according to (I am not being very careful with negative regret at this point and I'm still not sure where it could play a role)
\begin{equation*}
    p_{t+1}(i) \propto \partial_{x_i} \Phi(t + 1, R_t).
\end{equation*}
\textbf{\color{red}Right now we heavily rely on a rough deterministic bound on the regret of this algorithm.}

For the analysis, the idea is to use Taylor's series expansion, to ignore the lower-order terms, and focus on bounding the sBHT. More specifically, ignoring the high-order approximation errors (which I think should each be of order \(O(t^{-3/2})\) and using \(\approx\) every time I ignore one of these terms) and defining \(r_{t+1} \coloneqq R_{t+1} - R_t\), we have
\begin{align*}
    &\Phi(t+1, R_{t+1}) - \Phi(t, R_{t})
    \\&
    \approx \Phi(t+1, R_t) - \Phi(t, R_{t}) + \underbrace{\iprod{\nabla_x \Phi(t+1, R_t)}{ r_{t+1}}}_{= 0}
    + \frac{1}{2} \iprod{r_{t+1}}{\nabla_x^2 \Phi(t+1, R_t) r_{t+1}}
    \\&
    \approx 
    \partial_t \Phi(t, R_t) + \frac{1}{2} \iprod{r_{t+1}}{\nabla_x^2 \Phi(t+1, R_t) r_{t+1}}.
\end{align*}
The last term above is very similar to the sBHT from the continuous-time case, except by having a \(t+1\) instead of \(t\) in the second derivative of \(\Phi\). To see this similarity, first note that since \(\Phi\) is separable, we have 
\begin{align*}
    \iprod{r_{t+1}}{\nabla_x^2 \Phi(t+1, R_t) r_{t+1}}
    &= \sum_{i = 1}^n \partial_{xx} \phi(t+1, R_{t,i}) r_{t+1,i}^2
    \\
    &= \sum_{i = 1}^n \partial_{xx} \phi(t+1, R_{t,i}) (c_{t+1,i} - \iprod{p_{t+1}}{c_{t+1}})^2
    \\
    &= \sum_{i = 1}^n \partial_{xx} \phi(t+1, R_{t,i}) (e_i - p_{t+1})^{\transp} \underbrace{c_{t+1}c_{t+1}^{\transp}}_{\eqqcolon \Sigma_{t+1}}(e_i - p_{t+1})
    \\
    &= - \frac{1}{\sqrt{t+1}} \sum_{i = 1}^n \exp\Big(\frac{R_{t,i}^2}{2(t+1)}\Big)
    (e_i - p_{t+1})^{\transp} \Sigma_{t+1}(e_i - p_{t+1})
\end{align*}
and
\begin{align*}
    \partial_t \Phi(t, R_t)
    &= \frac{1}{2\sqrt{t}} \sum_{i = 1}^n \exp\Bigg(\frac{R_{t,i}^2}{2t}\Bigg)
    \geq \frac{1}{2\sqrt{t+1}} \sum_{i = 1}^n \exp\Bigg(\frac{R_{t,i}^2}{2(t+1)}\Bigg).
\end{align*}
Thus,
\begin{equation*}
    \partial_t \Phi(t, R_t) + \frac{1}{2} \iprod{r_{t+1}}{\nabla_x^2 \Phi(t+1, R_t) r_{t+1}} \geq \frac{1}{2\sqrt{t+1}} 
    \sum_{i = 1}^n \exp\Bigg(\frac{R_{t,i}^2}{2(t+1)}\Bigg)\big(1 - (e_i - p_{t+1})^{\transp} \Sigma_{t+1}(e_i - p_{t+1})\big).
\end{equation*}

\subsection{Diagonal vs Off-diagonal entries}
\AppendixName{OffDiag}

Since \(c_{t+1,i} \in \{-1,1\}\), we have that \(\Sigma_{t+1}\) has 1's in its diagonal. Moreover, if we look only at the diagonal entries (equivalently, taking \(\Sigma_{t+1} = I\)), we have a \(-O(n\sqrt{t})\) lower-bound (see Section~\ref{sec:cool_conjecture}). Thus, our main task is to lower-bound the off-diagonal terms. A bit more formally, we have
\begin{align*}
    &\sum_{i = 1}^n \exp\Bigg(\frac{R_{t,i}^2}{2(t+1)}\Bigg)\big(1 - (e_i - p_{t+1})^{\transp} \Sigma_{t+1}(e_i - p_{t+1})\big)
    \\
    =&~ 
    \underbrace{\frac{1}{2\sqrt{t+1}}\sum_{i = 1}^n \exp\Bigg(\frac{R_{t,i}^2}{2(t+1)}\Bigg)\big(1 - (e_i - p_{t+1})^T(e_i - p_{t+1})\big)}_{\geq -O(n/\sqrt{t})}
    \\&\quad
    -
    \frac{1}{2\sqrt{t+1}} \sum_{i = 1}^n \exp\Bigg(\frac{R_{t,i}^2}{2(t+1)}\Bigg)(e_i - p_{t+1})^T(\Sigma_{t+1} - I)(e_i - p_{t+1}).
\end{align*}
we only need to upper-bound---maybe with high probability when summing over \(t\)---the last term without the minus sign. Let us now make some definitions and assumptions. First, \textbf{\color{red} assume we have \(R_{t,i} \leq g(n) \sqrt{t}\)} for any \(t\) and \(i\), where \(g\) is a function that does not depend on \(t\). Therefore, we have \(\exp(\frac{R_{t,i}^2}{2(t+1)}) \leq f(n)\) for some function \(f\). Moreover, for every \(t\) define
\begin{equation*}
    M_{t} \coloneqq \Sigma_{t} - I 
    \qquad \text{and} \qquad
    v_t^{(i)} \coloneqq (e_i - p_t).
\end{equation*}
Finally, define 
\begin{equation*}
    S_T \coloneqq \sum_{t = 0}^T  \underbrace{\frac{1}{2\sqrt{t+1}} \sum_{i = 1}^n \exp\Bigg(\frac{R_{t,i}^2}{2(t+1)}\Bigg)(v_{t+1}^{(i)})^TM_{t+1}v_{t+1}^{(i)}}_{\eqqcolon Y_t} = \sum_{t = 0}^T Y_t
\end{equation*}
Let us now prove the following proposition.
\begin{proposition}
    For \(n \geq 4\) we have
    \begin{equation*}
        \prob{S_t \leq O(\ln t)~\forall t} \geq 1 - O\Big(\frac{1}{n}\Big)
    \end{equation*} 
    where the big-Oh notation in the LHS hides constants that depend on \(n\)
\end{proposition}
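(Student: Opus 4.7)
The plan is to recognize that $(Y_t)_{t \ge 0}$ is a martingale difference sequence with respect to the filtration $\mathcal{F}_t := \sigma(c_1,\ldots,c_t)$ and then apply a Freedman-type concentration inequality together with a dyadic union bound to obtain the uniform-in-$t$ guarantee. First I would verify the martingale property. Since $R_t$ and $p_{t+1}$ are $\mathcal{F}_t$-measurable (the algorithm produces $p_{t+1}$ as a deterministic function of $R_t$) and $c_{t+1}$ is an i.i.d.\ Rademacher vector independent of $\mathcal{F}_t$ with $\mathbb{E}[c_{t+1,j}\, c_{t+1,k} \mid \mathcal{F}_t] = \delta_{jk}$, we have $\mathbb{E}[M_{t+1}\mid \mathcal{F}_t]=0$, and hence $\mathbb{E}[Y_t\mid \mathcal{F}_t]=0$. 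A useful closed form, using $c_{t+1,j}^2 = 1$, is
\[
Y_t \;=\; \frac{1}{\sqrt{t+1}} \sum_{j<k} c_{t+1,j}\, c_{t+1,k}\, b^{(t)}_{jk}, \quad\text{where}\quad b^{(t)}_{jk} \;:=\; \sum_{i=1}^n a^{(t)}_i\, (e_i - p_{t+1})_j\, (e_i - p_{t+1})_k,
\]
and $a^{(t)}_i := \exp(R_{t,i}^2/2(t+1))$.

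Next I would bound the predictable quadratic variation and the almost sure increments. Using the standing deterministic bound $a^{(t)}_i \le f(n)$ and $|(e_i - p_{t+1})_j| \le 1$, one obtains $|b^{(t)}_{jk}| \le n\,f(n)$. The Rademacher products $\{c_{t+1,j}c_{t+1,k}\}_{j<k}$ are pairwise uncorrelated of unit conditional variance, so
\[
\mathbb{E}\!\left[Y_t^2 \mid \mathcal{F}_t\right] \;=\; \frac{1}{t+1}\sum_{j<k}\bigl(b^{(t)}_{jk}\bigr)^2 \;\le\; \frac{n^4 f(n)^2}{2(t+1)}, \qquad |Y_t|\;\le\;\frac{n^3 f(n)}{\sqrt{t+1}}\ \text{a.s.}
\]
Summing the conditional variances yields $V_T := \sum_{t=0}^T \mathbb{E}[Y_t^2\mid \mathcal{F}_t]\le C_1(n)\ln(T+2)$ with $C_1(n)=O(n^4 f(n)^2)$, and the a.s.\ increment bound is $b := C_2(n) = O(n^3 f(n))$.

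Finally I would convert these estimates into a uniform tail bound. For each dyadic index $k\ge 1$, apply Freedman's inequality to the stopped martingale $(S_{t\wedge 2^k})$ at the threshold $\lambda_k := C(n)\,k$, giving a failure probability
\[
\exp\!\left(-\frac{\lambda_k^2/2}{V_{2^k} + b\lambda_k/3}\right) \;\le\; \exp\!\left(-\frac{k\,C(n)^2/2}{C_1(n) + C_2(n)\,C(n)/3}\right).
\]
Choosing $C(n) = \Theta(n^3 f(n)\log n)$ makes this at most $e^{-k\ln(n+1)}$, so that $\sum_{k\ge 1} e^{-k\ln(n+1)} = 1/n$. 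On the complement of this failure event, $\max_{t\le 2^k} S_t \le C(n)\,k$ holds for every $k\ge 1$, which translates to $S_t \le (2C(n)/\ln 2)\ln t$ for all $t\ge 2$, giving the claimed $O(\ln t)$ bound with an $n$-dependent constant.

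The main obstacle I anticipate is the circular dependence between the deterministic regret bound $R_{t,i} \le g(n)\sqrt{t}$ assumed at the start of the section (which is what guarantees $a^{(t)}_i \le f(n)$) and the concentration argument itself, since a high-probability bound on $S_t$ is precisely what certifies the regret bound in the first place. Breaking this circularity requires a bootstrap/stopping-time argument: define $\tau^\ast$ as the first round at which the regret bound fails, run the Freedman argument on $(S_{t\wedge\tau^\ast})$, and then show that on the good event the implied bound on $S_t$ in turn prevents $\tau^\ast$ from ever occurring. Making this self-bounding step rigorous, rather than the concentration step per se, is the delicate part; it is also where the dependence of $C(n)$ on $f(n)$ and $g(n)$ must be tracked carefully so as not to inflate the polynomial in $n$ hidden inside the $O(\ln t)$.
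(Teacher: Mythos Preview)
Your proposal is correct and reaches the same conclusion, but the route differs from the paper's in two ways. First, the paper uses the simpler Azuma--Hoeffding inequality rather than Freedman: it bounds the increments deterministically by $|Y_t|\le c_t := f(n)n^2/\sqrt{t+1}$ (via the quadratic-form bound $|(v_{t+1}^{(i)})^{\transp} M_{t+1} v_{t+1}^{(i)}|\le 4+n$ rather than your entrywise expansion into Rademacher chaos), giving $\sum_s c_s^2 \le 2 f(n)^2 n^4(1+\ln t)$. Second, the paper does a straight union bound over \emph{all} $t$ rather than a dyadic one: choosing $\eps_t = f(n)n^2\sqrt{2(1+\ln t)\ln(nt^2)}$ makes $\prob{S_t>\eps_t}\le 1/(nt^2)$, and $\sum_t 1/t^2 = O(1)$ yields the $1-O(1/n)$ guarantee directly. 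Your Freedman-plus-dyadic argument is a standard and slightly more refined alternative that also works; the paper's version is more elementary and gives essentially the same $n$-dependence (your increment bound $n^3 f(n)$ is a bit looser than the paper's $n^2 f(n)$, but this only affects constants hidden in the $O(\cdot)$).

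Your concern about circularity with the deterministic regret bound is not an obstacle for \emph{this} proposition: the paper explicitly takes $R_{t,i}\le g(n)\sqrt{t}$ as a standing assumption in this section and establishes it separately via a crude potential-function argument that does not rely on the concentration of $S_t$. So no bootstrap or stopping-time device is needed here; the proposition is proved under that assumption, and the assumption is discharged independently.
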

\begin{proof}
    First, note that \((S_t)_{t \geq 0}\) is a martingale. Indeed, if \(\{\cF_t\}_{t \geq 0}\) is the natural filtration induced by \((S_t)_{t \geq 0}\) we have \(\expect{Y_{t} | \cF_{t-1}} = 0\) since \(\expect{M_{t+1}} = 0\) (entry \(i,j\) is \(c_{t+1, i} c_{t+1,j}\), which is independent from previous rounds and has 0 expectation). Moreover, we have
    \begin{equation*}
        \abs{(v_{t+1}^{(i)})^T M_{t+1} v_{t+1}^{(i)}} \leq 4 + n 
    \end{equation*}
    The above inequality follows from Lemma~\ref{lemma:bound_quadratic_sbht} together with the fact that \(\norm{v_{t+1}^{(i)}}_2^2 \leq n\). Therefore, for $n \geq 4$
    \begin{equation*}
        \abs{Y_{t}} \leq \frac{f(n)}{2\sqrt{t+1}}
        \cdot n \cdot (4 + n) \leq \frac{f(n)n^2}{\sqrt{t+1}} \eqqcolon c_t.
    \end{equation*}
    Therefore, by Azuma's inequality we have, for any \(\eps_t > 0\), (I am ignoring \(S_0\) for simplicity, but it would add another term that depends only on \(n\) on the bound, not changing the desired result)
    \begin{equation*}
        \prob{S_t > \eps_t}
        \leq \exp\Big(\frac{- \eps_t^2}{2 \sum_{s = 0}^t c_s^2}\Big)
        \leq \exp\Big(\frac{- \eps_t^2}{2 f(n)^2 n^4 \sum_{s = 0}^t \frac{1}{t+1}}\Big)
        \leq \exp\Big(\frac{- \eps_t^2}{2 f(n)^2 n^4( 1 + \ln t)}\Big).
    \end{equation*}
    Therefore, by setting
    \begin{equation*}
        \eps_t \coloneqq  f(n) n^2 \sqrt{2(1 + \ln t)\ln(n t^2)} = O(\ln t)~\text{(hiding dep. on}~n.)
    \end{equation*}
    we get
    \begin{equation*}
        \prob{S_t > \eps_t} \leq \frac{1}{n t^2}.
    \end{equation*}
    Finally, since \(\sum_{t = 1}^\infty \frac{1}{t^2} = O(1)\), a simple union-bound gives us the desired bound.
\end{proof}

Let us now look at the regret bound we get out of this. On the event \(S_T \leq \eps_T\), we have (assume for the sake of simplicity that \(R_{T, 1} = \max_i R_{T,i}\))
\begin{align*}
    &\Phi(T, R_T) \geq - \eps_T + \sum_{t=1}^T \frac{2 - n}{\sqrt{t}}   
    \geq - \eps_T - 2 n \sqrt{T},
    \\
    \implies & \sqrt{T} \sum_{i} M_0 \Big( \frac{R_{T,i}}{2 T}\Big)
    \geq - \eps_T - 2 n \sqrt{T}
    \\
    \implies 
    & \sum_{i} M_0 \Big( \frac{R_{T,i}}{2 T}\Big)
    \geq - \frac{\eps_T}{\sqrt{T}} - 2 n
    \\ 
    \implies 
    & M_0 \Big( \frac{R_{T,1}}{2 T}\Big)
    \geq - \frac{\eps_T}{\sqrt{T}} - 3 n
    \\
    \implies 
    &\abs{R_{T,1}} \leq \lambda\Big(3n + \frac{\eps_T}{\sqrt{T}}\Big) \sqrt{T}
    \lesssim
    \sqrt{2 \ln \Big(3n + \frac{\eps_T}{\sqrt{T}}\Big) T}. 
\end{align*}
For \(\eps_T = O(\ln T)\), we have 
\begin{equation*}
    \limsup_{t} \frac{\max_{i} \abs{R_{t,i}}}{\sqrt{t}} \leq \sqrt{2 T \ln n},
\end{equation*}
but I am not sure if this is good enough. The low order terms being inside the log and multiplied by \(\sqrt{T}\) were an unwelcome surprise.

\subsection{A Deterministic Bound}

In this section we show that the algorithm of \Appendix{DiscreteAlg} achieves a regret bound
\[
Regret(T) ~\leq~ 2^{C_4 \cdot n} \sqrt{T}
\]
against adversarial (non-random) costs. This is a very weak bound, but it suffices to handle the off-diagonal entries in \Appendix{OffDiag}.

We employ the potential function approach of Cesa-Bianchi and Lugosi, modified to accommodate a time-varying potential of the form
$$
\Phi(t,u) = \psi\Big(t,\sum_{i=1}^n \phi(t,u_i)\Big).$$

\begin{theorem}[{\protect \cite[Theorem 2.1, modified]{Cesa-BianchiL06a}}]
\TheoremName{CBL}
Suppose that the algorithm has probabilities proportional to the gradient of $\Phi$. Then
\begin{align*}
\Phi(T,R_T) ~\leq~&
\sum_{t=1}^{T-1} (\partial_t \psi)\Big(t,\sum_{i=1}^n \phi(t-1,R_{t-1,i}) \Big) \\
&+ \sum_{t=1}^T (\partial_x \psi)\Big(t, \sum_{i=1}^n \phi(t-1,R_{t-1,i})\Big) \cdot \sum_{i=1}^n (\partial_t \phi)(t-1,R_{t-1,i})
+ \sum_{t=1}^T C(t,r_t)z
\end{align*}
where
$$
C(t,r_t) = \sup_{u \in \bR^n} (\partial_x \psi)\Big(\sum_{i=1}^n \phi(t,u_i)\Big) \sum_{i=1}^n \partial_{x,x} \phi(t,u_i) \cdot r_{t,i}^2.
$$
\end{theorem}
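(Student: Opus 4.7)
My plan is to reduce the proposition to a standard fact about the expected maximum of independent standard Gaussians. The key observation is that, under the assumption $w^{(i)} = e_i$, the gain processes satisfy $G_i(t) = B_i(t)$, so each is an independent standard Brownian motion. Then $G_i(t) \sim \mathcal{N}(0, t)$, and these $n$ random variables are mutually independent.

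The first step is to show that the player's cumulative gain $A(t)$ has mean zero. Because $p$ is left-continuous (hence predictable) and bounded (it lies in $\Delta_n$), and each $G_i$ is a continuous martingale (a stochastic integral of a bounded process against Brownian motion), the process $A(t) = \int_0^t \iprod{p(s)}{\diff G(s)}$ is a well-defined martingale that vanishes at $t=0$ (see~\citealp[Def.~IV.2.1 and~IV.2.3]{RevuzY99a}). Therefore $\expect{A(t)} = 0$ for every $t \geq 0$, regardless of the strategy.

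Taking expectations in the definition of $\ContRegret$, this gives the identity
\[
\expect{\ContRegret(t)} ~=~ \expect{\max_{i \in [n]} G_i(t)} - \expect{A(t)} ~=~ \expect{\max_{i \in [n]} B_i(t)}.
\]
In particular, the expected continuous regret does \emph{not} depend on the choice of player strategy under this noise model, and both the upper and lower bounds will collapse to bounds on $\expect{\max_i B_i(t)}$. By scaling, this equals $\sqrt{t} \cdot \expect{\max_i Z_i}$ where $Z_1, \ldots, Z_n$ are i.i.d.\ standard normal.

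Finally, I invoke the well-known asymptotics for the expected maximum of $n$ i.i.d.\ standard Gaussians: the upper bound $\expect{\max_i Z_i} \leq \sqrt{2 \ln n}$ follows from the standard MGF/Jensen argument (apply $\log \expect{\exp(\lambda \max_i Z_i)} \leq \log(n \cdot e^{\lambda^2/2})$ and optimize $\lambda$), and the matching lower bound $\expect{\max_i Z_i} \geq \sqrt{2 \ln n}(1 - o(1))$ is classical (e.g.~\citealp[Exercise~2.11]{Wainwright19}, \citealp[Theorem~3]{OrabonaP15a}). Multiplying by $\sqrt{t}$ yields both inequalities in the proposition statement. There is no substantive obstacle in this argument; the only subtlety is the appeal to the martingale property of $A(t)$, which requires the predictability discussion from \Appendix{predictability}.
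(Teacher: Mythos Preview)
Your argument is a correct proof of \Proposition{ContinuousLB} (the statement that $\sqrt{2t\ln n}(1-o(1)) \leq \expect{\ContRegret(t)} \leq \sqrt{2t\ln n}$ when the gains are independent Brownian motions), and indeed it matches the paper's own proof of that proposition essentially line for line. However, the statement you were asked to prove is \Theorem{CBL}, which is a completely different result: it is a discrete-time potential-function inequality bounding $\Phi(T,R_T)$ in terms of partial derivatives of $\psi$ and $\phi$ and a Bregman-type error term $C(t,r_t)$. Nothing in your write-up addresses that statement at all --- there is no mention of $\psi$, $\phi$, the telescoping of $\Phi(t,R_t)-\Phi(t-1,R_{t-1})$, the vanishing of the first-order term due to $p_t \propto \nabla \Phi$, or the Taylor-remainder bound that produces the $C(t,r_t)$ term.

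In short: the content you wrote is fine, but it proves the wrong theorem. A proof of \Theorem{CBL} should proceed by summing $\Phi(t,R_t)-\Phi(t-1,R_{t-1})$ over $t$, splitting each increment into a time-shift in $\psi$, a time-shift in $\phi$ (weighted by $\partial_x\psi$ via the chain rule), a first-order spatial term $\iprod{\nabla_x\Phi}{r_t}$ that vanishes by the choice of $p_t$, and a second-order remainder that is bounded by $C(t,r_t)$ using concavity/convexity of $\phi$ in its spatial argument. This is the structure of \cite[Theorem~2.1]{Cesa-BianchiL06a}, adapted to a time-varying $\psi$ and $\phi$.
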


We apply this theorem with the functions
\[ \psi(t,x)=\sqrt{t} \ln \ln(C_1+x)
\qquad\text{and}\qquad
\phi(t,x) = 1-M_0(t,x^2/2t),
\]
where $C_1$ is the constant from \Lemma{M0logM0}.
The algorithm of \Appendix{DiscreteAlg} is indeed based on this potential $\Phi$ because its normalized gradient is unaffected by $\psi$ (as remarked in \cite[page 10]{Cesa-BianchiL06a}).
Some relevant derivatives are
\begin{align*}
\partial_t \psi &~=~ \frac{\ln\ln(C_1+x)}{2\sqrt{t}} \\
\partial_x \psi &~=~ \frac{\sqrt{t}}{(C_1+x)\ln(C_1+x)} \\
\partial_t \phi &~=~ - \sqrt{\pi/8} \erfi(x/\sqrt{2t})x/t^{1.5} \\
\partial_{x,x} \phi &~=~ \exp(x^2/2)/t
\end{align*}

\begin{proposition}
\label{prop:CBLC}
Let $C_2$ be the constant from \Lemma{M0logM0}. Then
$C(t,r_t) \leq C_2 n/\sqrt{t}$.
\end{proposition}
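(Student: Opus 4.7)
The plan is to unpack the supremum using the explicit formulas for $\partial_x \psi$ and $\partial_{x,x}\phi$, and then reduce the resulting expression to a scale-free per-coordinate inequality that is the content of \Lemma{M0logM0}.

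First, I would substitute the derivatives
\[
(\partial_x \psi)(t,y) = \frac{\sqrt{t}}{(C_1+y)\ln(C_1+y)}
\qquad\text{and}\qquad
(\partial_{x,x}\phi)(t,u_i) = \frac{\exp(u_i^2/2t)}{t}
\]
into the definition of $C(t,r_t)$. Since $r_{t,i}\in[-1,1]$, the factor $r_{t,i}^2$ can be dropped (upper bounded by $1$). Setting $y = \sum_i \phi(t,u_i)$, this reduces the desired bound to
\[
\sup_{u\in\bR^n} \frac{1}{\sqrt{t}}\cdot
\frac{\sum_{i=1}^n \exp(u_i^2/2t)}{(C_1+y)\ln(C_1+y)}
~\leq~ \frac{C_2 n}{\sqrt{t}}.
\]
The factor $\sqrt{t}$ from $\partial_x\psi$ cancels one of the two $1/\sqrt{t}$ factors from $\partial_{x,x}\phi$, leaving the overall $1/\sqrt{t}$ prefactor that matches the right-hand side.

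Next I would perform the scale change $v_i := u_i/\sqrt{t}$. Because $\phi(t,u_i) = 1 - M_0(u_i^2/2t) = \phi(1,v_i)$ and $\exp(u_i^2/2t) = \exp(v_i^2/2)$, the problem becomes scale-free: it suffices to show
\[
\sup_{v\in\bR^n}
\frac{\sum_{i=1}^n \exp(v_i^2/2)}{\bigl(C_1 + \sum_{i=1}^n \phi(1,v_i)\bigr)\ln\!\bigl(C_1 + \sum_{i=1}^n \phi(1,v_i)\bigr)}
~\leq~ C_2\,n.
\]
Let $i^\star$ be an index maximizing $|v_i|$. Since both $\exp(x^2/2)$ and $\phi(1,x) = 1-M_0(x^2/2) \geq 0$ are even and increasing in $|x|$, the numerator is at most $n\exp(v_{i^\star}^2/2)$ and the denominator is at least $(C_1+\phi(1,v_{i^\star}))\ln(C_1+\phi(1,v_{i^\star}))$ (using that $y\mapsto (C_1+y)\ln(C_1+y)$ is increasing in $y\geq 0$ once $C_1$ is chosen $\geq e$, say). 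Hence the bound reduces to the single-coordinate inequality
\[
\exp(x^2/2) ~\leq~ C_2\,(C_1+\phi(1,x))\ln(C_1+\phi(1,x))
\qquad\forall x\in\bR,
\]
which is precisely \Lemma{M0logM0}.

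The main obstacle is establishing this single-coordinate inequality, and this is where the constants $C_1, C_2$ are calibrated. For large $|x|$, \Lemma{expx2} gives $\phi(1,x) \geq \exp(x^2/2)/(x^2+1+2/x^2)$, so $\ln \phi(1,x) \sim x^2/2$, and the product $\phi(1,x)\ln\phi(1,x)$ grows like $\exp(x^2/2)/2$; this yields the bound up to a constant. For bounded $|x|$, both sides are bounded and a sufficiently large $C_1$ (ensuring $(C_1+\phi(1,x))\ln(C_1+\phi(1,x))$ is bounded away from zero) absorbs the ratio. Choosing $C_1, C_2$ large enough to cover both regimes completes the proof.
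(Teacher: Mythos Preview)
Your proposal is correct and follows essentially the same route as the paper: substitute the explicit derivatives, bound the sum in the numerator by $n$ times the largest coordinate's contribution, drop the other nonnegative $\phi$-terms from the denominator, and invoke \Lemma{M0logM0} on the single remaining coordinate. The paper keeps $r_{t,i}^2$ in play until the last line and selects the maximizing index with respect to $e^{u_i^2/2t}r_{t,i}^2$, whereas you discard $r_{t,i}^2\leq 1$ up front and then rescale $v_i=u_i/\sqrt{t}$ to make the problem visibly scale-free; these are cosmetic differences, and your rescaling in fact handles the $t$-dependence more cleanly than the paper's write-up.
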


Observe that $\phi(t,x) \geq 0 ~\forall t,x$,
and $(\partial_x \psi)(t,x) \geq 0 ~\forall x \geq 0$.
On the other hand, $\partial_t\phi \leq 0 ~\forall x$.
So the second outer summation in \Theorem{CBL} is non-positive.
Applying also Proposition~\ref{prop:CBLC}, we obtain the upper bound
\begin{equation}
\EquationName{InitialUB}
\Phi(T,R_T) ~\leq~
\sum_{t=1}^{T-1} \frac{1}{2 \sqrt{t}} \ln\ln\Big(\sum_{i=1}^n \phi(t,R_{t,i}) \Big)
+ \sum_{t=1}^T \frac{C_2 n}{\sqrt{t}}.
\end{equation}

\begin{lemma}
\LemmaName{DiscPhiBound}
There is a constant $C_3>0$ such that
\begin{align}
\EquationName{DiscPhi1}
\Phi(T,R_T) &~\leq~ C_3 n \sqrt{T} \\
\EquationName{DiscPhi2}
\norm{R_T}_\infty &~\leq~ \exp(C_3 n) \sqrt{T}.
\end{align}
\end{lemma}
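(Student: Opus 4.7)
The plan has two parts: first establish the potential bound \eqref{eq:DiscPhi1}, and then derive the regret bound \eqref{eq:DiscPhi2} by inverting the definition of $\Phi$.

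For \eqref{eq:DiscPhi1}, the key observation is that the $\ln\ln$ terms on the right-hand side of \Equation{InitialUB} can be related back to $\Phi$ itself via the defining identity $\Phi(t, u) = \sqrt{t}\ln\ln(C_1 + \sum_i \phi(t, u_i))$. Monotonicity of $\ln\ln$ gives $\ln\ln(\sum_i \phi(t, R_{t,i})) \leq \Phi(t, R_t)/\sqrt{t}$, and combined with $\sum_{t=1}^T 1/\sqrt{t} \leq 2\sqrt{T}$, the bound \Equation{InitialUB} becomes the discrete Gronwall-type inequality
\[
\Phi(T, R_T) ~\leq~ \sum_{t=1}^{T-1} \frac{\Phi(t, R_t)}{2t} + 2 C_2 n \sqrt{T}.
\]
A discrete Gronwall / strong-induction argument on $T$ then yields \eqref{eq:DiscPhi1}.

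For \eqref{eq:DiscPhi2}, since each $\phi(T, R_{T,i}) \geq 0$, we have $\phi(T, \|R_T\|_\infty) \leq \sum_i \phi(T, R_{T,i})$. The potential bound \eqref{eq:DiscPhi1} gives $\sqrt{T}\ln\ln(C_1 + \sum_i \phi(T, R_{T,i})) \leq C_3 n\sqrt{T}$, hence $\sum_i \phi(T, R_{T,i}) \leq \exp(\exp(C_3 n))$. The exponential lower bound $1-M_0(\alpha) \geq e^\alpha/(\alpha^2 + 1 + 2/\alpha^2)$ from \Lemma{expx2} lets us invert $\phi$: it implies $e^{\|R_T\|_\infty^2/(2T)} \leq \mathrm{poly}(\|R_T\|_\infty^2/T) \cdot \exp(\exp(C_3 n))$, and solving for $\|R_T\|_\infty$ while absorbing the polynomial factor into the constant gives $\|R_T\|_\infty \leq \exp(C_3' n)\sqrt{T}$, which is \eqref{eq:DiscPhi2} with a possibly enlarged constant.

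The main obstacle is closing the discrete Gronwall step with an absolute constant. A naive induction of the form $\Phi(t, R_t) \leq C_3 n\sqrt{t}$ does not close, since $\sum_{t<T} 1/(2\sqrt{t}) \approx \sqrt{T}$ produces an inductive step of the form $(C_3 + 2 C_2)n\sqrt{T} \leq C_3 n\sqrt{T}$, forcing $C_2 \leq 0$. The remedy is to run a coupled induction on both \eqref{eq:DiscPhi1} and \eqref{eq:DiscPhi2} simultaneously: from the inductive form of \eqref{eq:DiscPhi2}, namely $\|R_t\|_\infty \leq e^{Bn}\sqrt{t}$ for $t < T$, the asymptotics of $\phi$ give $\phi(t, R_{t,i}) \leq \exp(e^{2Bn}/2)$, so $\ln\ln(\sum_i\phi(t, R_{t,i})) \leq O(n)$ uniformly in $t$, an absolute bound that is much sharper than the $\Phi(t, R_t)/\sqrt{t}$ bound. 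The first sum in \Equation{InitialUB} is then $O(n\sqrt{T})$, and with $C_1, C_2, C_3, B$ tuned appropriately (the constant $C_1$ in the definition of $\psi$ should be chosen so that $\ln\ln(C_1 + 0)$ is controlled, ensuring the base case is clean) the coupled induction closes and yields both \eqref{eq:DiscPhi1} and \eqref{eq:DiscPhi2}.
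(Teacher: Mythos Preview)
Your overall strategy is exactly the paper's: derive \eqref{eq:DiscPhi2} from \eqref{eq:DiscPhi1} by inverting $\phi$ (you use \Lemma{expx2}, the paper uses the equivalent \Lemma{lambdabound}), and prove \eqref{eq:DiscPhi1} by induction on $T$, feeding the inductive \eqref{eq:DiscPhi2} for $t<T$ back into \eqref{eq:InitialUB} to control the $\ln\ln$ terms by a function of $n$ alone.

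There is, however, a genuine gap in your assertion that the coupled induction closes once ``$C_1,C_2,C_3,B$ are tuned appropriately''. Your claim that bounding $\ln\ln\big(\sum_i\phi(t,R_{t,i})\big)$ via \eqref{eq:DiscPhi2} is ``much sharper than the $\Phi(t,R_t)/\sqrt{t}$ bound'' is false: by definition $\Phi(t,R_t)/\sqrt{t}=\ln\ln\big(C_1+\sum_i\phi(t,R_{t,i})\big)$, so the two routes give the \emph{same} bound up to the harmless additive $C_1$. Chasing constants, from $\|R_t\|_\infty\le e^{Bn}\sqrt{t}$ one gets $\sum_i\phi(t,R_{t,i})\le n\exp(e^{2Bn}/2)$ and hence $\ln\ln(\cdot)\le 2Bn+O(1)$, so \eqref{eq:InitialUB} yields $\Phi(T,R_T)\le(2B+2C_2)n\sqrt{T}+O(\sqrt{T})$; pushing this back through the implication \eqref{eq:DiscPhi1}$\Rightarrow$\eqref{eq:DiscPhi2} returns $\|R_T\|_\infty$ of order $e^{(B+C_2)n}\sqrt{T}$, and closure would force $C_2\le 0$. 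The loop $\Phi\to\sum_i\phi\to\|R\|_\infty\to\phi\to\ln\ln$ is essentially an identity and cannot absorb the extra $2C_2 n\sqrt{T}$. The paper's own argument halts its computation at precisely the line
\[
\sum_{t=1}^{T-1}\frac{1}{2\sqrt{t}}\ln\ln\big(n\exp(\exp(2C_3 n)/2)\big)+2C_2 n\sqrt{T}
\]
without showing how the induction closes, so the same gap is present there as well.
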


\begin{proof}
Suppose that \eqref{eq:DiscPhi1} holds, which is equivalent to
\[ \sum_{i=1}^n \phi(T,R_{T,i})
    ~\leq~ \exp(\exp(C_3 n)). \]
Since $\phi$ is non-negative, for each $i$ we have
\begin{equation}
\EquationName{DiscBound1}
1 - M_0(R_{T,i}^2/2T) ~\leq~ \exp(\exp(C_3 n)).
\end{equation}
Define
\[
\ell ~=~ 1 + \sqrt{2\ln(\exp(\exp(C_3 n)))}
     ~=~ 1 + \sqrt{2} \exp(C_3 n/2)
     ~\leq~ \exp(C_3 n),
\]
if $C_3$ is sufficiently large.
Following the proof of \Lemma{lambdabound},
we have 
\begin{equation}
\EquationName{DiscBound2}
\exp(\exp(C_3 n)) ~\leq~ 1 - M_0(\ell^2/2).
\end{equation}
Combining inequalities \eqref{eq:DiscBound1} and \eqref{eq:DiscBound2}, we obtain
$R_{T,i}/\sqrt{T} \leq \ell$, which then implies \eqref{eq:DiscPhi2}.

It remains to prove \eqref{eq:DiscPhi1}, which we will do by induction.
From \eqref{eq:InitialUB} we have 
\begin{align*}
\Phi(T,R_T)
&~\leq~
\sum_{t=1}^{T-1} \frac{1}{2 \sqrt{t}} \ln\ln\Big(\sum_{i=1}^n \phi(t,R_{t,i}) \Big)
+ \sum_{t=1}^T \frac{C_2 n}{\sqrt{t}}.
\intertext{By induction and \eqref{eq:DiscPhi2} (for $t<T$) we have}
&~\leq~
\sum_{t=1}^{T-1} \frac{1}{2 \sqrt{t}} \ln\ln\big(n \cdot \phi(t,\exp(C_3 n) \sqrt{t}) \big)
+ 2 \cdot C_2 n \sqrt{T} \\
\intertext{Using $\phi(t,x) \leq \exp(x^2/2t)$,}
&~\leq~
\sum_{t=1}^{T-1} \frac{1}{2 \sqrt{t}} \ln\ln\big(n \exp(\exp(2 C_3 n)/2) \big)
+ 2 \cdot C_2 n \sqrt{T}.
\end{align*}
\end{proof}

\newcommand{\Maximizer}{I}
\begin{proofof}{Proposition \ref{prop:CBLC}}
\begin{align*}
C(t,r_t)
&~=~ \sup_{u \in \bR^n} \frac{\sqrt{t}}{\big(C_1+\sum_{i=1}^n \phi(t,u_i)\big) \cdot\ln\big(C_1+\sum_{i=1}^n \phi(t,u_i)\big)} \sum_{i=1}^n \frac{e^{u_i^2/2}}{t} \cdot r_{t,i}^2
\intertext{For each $u$, we can pick $\Maximizer(u) \in \argmax_{i} e^{u_i^2/2} r_{t,i}^2$, to obtain}
&~\leq~ \frac{1}{\sqrt{t}} \sup_{u \in \bR^n} \frac{1}{\big(C_1+\phi(t,u_{\Maximizer(u)})\big) \cdot\ln\big(C_1+\phi(t,u_{\Maximizer(u)})\big)} \cdot n \cdot e^{u_{\Maximizer(u)}^2/2} \cdot r_{t,\Maximizer(u)}^2
\intertext{Now applying \Lemma{M0logM0},}
&~\leq~ \frac{n}{\sqrt{t}} \sup_{u \in \bR^n} \frac{1}{\exp(u_{\Maximizer(u)}^2/2t)/C_2} \cdot e^{u_{\Maximizer(u)}^2/2} \cdot r_{t,\Maximizer(u)}^2 \\
&~=~ \frac{n}{\sqrt{t}} \sup_{u \in \bR^n} C_2 \cdot r_{t,\Maximizer(u)}^2 \\
&~\leq~ \frac{C_2 n}{\sqrt{t}},
\end{align*}
since $r_{t,i}^2 \leq 1$ for all $i$.
\end{proofof}
}

\bibliography{ref.bib}
\end{document}